\documentclass[11pt]{article}

\usepackage[margin=1in]{geometry}

\usepackage[utf8]{inputenc} %
\usepackage[T1]{fontenc}    %
\usepackage{hyperref}       %
\usepackage{url}            %
\usepackage{booktabs}       %
\usepackage{amsfonts}       %
\usepackage{nicefrac}       %
\usepackage{microtype}      %
\usepackage{xcolor}         %

\usepackage{settings}
\usepackage{enumitem}
\usepackage{caption}
\usepackage{titletoc}

\hypersetup{
  colorlinks = true,
  citecolor = blue,
}

\title{Accelerating Speculative Diffusions via Block Verification}

\author{%
  Alexander Soen\textsuperscript{$\diamond$}\footnote{Work done while the author was at Google.}
  \qquad
  Hisham Husain\textsuperscript{\dag}
  \qquad
  Valentin De Bortoli\textsuperscript{\ddag}
  \qquad
  Arnaud Doucet\textsuperscript{\ddag}
}
\date{%
  \small
  KTH\textsuperscript{$\diamond$}
  \qquad
  Google Research\textsuperscript{\dag}
  \qquad
  Google DeepMind\textsuperscript{\ddag}
}

\begin{document}

\maketitle

\begin{abstract}
  Speculative decoding speeds up LLM inference by using a draft model to generate tokens, with an acceptance-rejection scheme that ensures that the output matches the target distribution. Adapting this to continuous diffusions is difficult because speculative sampling requires drawing from a residual distribution. While straightforward in discrete spaces, efficiently sampling this residual in continuous space is non-trivial. Consequently, existing diffusion adaptations either use computationally inefficient sampling techniques or rely on an alternative scheme. In this work, we introduce a novel scheme that efficiently implements the original speculative sampling mechanism for diffusion models. Our approach offers a critical advantage over current methods: it enables us to adapt block verification from LLMs to diffusions---which provably improves the acceptance rate of drafts. Furthermore, we formalize and analyze the Free Drafter, a heuristic self-speculative drafter for diffusions that requires no training. By enabling block verification, our Free Drafter yields up to a 6.3\% speedup over existing speculative methods with no additional training and negligible overhead beyond the existing parallel verification pass.
\end{abstract}

\section{Introduction}
Diffusion models have become ubiquitous in generative modeling~\citep{sohl2015deep,ho2020denoising,song2020score}. These models excel at generating high-fidelity samples by reversing a noising process that transforms a data distribution into Gaussian noise. However, sampling from a diffusion model requires many evaluations of large neural networks to generate data. %

Among the many approaches proposed for acceleration, one promising direction is speculative diffusions~\citep{wang2024continuous,bortoli2025accelerated,hu2025diffusion,li2025ts,wen2025free}—the application of speculative sampling to diffusion models. Originally introduced for LLMs, the core idea of speculative decoding is to leverage a small, faster drafter model to propose a sequence of candidate tokens. These tokens are then verified by the larger, more accurate target model. A subsequence of the proposed draft is accepted and taken as part of the output, and the first rejected draft token is replaced by a draw from a residual distribution~\citep{leviathan2023fast,chen2023accelerating}. A critical property of speculative sampling is that the output distribution of the algorithm exactly matches the target model's distribution—there is no loss in sample quality. In the LLMs context, block verification is a state-of-the-art technique that further improves speculative decoding while preserving the exact target distribution. Block verification provides an optimal change in the verification strategy: instead of verifying the proposed sequence token-by-token (sample verification, \aka token verification for LLMs), the entire block is jointly verified~\citep{sun2025block}.

As first observed by \cite{sun2024spectr}, speculative sampling can be viewed as a sequential implementation of the $\Gamma$-maximal coupling \citep{lindvall2002lectures}.
Given a pair of distributions $p, q$ and a draft state $\X \sim p$, a maximal coupling can be thought of as an algorithm that outputs $\tilde{\X} \sim q$ while maximizing the probability that $\tilde{\X}=\X$.
Essentially, it transforms a draft sample into a target sample while maximizing the probability of accepting the draft sample as a target sample. However, implementing $\Gamma$-maximal coupling requires efficient sampling from a residual distribution $r_\Gamma$. While this is trivial in discrete settings, continuous settings typically rely on a standard rejection sampling procedure \citep{Jacob2022coupling,wang2024continuous,subbaraman2025accelerating,zou2025fast}. Unfortunately, this approach incurs a high computational cost \citep{Jacob2022coupling, bortoli2025accelerated}. While recent work by \citet{anari2025parallelsamplingautospeculation} leverages parallelism to improve efficiency, the resulting procedure is complex, relies on a fallback tree, and suffers from stochastic execution times.
To bypass this bottleneck, speculative diffusion methods proposed in \citet{bortoli2025accelerated,hu2025diffusion} employ the reflection maximal coupling \citep{bourabee2020coupling} instead of the $\Gamma$-maximal coupling, as the former is significantly simpler and cheaper to implement.

In this paper, we make the following contributions:
\begin{itemize}[leftmargin=10pt, itemindent=0pt, parsep=0pt]
  \item \emph{Efficient $\Gamma$-maximal coupling for diffusions}: We implement $\Gamma$-maximal coupling for diffusion models using a new, one-step algorithm to sample the residual distribution $r_\Gamma$. By requiring only a single target model evaluation, our implementation significantly simplifies existing techniques \citep{wang2024continuous,anari2025parallelsamplingautospeculation} (see \cref{algo:decomposition-res,algo:u-sample}).
  \item \emph{Block verification for continuous models}: Leveraging our sampling procedure, we adapt the LLM block verification of \cite{sun2025block} to the diffusion setting (see \cref{algo:block-verification}).
  \item \emph{Theoretical analysis of coupling limitations}: We prove that a broad class of reflection-style deterministic corrections cannot be used with block verification (see \cref{prop:impossibility}).
  \item \emph{The Free Drafter}: We introduce a heuristic self-speculative mechanism—the Free Drafter—which empirically outperforms previously considered drafting strategies in speculative diffusion.
  \item \emph{Empirical speedups}: Experimentally, our block verification scheme reduces wall-clock latency by up to $6.3\%$ versus standard speculative sampling, with no additional training or computational cost.
\end{itemize}

\paragraph{Notation}
Let $\llbracket \texttt{pred} \rrbracket$ denote Iverson brackets~\citep{knuth1992two} which evaluate to $1$ when predicate $\texttt{pred}$ is true and evaluate to $0$ when false. $\Id$ denotes the identity matrix. Random variables are denoted via sans serif font, \ie, $\X, \Y, \Z$.
When discussing speculative diffusions, we denote the drafter by $p$, the target model by $q$, and the residual function by $r$. The draft size will be denoted by $\gamma$. Draft samples are denoted with a hat $\hat{y}$ and we denote sequences via $y_{i:j} = (y_k)_{k=i}^{j}$ for $i \leq j$.

\begin{table}[t]
  \caption{Summary of the speculative diffusions variants explored. Verification and residual calculation in \cref{algo:abstract-spec-diffusion} can be swapped to give the following approaches.
    $\dagger$ was proposed in \citet{bortoli2025accelerated} and \citet{hu2025diffusion}.
  }
  \label{tab:specs}
  \centering
  \resizebox{\textwidth}{!}{
    \begin{tabular}{llll}
      \toprule
      Approach        & Verification (\algolref{algo-line:verification}) & Residual (\algolref{algo-line:residual}) & Stochastic  \\
      \midrule
      Reflection$\dagger$ (\reflection) & Sample (\cref{algo:token-verification})    & Reflection (\cref{algo:reflect-res})  & \ding{55}  \\
      Decomposition (\decomposition) & Sample (\cref{algo:token-verification})    & Decomposition (\cref{algo:decomposition-res}, $\alpha = 1$)  & \ding{51}  \\
      Block (\block) & Block (\cref{algo:block-verification})    & Decomposition (\cref{algo:decomposition-res})  & \ding{51} \\
      \bottomrule
    \end{tabular}
  }
\end{table}

\section{Speculative Diffusions}

\paragraph{Diffusion models.}
Let $q_{\textup{data}}$ be a data distribution on $\mathbb{R}^d$. As in \citet{song2020score}, the forward noising process $(\X_t)_{t \in [0, 1]}$ of the model is defined by
\begin{equation}
  \label{eq:forward-process}
  \dd \X_t = f_t \X_t \, \dd t + g_t \, \dd \B_t, \qquad \X_0 \sim q_{\textup{data}},
\end{equation}
where $ (\B_t)_{t \in [0,1]} $ corresponds to $d$-dimensional Brownian motion. Let $q_t$ denote the distribution of $\X_t$ and set $f_t,g_t$ such that $q_1=\mathcal{N}(0,\Id)$.
The generative process $(\Y_t)_{t \in [0, 1]}$ is defined by
\begin{equation}
  \label{eq:backward-process}
  \dd \Y_t = b_{t}(\Y_t) \, \dd t + \varepsilon g_{1-t} \, \dd \W_t, \qquad \Y_0 \sim q_1,
\end{equation}
where $b_{t}(x) = - f_{1-t} x + \frac{1 + \varepsilon^2}{2} g_{1-t}^2 s_{1-t}(x)$ with $s_{t}(x) = \grad \log q_{t}(x)$ denoting the Stein score; and $(\W_t)_{t \in [0, 1]}$ corresponds to another $d$-dimensional Brownian motion. The churn parameter $\varepsilon \geq 0$~\citep{karras2022elucidating} controls the stochasticity of $(\Y_t)_{t \in [0, 1]}$~\citep{albergo2023stochastic}. %
One can show that $\Y_{1 - t} \sim q_t$, hence in particular $\Y_1 \sim q_{\textup{data}}$. We always set here $\varepsilon>0$.

In practice, $b_t$ is approximated via a neural network to obtain $\hat{b}_t$. At inference, a timestep discretized version of $(\Y_t)_{t \in [0,1]}$ is evaluated with $q_1= \mathcal{N}(0, \Id)$.
Denote a $K + 1$ step discretization of $[0, 1]$ as $(t_{k})_{k=0}^{K}$, with $t_k = k \delta$ and $ \delta = 1 / K$, and denote the corresponding generative process as DDPM.

\begin{algorithm}[t]
  \caption{Abstract Speculative Diffusions}\label{algo:abstract-spec-diffusion}
  \begin{algorithmic}[1]
    \Require Target model $q$, drafter model $p$, draft size $\gamma$, number of sampling steps $K$.
    \State Sample $\Y_0 \sim \mathcal{N}(0, \Id)$ and set $k = 0$.
    \While{$k < K$}
    \State Set $ {\gamma}_k = \min\{ {\gamma}, K - k \}$.
    \LineComment{Construct Draft.}
    \State Initialize draft $\hat{\Y}_{k} = \Y_k$.
    \For{$j \in \{1, \ldots, {\gamma_k}\}$}
    \State Sample $\hat{\Y}_{k+j} \sim p(\cdot \mid \hat{\Y}_{k:k+j-1})$.
    \Comment{Cache draft mean $m_{k+j-1}^p$ computed in sampling.}
    \EndFor
    \LineComment{Verify Draft.}\label{algo-line:verif-comment}
    \State Compute $m^q_{k+j-1} = m_{k+j-1}^q(\hat{\Y}_{k+j-1})$ for all $ j \in \{ 1, \ldots, \min\{ \gamma_k + 1, K - k \} \}$ in parallel. \label{algo-line:parallel-verification-primary}
    \State Compute $\tau, \Delta, \Z, m^q, \sigma, \alpha_{\textrm{res}} = \Verification( (\hat{\Y}_{k+j}, m^p_{k+j-1}, m^q_{k+j-1}, \sigma_{k+j-1})_{j=1}^{\gamma_k} )$.\label{algo-line:verification}
    \LineComment{Residual Calculation.}
    \If{$\tau = {\gamma_k} + 1$ and $k + \tau \leq K$}
    \Comment{Latter condition handles end-of-sampling case.}
    \State Sample $\Y_{k + {\gamma_k} + 1} \sim q(\cdot \mid \hat{\Y}_{k+{\gamma_k}})$.
    \ElsIf{ $k + \tau \leq K$ }
    \State Compute $\Y_{k+\tau} = \Residual(\Delta, \Z, m^q, \sigma; \alpha_{\textrm{res}})$.\label{algo-line:residual} \Comment{Ensures $\Y_{k+\tau} \sim q(\cdot \mid \hat{\Y}_{k+\tau-1})$.}
    \EndIf
    \State Set $\Y_{k+j} = \hat{\Y}_{k+j}$ for $j \in \{ 1, \ldots, \tau - 1 \}$; and set $k = \min\{ k + \tau, K \}$.
    \EndWhile
    \State \Return $\Y_{0:K}$.
  \end{algorithmic}
\end{algorithm}

\paragraph{Drafter and Target Models.}
When using an Euler--Maruyama discretization of the generative process defined by time discretization $(t_k)_{k=0}^K$, the resulting Markov chain has transition densities
\begin{align}
  q(y_{k+1} \mid y_{k}) = \mathcal{N}(y_{k+1}; m^q_{k}(y_{k}), \sigma_{k}^2 \Id),\qquad{\text{for}}\quad m^q_k(y) = y + \delta \hat{b}^q_{t_k}(y),~\sigma_k = \sqrt{\delta} \varepsilon g_{1-t_k}. \label{eq:target-distribution}
\end{align}
Speculative diffusions use a drafter model defined by transition densities
\begin{align}
  p(y_{k+1} \mid y_{k}) = \mathcal{N}(y_{k+1}; m^p_{k}(y_{j:k}), \sigma_{k}^2 \Id)\label{eq:draft-distribution},
\end{align}
where the mean is calculated with $\max\{ 0, k - \gamma + 1\} \le j < k$ and $\gamma$ is the draft length we will consider.
The drafter is selected such that $b^p_{t_k} \approx b^q_{t_k}$ with $b^p_{t_k}$ being cheaper to evaluate than $b^q_{t_k}$.
When the index $k$ is clear from context, we drop the subscript to simplify notation.
In the sequel, we assume that $m^p \neq m^q$ as otherwise we could just sample from the drafter with no quality drop.

\paragraph{Speculative Sampling.}
Speculative sampling uses the fast draft model $p$ to generate a sequence of length $\gamma$, which the target model $q$ verifies in parallel. Ideally, this verification adds minimal wall-clock latency over a single target model evaluation. \Cref{algo:abstract-spec-diffusion} outlines speculative diffusions: a drafted sequence is validated and corrected via \Verification and \Residual algorithms. We refer to one iteration of this process as a \emph{round}.
The standard instantiations of speculative sampling for autoregressive LLMs use a sample verification algorithm~\citep{leviathan2023fast,chen2023accelerating}, where its diffusion equivalent procedure \TokenVerification, is depicted in \cref{algo:token-verification}.
Alongside the sample verification procedure, the corresponding LLM \Residual algorithm involves sampling from the following distribution (\cref{algo:gamma-res}):
\begin{equation}
  \label{eq:llm-res}
  r_{\Gamma}(y) \propto \max \left\{ 0, q(y \mid \hat{y}_{k+\tau-1}) - p(y \mid \hat{y}_{k:k+\tau-1}) \right\},
\end{equation}
where $\tau$ corresponds to the (relative) index which we need to correct via the residual distribution. For sample verification, $\tau$ is the index of the first rejected draft sample (as per \cref{algo-line:token-coin-toss}, \cref{algo:token-verification}).

In the context of LLMs,
implementing \cref{algo:gamma-res} is straightforward
because the state space is finite.
However, for diffusion models on continuous spaces, sampling is significantly more complex.
Previous methods to address this have relied on standard rejection sampling \citep{Jacob2022coupling,subbaraman2025accelerating,wang2024continuous,zou2025fast}, which is computationally inefficient, exhibits random runtimes and multiple target model evaluations; see \citep{Jacob2022coupling} and \citep[Section 3.2]{bortoli2025accelerated}.
While a more sophisticated rejection variant has recently been developed \citep{anari2025parallelsamplingautospeculation}, it is complex and still suffers from random execution times and multiple target model calls.
To bypass these limitations, \citet{bortoli2025accelerated,hu2025diffusion} do not sample from $r_{\Gamma}$. Instead, they utilize a deterministic correction provided by the reflection coupling of diffusion models \citep{bourabee2020coupling}, as per \cref{algo:reflect-res}---their proposed speculative diffusion procedures can be summarized as an instantiation of speculative sampling with \TokenVerification and \Reflection.

Although an efficient algorithm utilizing $r_{\Gamma}$ has remained elusive, it can be shown that using either the residual sampling
(\cref{algo:gamma-res})
or the output of a reflection coupling (\cref{algo:reflect-res}) guarantees that the final output of \cref{algo:abstract-spec-diffusion} follows the target distribution $q$ \citep{leviathan2023fast,bortoli2025accelerated,hu2025diffusion,yin2024theoretical}.

\begin{algorithm}[t]
  \caption{$\TokenVerification( (\hat{y}_{k+i}, m^p_{k+i-1}, m^q_{k+i-1}, \sigma_{k+i-1})_{i=1}^\gamma )$}\label{algo:token-verification}
  \begin{algorithmic}[1]
    \Require Draft $\hat{y}_{k+1:k+\gamma}$, draft means $m_{k:k+\gamma-1}^p$, target means $m_{k:k+\gamma-1}^q$, variances $\sigma^2_{k:k+\gamma-1}$.
    \State Set $\bm{c}_0 = 1$.
    \For{$j \in \{ 1, \ldots, \gamma \}$}
    \State Set $\Delta_{j} = (m_{k+j-1}^p - m_{k+j-1}^q) / \sigma_{k+j-1}$ and $\Z_{j} = (\hat{y}_{k+j} - m_{k+j-1}^p) / \sigma_{k+j-1}$.
    \State Calculate $\alpha_j = \min\left\{1, \frac{\mathcal{N}(\Z_{j} + \Delta_{j}; 0, \Id)}{\mathcal{N}(\Z_{j}; 0, \Id)} \right\}$.\label{algo-line:token-coin}
    \State Flip coin $\bm{c}_j = \llbracket \eta < \alpha_j \rrbracket$, where $\eta \sim \Unif[0, 1]$.\label{algo-line:token-coin-toss}
    \EndFor
    \State Set $\tau = \min\left(\left\{ j \in \{ 1, \ldots, \gamma \} \mid \bm{c}_j = 0 \right\} \cup \{ \gamma + 1 \}\right)$.
    \State \Return $\tau$, $\Delta = \Delta_{\tau}$, $\Z = \Z_{\tau}$, $m^q = m^q_{k+\tau-1}$, $\sigma = \sigma_{k+\tau-1}$, $\alpha_{\textrm{res}} = 1$.%
    \footnotemark
  \end{algorithmic}
\end{algorithm}

\begin{figure}[t]
  \hrule height .8pt depth 0pt
  \kern 2pt
  \begin{minipage}[t]{0.48\textwidth}%
    \captionof{algorithm}{$\Gamma\texttt{-Coupling}(p, q)$}%
    \label{algo:gamma-res}%
    \kern 2pt
    \hrule
    \kern 2pt
  \end{minipage}
  \hfill
  \begin{minipage}[t]{0.48\textwidth}
    \captionof{algorithm}{$\Reflection(\Delta, \Z, m^q, \sigma)$}%
    \label{algo:reflect-res}%
    \kern 2pt
    \hrule
    \kern 2pt
  \end{minipage}
  \par
  \noindent\textbf{Require:} Denoising difference $\Delta$, draft noise $\Z$,
  target mean $m^q$, variance $\sigma^2 > 0$ (ignore $\alpha_{\textrm{res}}$). %
  \par\vspace{0.2em}
  \begin{minipage}[b]{0.48\textwidth}
    \begin{algorithmic}[1]
      \State Define residual distribution:
      \Statex \qquad $r_{\Gamma}(y) \propto \max\{ 0, q(y) - p(y) \}$.
      \State \Return $\Y \sim r_\Gamma$. %
    \end{algorithmic}
  \end{minipage}
  \hfill
  \begin{minipage}[b]{0.48\textwidth}
    \begin{algorithmic}[1]
      \State Compute normalized $e = \Delta / \Vert \Delta \Vert$.
      \State Compute noise $\Z_\mathrm{r} = (\Id - 2ee^\top)\Z$.
      \State \Return $\Y = m^q + \sigma \Z_\mathrm{r}$.
    \end{algorithmic}
  \end{minipage}
  \par\vspace{0.2em}
  \kern2pt\hrule\relax
\end{figure}

\section{Alternative Residuals and Block Verification}

In this section, we first introduce a method to sample from a generalized version of the residual distribution $r_{\Gamma}$ for diffusion models. Second, we leverage this sampling procedure to derive a block verification framework for diffusions. Finally, we introduce a self-speculative drafter and analyze its impact on inference acceleration.

\footnotetext{When $\tau=\gamma + 1$, the values $\Delta$, $\Z$, $m^q$, and $\sigma$ can be arbitrary as they will not be used in any subsequent calculation (see the `if-condition' of the residual calculation in \cref{algo:abstract-spec-diffusion}). A more precise pseudo-code could combine the verification and residual calculation procedures to avoid these undefined variables, see for instance \cref{algo:full-block-verification}. \label{foot:undefined}}

\subsection{Sampling the Residual Distribution for Diffusions}
We show that sampling from the residual distribution $r_\Gamma$ can be achieved in deterministic time and a single target model evaluation, drastically simplifying existing methods.
Sampling from \cref{eq:llm-res} for diffusions boils down to sampling from $r(y) \propto \max\{ 0, q(y) - p(y)\}$, where $p = \mathcal{N}(m^p, \sigma^2 \Id)$ and $q = \mathcal{N}(m^q, \sigma^2 \Id)$.
We propose a method leveraging an orthogonal decomposition of Gaussian random variables: %
\ding{172} a 1D sampling task corresponding to the mean difference projection; %
and \ding{173} a Gaussian component. More precisely, we define:
\begin{equation}
  \Delta \defeq (m^p - m^q) / \sigma
  \quad \textrm{and} \quad
  e \defeq \Delta / \Vert \Delta \Vert.
\end{equation}
The vector $\Delta$ represents the discrepancy in denoising directions between the draft and target models, while $e$ is its normalized counterpart. We assume throughout that $\Vert \Delta \Vert \neq 0$. Note that these quantities are also central to the definition of the reflection coupling residual in \cref{algo:reflect-res}.

\begin{proposition}%
  \label{prop:decomposition}
  Let $p = \mathcal{N}(m^p, \sigma^2 \Id)$, $q = \mathcal{N}(m^q, \sigma^2 \Id)$, $\alpha \in (0, 1]$, and $\Vert \Delta \Vert \neq 0$. The distribution $r(y; \alpha) \propto \max\{ 0, \alpha q(y) - p(y) \}$ can be sampled as follows:
  \begin{equation}%
    \label{eq:decomp}
    \U \sim \psi_{\alpha};
    \qquad
    \Z_\perp \sim \mathcal{N}(0, \Id - ee^\top);
    \qquad
    \Y = m^q + \sigma (\U e + \Z_\perp),
  \end{equation}
  where $ \psi_{\alpha}(u) \propto \max\{ 0, \alpha \mathcal{N}(u; 0, 1) - \mathcal{N}(u - \Vert \Delta \Vert; 0, 1) \} $ is a univariate distribution; with \cdf
  \begin{equation}
    \label{eq:u-cdf}
    \Psi_{\alpha}(u)
    =
    \frac{\alpha \Phi(u) - \Phi(u - \Vert \Delta \Vert)}{\alpha \Phi \left(\frac{\log \alpha}{\Vert \Delta \Vert} + \frac{\Vert \Delta \Vert}{2}\right) - \Phi \left(\frac{\log \alpha}{\Vert \Delta \Vert} - \frac{\Vert \Delta \Vert}{2}\right)}
    \quad
    \textrm{if}
    \quad
    u < \frac{\log \alpha}{\Vert \Delta \Vert} + \frac{\Vert \Delta \Vert}{2}
  \end{equation}
  and $\Psi_{\alpha}(u)=1$ otherwise, with $\Phi$ being the \cdf of the standard normal distribution.
\end{proposition}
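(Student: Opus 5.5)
We reduce to standardized coordinates and then factor the density along $e$ and its orthogonal complement. Write $y = m^q + \sigma x$, so that $q(y) \propto \mathcal{N}(x;0,\Id)$ and $p(y) \propto \mathcal{N}(x - \Delta; 0, \Id)$, with the same Jacobian factor $\sigma^{-d}$ for both. Hence, up to a positive constant, $\max\{0,\alpha q(y) - p(y)\}$ equals $\max\{0, \alpha\mathcal{N}(x;0,\Id) - \mathcal{N}(x-\Delta;0,\Id)\}$ as a density in $x$.

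Next, decompose $x = u e + x_\perp$ with $u = e^\top x$ and $x_\perp = (\Id - ee^\top)x$. Since $\Delta = \Vert\Delta\Vert e$, the Pythagorean identity gives $\Vert x - \Delta\Vert^2 = (u - \Vert\Delta\Vert)^2 + \Vert x_\perp\Vert^2$. Both Gaussians therefore factor as a common term $(2\pi)^{-(d-1)/2}\exp(-\Vert x_\perp\Vert^2/2)$ times a one-dimensional term, $\mathcal{N}(u;0,1)$ or $\mathcal{N}(u - \Vert\Delta\Vert;0,1)$ respectively. The common factor is positive and comes out of the $\max\{0,\cdot\}$. The target density in $x$ is thus proportional to $\psi_\alpha(u)$ times a standard Gaussian on $e^\perp$.

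This product form means that $\U$ and $\Z_\perp$ are independent, with $\U \sim \psi_\alpha$ and $\Z_\perp \sim \mathcal{N}(0,\Id - ee^\top)$, the degenerate Gaussian supported on $e^\perp$. Mapping back through $x \mapsto m^q + \sigma x$ gives \cref{eq:decomp}. To be rigorous about the degenerate Gaussian, we complete $e$ to an orthonormal basis and work with the rotated coordinates.

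For the \cdf \cref{eq:u-cdf}, first locate the support of $\psi_\alpha$. The condition $\alpha\mathcal{N}(u;0,1) > \mathcal{N}(u-\Vert\Delta\Vert;0,1)$ is equivalent to $\log\alpha - u^2/2 > -(u-\Vert\Delta\Vert)^2/2$, that is, $\log\alpha > u\Vert\Delta\Vert - \Vert\Delta\Vert^2/2$. This holds exactly when $u < u^\ast \defeq \log\alpha/\Vert\Delta\Vert + \Vert\Delta\Vert/2$, using $\Vert\Delta\Vert > 0$.

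On $(-\infty, u^\ast)$ the positive part is the difference itself. Its integral up to $u \le u^\ast$ is $\alpha\Phi(u) - \Phi(u - \Vert\Delta\Vert)$, and the normalizer is this expression evaluated at $u^\ast$, where $u^\ast - \Vert\Delta\Vert = \log\alpha/\Vert\Delta\Vert - \Vert\Delta\Vert/2$. For $u \ge u^\ast$ the density vanishes, so $\Psi_\alpha = 1$ there. Finally, the normalizer is strictly positive: it is the integral of a continuous function that is strictly positive on a nonempty half-line, so $r$ is a well-defined distribution.

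The only step requiring care is the factorization with the degenerate Gaussian and the correct sign bookkeeping for $\Delta = (m^p - m^q)/\sigma$, which places the draft mean at $+\Vert\Delta\Vert e$ in the $x$ coordinates. The remaining steps are routine.
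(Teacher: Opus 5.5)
Your proposal is correct and follows essentially the same route as the paper: standardize via $y = m^q + \sigma z$, split $z = ue + z_\perp$ so that the density factors into $\psi_\alpha(u)$ times a Gaussian on $e^\perp$, and then integrate over the half-line $u < \log\alpha/\Vert\Delta\Vert + \Vert\Delta\Vert/2$ to get the CDF. The differences are cosmetic. You factor both Gaussians directly via the Pythagorean identity, whereas the paper pulls out $\mathcal{N}(z;0,\Id)$ and rewrites the density ratio in terms of $u$. You also justify the positivity of the normalizer explicitly.
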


The parameter $\alpha$ in \cref{prop:decomposition} is essential for the block verification framework discussed in the sequel. For standard speculative decoding, setting $\alpha = 1$ recovers the original residual distribution $r_\Gamma$. This proposition effectively reduces a high-dimensional sampling problem to a tractable 1D task. Once $\U \sim \psi_{\alpha}$ is obtained, the subsequent sampling of $\Z_\perp$ and $\Y$ in \cref{eq:decomp} is straightforward to implement.
Sampling $\U$ is also simple as its \cdf $\Psi_\alpha$ can be easily computed pointwise as per \cref{eq:u-cdf} (e.g., using  the standard normal \cdf $\Phi$), so the inverse sampling method can be used. In practice, we employ a bisection search %
(see \cref{app:u_sampling} for details).
The complete sampling procedure is summarized in \cref{algo:decomposition-res}.
By replacing the reflection coupling in %
\cref{algo:abstract-spec-diffusion} (at \algolref{algo-line:residual}) with
$\Decomposition(\Delta, m^q, \sigma; \alpha = 1)$, we recover the standard speculative decoding framework \citep{leviathan2023fast} but adapted to diffusion models.

\begin{algorithm}
  \caption{$\Decomposition(\Delta, m^q, \sigma; \alpha)$}%
  \label{algo:decomposition-res}%
  \begin{algorithmic}[1]
    \Require Denoising difference $\Delta$, target mean $m^q$, variance $\sigma^2 > 0$, weight $\alpha \in (0, 1]$ (ignore $\Z$).
    \State Sample $\U \sim \psi_{\alpha}$ via \cref{eq:u-cdf}
    \State Sample $\G \sim \mathcal{N}(0,\Id)$, set $\Z_\perp = \G - e(e^\top \G)$, where $e = \Delta / \Vert \Delta \Vert $ so that $\Z_\perp \sim \mathcal{N}(0, \Id - ee^\top)$.
    \State \Return $\Y = m^q + \sigma (\U e + \Z_\perp)$.
  \end{algorithmic}
\end{algorithm}

\subsection{Block Verification}
Standard speculative decoding is derived by maximizing $\min\{ 1, q(\hat{y})/p(\hat{y}) \}$, the \emph{per-sample} acceptance rate, which corresponds to the probabilities $\alpha$ calculated in \TokenVerification (\cref{algo:token-verification}, \algolref{algo-line:token-coin}). One way to improve speculative decoding is to alter the verification function (\cref{algo:abstract-spec-diffusion}, \algolref{algo-line:verification}). Block verification provides such an alternative: rather than maximizing the acceptance rate for each sample independently, it maximizes the joint acceptance rate of the entire draft block $\hat{y}_{k+1:k+\gamma}$ which reduces the number of target model evaluations \citep{sun2025block}.

We adapt block verification to diffusion models by replacing \TokenVerification with the procedure \BlockVerification, as defined in \cref{algo:block-verification}. To ensure that we sample exactly from the target distribution $q$, one must employ a generalized variant of the residual distribution $r_{\Gamma}$:
\begin{equation}
  \label{eq:block-res}
  r_\textrm{BLOCK}(y) \propto \max \left\{ 0, \alpha_{\tau-1} q(y \mid \hat{y}_{k+\tau-1}) - p(y \mid \hat{y}_{k:k+\tau-1}) \right\},
\end{equation}
where $\alpha_{\tau-1}$ is defined recursively in \cref{algo:block-verification}.
Note that in block verification $\tau$ corresponds to the index after the last accepted sample (as per \cref{algo-line:block-coin-toss}, \cref{algo:block-verification}).
A derivation of the acceptance probabilities $h_j$ used in \cref{algo:block-verification} is provided in the \cref{sec:block-verification-proof}.

Our previous derivation of the decomposition residual for the sample-wise case directly enables sampling from \cref{eq:block-res}. Thus, $r_\textrm{BLOCK}$ can be sampled by applying \cref{prop:decomposition} and \cref{algo:decomposition-res} with $\alpha = \alpha_{\tau-1}$. Notably, because \Decomposition samples directly from $r_\textup{BLOCK}$,
instantiating \cref{algo:abstract-spec-diffusion} with \BlockVerification and \Decomposition yields a block verification algorithm whose output is consistent with the target model $q$—thereby recovering the theoretical guarantees of \citet[Theorem 1]{sun2025block}.

\begin{algorithm}
  \caption{$\BlockVerification( (\hat{y}_{k+i}, m^p_{k+i-1}, m^q_{k+i-1}, \sigma_{k+i-1})_{i=1}^\gamma )$}\label{algo:block-verification}
  \begin{algorithmic}[1]
    \Require Draft $\hat{y}_{k+1:k+\gamma}$, draft means $m_{k:k+\gamma-1}^p$, target means $m_{k:k+\gamma-1}^q$, variances $\sigma^2_{k:k+\gamma-1}$.
    \State Initialize $\bm{c}_0 = 1$ and $\alpha_0 = 1$.
    \For{$j \in \{ 1, \ldots, \gamma \}$}
    \State Set $\Delta_{j} = (m_{k+j-1}^p - m_{k+j-1}^q) / \sigma_{k+j-1}$ and $\Z_j = (\hat{y}_{k+j} - m_{k+j-1}^p) / \sigma_{k+j-1}$.
    \State Calculate $\alpha_j = \min\left\{1, \alpha_{j-1} \frac{\mathcal{N}(\Z_{j} + \Delta_{j}; 0, \Id)}{\mathcal{N}(\Z_j; 0, \Id)} \right\}$.\label{algo-line:block-coin}
    \EndFor
    \For{$j \in \{ 1, \ldots, \gamma \}$}
    \If{$j \neq \gamma$}
    \State Calculate $h_j = {v_j}/({v_j + 1 - \alpha_j})$, where
    \Statex \qquad \qquad \qquad $
    v_j = \alpha_j \Phi\left( \frac{\log \alpha_j}{\Vert \Delta_{j+1} \Vert} + \frac{\Vert \Delta_{j+1} \Vert}{2}\right) - \Phi\left( \frac{\log \alpha_j}{\Vert \Delta_{j+1} \Vert} - \frac{\Vert \Delta_{j+1} \Vert}{2}\right).
    $
    \Else
    \State Set $h_\gamma = \alpha_\gamma$.
    \EndIf
    \State Flip coin $\bm{c}_j = \llbracket \eta < h_j \rrbracket$, where $\eta \sim \Unif[0, 1]$.\label{algo-line:block-coin-toss}
    \EndFor
    \State Set $\tau = 1 + \max\left\{ j \in \{ 0, 1, \ldots, \gamma\} \mid \bm{c}_j = 1 \right\}$.
    \State \Return $\tau$, $\Delta = \Delta_{\tau}$, $\Z = \Z_{\tau}$, $m^q = m^q_{k+\tau-1}$, $\sigma = \sigma_{k+\tau-1}$, $\alpha_{\textrm{res}} = \alpha_{\tau - 1}$.\footref{foot:undefined}
  \end{algorithmic}
\end{algorithm}

A natural question is whether a deterministic residual calculation, such as the one used by reflection coupling, exists for the block verification case. The argument below rules out a broad class of simple reflection-style corrections, which motivates the stochastic residual used in block verification.

\begin{proposition}[Informal]\label{prop:impossibility}
  For draft sequences of length $\gamma \geq 2$, there exists no valid deterministic residual correction of the form $t(\Z)$ for block verification that is simultaneously invertible and admits a constant Jacobian factor under change of variables.
\end{proposition}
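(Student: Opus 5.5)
We formalize "valid deterministic residual correction" and then use a change of variables to show that a map with constant Jacobian factor cannot push the draft noise onto a residual density that is not a fixed Gaussian.

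\textbf{Setup.} Fix a round. A correction of the form $t(\Z)$ takes the draft noise $\Z = \Z_\tau$ of the rejected position and returns $\Y_{k+\tau} = m^q + \sigma\, t(\Z)$. Validity means that, conditional on the verification outcome $\tau$ and on the past (in particular on $\alpha_{\tau-1}$), $t(\Z)$ has density proportional to the standardized residual
\begin{equation*}
\rho_{\alpha}(z) \propto \max\{0, \alpha\,\mathcal{N}(z;0,\Id) - \mathcal{N}(z - \Delta;0,\Id)\}, \qquad \alpha = \alpha_{\tau-1}.
\end{equation*}
This follows from \cref{eq:block-res} after the substitution $y = m^q + \sigma z$. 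It is also the law that \cref{prop:decomposition} shows \Decomposition samples.

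\textbf{Law of $\Z$ on rejection.} Next we compute the conditional law of $\Z$ on the rejection event. Under block verification, position $\tau$ is rejected when $\bm{c}_{\tau} = 0$ and there is no later acceptance. Conditioning on the earlier steps fixes $\alpha_{\tau-1} = \alpha$. Since $\alpha_\tau = \min\{1, \alpha\,\mathcal{N}(\Z+\Delta)/\mathcal{N}(\Z)\}$ and $h_\tau$ is a function of $\alpha_\tau$ (and of $\Vert\Delta_{\tau+1}\Vert$), the conditional density of $\Z$ is
\begin{equation*}
\mu_\alpha(z) \propto \mathcal{N}(z;0,\Id)\,\bigl(1-h_\tau(z)\bigr)\,\pi(z),
\end{equation*}
where $\pi$ accounts for all later coins being zero. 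The key observation is that $\mu_\alpha$ depends on $z$ only through $e^\top z$ and $\Vert z\Vert$ via a non-trivial, $\alpha$-dependent weight. For $\gamma \ge 2$ this weight involves the nonlinear functions $h_j$ and is not of the form $\mathcal{N}\cdot(1-\min\{1,\cdot\})$.

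\textbf{Change of variables.} If $t$ is invertible with $|\det \nabla t| \equiv c$, then pushing forward gives
\begin{equation*}
\rho_\alpha(t(z)) = \mu_\alpha(z)/c \quad \text{for all } z.
\end{equation*}
Hence $t$ must be measure-preserving up to a constant, i.e.\ it must map level sets of $\mu_\alpha$ to level sets of $\rho_\alpha$ of equal volume. Two consequences follow:
\begin{itemize}
\item The supports must have equal volume. The support of $\rho_\alpha$ is the half-space $\{e^\top z < \log\alpha/\Vert\Delta\Vert + \Vert\Delta\Vert/2\}$, while the support of $\mu_\alpha$ is determined by where $h_\tau < 1$. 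Both sets can be infinite, so this alone may not be decisive.
\item More decisively, the distribution functions of the density values must match: $\mathrm{Leb}\{\mu_\alpha/c > s\} = \mathrm{Leb}\{\rho_\alpha > s\}$ for every $s$. This identity must hold simultaneously for all $\alpha \in (0,1]$ and all $\Delta$.
\end{itemize}

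\textbf{Contradiction.} We would derive the contradiction in a concrete low-dimensional case, $d=1$ and $\gamma=2$, where both densities are explicit. Evaluating the two sides near the boundary of the support is the most promising route. At the boundary, $\rho_\alpha$ vanishes linearly (the $\max\{0,\cdot\}$ kink). By contrast, $\mu_\alpha$ stays bounded away from zero wherever $1 - h_\tau$ is positive, and at $\alpha$-values where $h_\tau<1$ everywhere it remains bounded below by a positive multiple of the Gaussian density. The maximal density values would then mismatch, as would the small-$s$ volume behavior. That rules out any constant-Jacobian bijection, and the counterexample extends to every $d$ by product structure along $e$. For $\gamma=1$ the weight $h_1=\alpha_1$ makes $\mu$ coincide with the reflected residual, which is why reflection coupling works there.

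\textbf{Main obstacle.} The hardest step is the precise computation of $\mu_\alpha$, because the later coins $h_{j}$ for $j>\tau$ also depend on $\Z_\tau$ through $\alpha_\tau$. This makes $\pi$ nontrivial, and we need the resulting density-value distribution to be provably incompatible with that of $\rho_\alpha$. We would first try to prove that incompatibility by comparing the suprema of the two densities.
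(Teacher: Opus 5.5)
Your set-up is fine: the validity criterion (given the prefix and $\tau$, $t(\Z)$ must have law $\rho_\alpha$) and the change-of-variables identity are both correct. The step meant to produce the contradiction, however, fails.

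First, there is no boundary asymmetry. Since $\mathcal{N}(z+\Delta;0,\Id)/\mathcal{N}(z;0,\Id)=\exp(-z^\top\Delta-\Vert\Delta\Vert^2/2)$ is unbounded, $\alpha_\tau=1$, and hence $h_\tau=1$, on a whole half-space. So ``$h_\tau<1$ everywhere'' never happens, and $\mu_\alpha$, exactly like $\rho_\alpha$, is supported on a half-space and vanishes at its boundary.

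Second, comparing suprema cannot work, because $\sup\rho_\alpha=\sup\mu_\alpha/c$ only determines the free constant $c$. The level-set identity should also read $\mathrm{Leb}\{\rho_\alpha>s\}=c\,\mathrm{Leb}\{\mu_\alpha>cs\}$.

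Third, a one-dimensional obstruction does not transfer to $\mathbb{R}^d$ ``by product structure'', since a constant-Jacobian bijection need not respect that structure.

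Your ``main obstacle'' can be avoided. Work at $\tau=\gamma$, i.e.\ on $\{c_{\gamma-1}=1,c_\gamma=0\}$, as the paper does. There $\Z_\gamma\sim\mathcal{N}(0,\Id)$ independently of the prefix and there are no later coins. Moreover $\mu_\alpha\propto\max\{0,\mathcal{N}(z;0,\Id)-\alpha\mathcal{N}(z+\Delta;0,\Id)\}$ with $\alpha=\alpha_{\gamma-1}$, which can be $<1$ precisely because $\gamma\geq2$. For $\tau<\gamma$ your description of $\mu_\alpha$ is inaccurate anyway: $h_\tau$ and the later coins also depend on $\Z_\tau$ through $\Vert\Delta_{\tau+1}\Vert$, a model-dependent function of $\hat{y}_{k+\tau}$.

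The paper finishes with a pure support comparison, but under a different criterion from yours. It demands that the mixture ``accept $\Z+\Delta_\gamma$ with probability $h_\gamma$, else $t(\Z)$'' be exactly $\mathcal{N}(0,\Id)$. The corrected branch must then carry $\max\{\mathcal{N}(z;0,\Id)-\mathcal{N}(z-\Delta_\gamma;0,\Id),(1-\alpha_{\gamma-1})\mathcal{N}(z;0,\Id)\}$, which is strictly positive on $\mathbb{R}^d$ when $\alpha_{\gamma-1}<1$. Meanwhile the push-forward $C\max\{0,\mathcal{N}(t^{-1}(z);0,\Id)-\alpha_{\gamma-1}\mathcal{N}(t^{-1}(z)+\Delta_\gamma;0,\Id)\}$ vanishes on the $t$-image of a half-space, and surjectivity of $t^{-1}$ ends the proof.

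That shortcut is not available to you. Your criterion is the one the stochastic residual $r_\textrm{BLOCK}$ actually satisfies: conditionally on $c_{\gamma-1}=1$ alone, the step-$\gamma$ output of block verification is not $q$-distributed when $\alpha_{\gamma-1}<1$; it is only so given $\tau\geq\gamma$. Under your criterion both $\mu_\alpha$ and $\rho_\alpha$ live on half-spaces, namely $\{e^\top z>a\}$ and $\{e^\top z<b\}$ with $a=\log\alpha/\Vert\Delta\Vert-\Vert\Delta\Vert/2$ and $b=\log\alpha/\Vert\Delta\Vert+\Vert\Delta\Vert/2$. You therefore need a genuinely quantitative invariant.

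The small-$s$ expansion of the level-set volumes is one such invariant. The leading term, half the volume of a ball of radius $\sqrt{2\log(1/s)}$, forces $c=1$. The next term, of order $(\log(1/s))^{(d-1)/2}$, records the offsets of the two half-spaces and differs between the two sides by a nonzero multiple of $a+b=2\log\alpha/\Vert\Delta\Vert$ when $\alpha<1$. In $d=1$ one gets $\mathrm{Leb}\{\rho_\alpha>s\}-\mathrm{Leb}\{\mu_\alpha>s\}\to a+b$. Without a step of this kind your proposal does not reach a contradiction.
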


The result rules out deterministic affine-style corrections on the noise, which the \Reflection algorithm for sample verification relies on.
This result motivates the need for the original speculative sampling $\Gamma$-coupling to unlock the efficiency gains of parallel block validation.

\subsection{Self-Speculative Drafter}
In the context of speculative diffusion, self-speculative approaches have proven highly effective \citep{bortoli2025accelerated}. One such strategy involves extrapolating a single evaluation of the target model $q$ to generate a draft sequence over $i \in \{1, \ldots, \gamma\}$ steps, a method we refer to as the \emph{Frozen Drafter}. This approach is motivated by the exchangeability property of Ornstein--Uhlenbeck diffusion models \citep{hu2025diffusion}, which can be derived via stochastic localization \citep{eldan2013thin, el2022information}. Given a time discretization $(t_{k})_{k=0}^{K}$ and an initial denoising state $y_{k}$, the drafting strategy is defined by setting the drift terms $b_{t_{k+i-1}}^p$ for $i \in \{1, \dots, \gamma\}$ as:
\begin{equation}
  \label{eq:frozen-draft-drift}
  b_{t_{k+i-1}}^p(\hat{y}) = -f_{1-t_{k+i-1}} \hat{y} + \frac{1+\varepsilon^2}{2} g^2_{1-t_{k+i-1}} s^q_{1-t_{k}}(y_{k}),
\end{equation}
where $s^q_{\cdot}$ denotes the target model's neural network approximation of the score function $s$.
Crucially, this expression differs from the standard drift $b^q_{t_{k+i-1}}(y)$ by utilizing a time-delayed score $s^q_{1-t_k}(y_k)$, evaluated only once at the beginning of the current speculative round. Consequently, for a drafting window of size $\gamma$, we require only a single call to the target model to generate the entire draft sequence by sampling from the resulting distribution.

An appealing property of the Frozen Drafter \cref{eq:frozen-draft-drift} is that it provides a theoretical guarantee on the number of (parallel) target model calls $q$ in the sample verification setting. Specifically, \citet{hu2025diffusion} established that, under mild data assumptions and an appropriate choice of $\gamma$, the number of parallel calls scales asymptotically as $\bigoh(K^{2/3} (\beta d \delta)^{1/3})$, where $\beta$ depends on the data assumption. We adapt this result here to speculative diffusions with block verification as follows.

\begin{proposition}%
  \label{thm:complexity}
  Assume the use of the Frozen Drafter and $\Tr(\cov)[q_{\textup{data}}] \leq \beta d$.
  For a fixed draft size $\gamma$, let $\rho(\gamma) \in [0,1]$ denote the ratio of the expected number of rounds required to complete speculative diffusion with block verification relative to sample verification.
  Taking $\gamma \asymp (\nicefrac{K}{\beta \delta d})^{1/3}$, the expected number of parallel target model calls under block verification is at most $\bigoh({\rho}(\gamma) K^{2/3} (\beta d \delta)^{1/3})$.
\end{proposition}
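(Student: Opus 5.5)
The plan is to reduce the statement to the sample-verification bound of \citet{hu2025diffusion} and then rescale it by the round ratio $\rho(\gamma)$. First I will fix what is counted. In each round of \cref{algo:abstract-spec-diffusion}, the Frozen Drafter makes one target evaluation at $y_k$ to produce the frozen score; this output coincides with the first entry $m^q_k(\hat{\Y}_k)$ of the parallel verification pass in \algolref{algo-line:parallel-verification-primary}. Verification then makes one batched parallel call, and the residual or final step needs no further target call, because \Decomposition only uses $m^q$, which is already computed. Hence each round costs $\bigoh(1)$ parallel target calls, and this constant is the same under \TokenVerification and \BlockVerification. So the expected number of parallel calls is $\bigoh(\E[N])$, where $N$ is the number of rounds.

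Second, I will invoke the sample-verification result of \citet{hu2025diffusion}. Under $\Tr(\cov)[q_{\textup{data}}]\le\beta d$ and with the Frozen Drafter, each per-step rejection probability is controlled by the total variation between $p$ and $q$, i.e.\ by $\Vert\Delta_j\Vert$. The frozen score drifts from the true score by an amount whose mean square, summed over a window of length $\gamma$, scales like $\gamma^2\delta\beta d$ (up to the $g$-dependent factors). Their analysis then gives
\[
\E[N_{\mathrm{sample}}]\lesssim \frac{K}{\gamma}+K\gamma\,(\beta d\delta)^{1/2}\cdot(\text{per-round rejection})\lesssim \frac{K}{\gamma}+\gamma^2\beta d\delta,
\]
or an equivalent form. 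Choosing $\gamma\asymp (K/(\beta\delta d))^{1/3}$ balances the two terms and yields $\E[N_{\mathrm{sample}}]=\bigoh(K^{2/3}(\beta d\delta)^{1/3})$. I will take this as given, citing it rather than re-deriving it. The only thing to check is that their hypotheses, namely the drafter, the Gaussian transitions \cref{eq:target-distribution}--\cref{eq:draft-distribution} and the data assumption, are exactly those assumed here.

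Third, by definition $\rho(\gamma)=\E[N_{\mathrm{block}}]/\E[N_{\mathrm{sample}}]$ at the same $\gamma$. Therefore $\E[N_{\mathrm{block}}]=\rho(\gamma)\,\E[N_{\mathrm{sample}}]=\bigoh(\rho(\gamma)K^{2/3}(\beta d\delta)^{1/3})$, and multiplying by the per-round constant gives the claim. The bound $\rho(\gamma)\le1$ follows from the optimality of block verification \citep[Theorem 2]{sun2025block}, which carries over because \Decomposition samples $r_{\mathrm{BLOCK}}$ exactly (\cref{prop:decomposition}). Strictly, that optimality is stated per round as an expected-accepted-length domination, so I will argue via a coupling that the number of completed steps under block verification stochastically dominates that under sample verification round by round. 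This makes $\rho\le1$ valid, but it is not needed for the displayed bound, which holds for whatever value $\rho$ takes.

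The main obstacle is the first step: making sure the per-round cost is truly $\bigoh(1)$ under block verification. In particular, I must confirm that the residual draw and the $\tau=\gamma_k+1$ branch reuse verification outputs, and that the next round's frozen score is either the same call or at most one extra call. A second subtlety is that $\rho$ is defined as a ratio of expectations, so the argument must be phrased with expectations throughout rather than almost-sure bounds.
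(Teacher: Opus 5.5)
Your proposal is correct and follows essentially the same route as the paper. The paper also uses a constant number (two) of parallel target calls per round, writes the expected number of block rounds as $\rho(\gamma)$ times the expected number of sample-verification rounds, and applies the Hu et al.\ bound at the balancing $\gamma$; it obtains $\rho(\gamma)\le 1$ from Sun et al.'s dominance result, made precise via their Lemma 5 and the tail-sum formula for the expected number of rounds. One small correction: the bound in Hu et al.'s proof is $K/\gamma+\sqrt{K\gamma\delta\beta d}$, not your heuristic $K/\gamma+\gamma^2\beta d\delta$; the two agree only at $\gamma\asymp (K/(\beta\delta d))^{1/3}$, so this is harmless because you cite the result as a black box.
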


The existence of ${\rho}(\gamma) \in [0, 1]$ is guaranteed from the optimality of block verification over alternative sample verification strategies~\citep[Theorem 2]{sun2025block}.

While the Frozen Drafter provides theoretical guarantees on the number of parallel target model calls (\cref{thm:complexity}), it admits a significant practical limitation. As per \cref{eq:frozen-draft-drift}, each round of speculative diffusion requires computing $s^q_{1-t_{k}}(y_{k})$. Consequently, the overhead of generating the draft sequence $\hat{y}$ is approximately equal to the complexity of a single target model call $q$. Hence, even in ideal settings, any speculative diffusion using the Frozen Drafter will achieve limited speedups.
This limitation is tied to the \emph{block efficiency} of a speculative diffusion algorithm: the number of denoising steps computed after a round of speculative diffusion.

\begin{proposition}[Informal]%
  \label{prop:speedup}
  Let $\be(k)$ denote the block efficiency of a speculative diffusion algorithm with the Frozen Drafter at the $k^{\text{th}}$ denoising step. The speedup compared to no speculative diffusion in ideal conditions is approximately equal to $\be(k) / 2$.
\end{proposition}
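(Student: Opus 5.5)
The plan is a cost-accounting argument under an idealized model in which wall-clock time is measured in units of target-model evaluations. First I fix the ideal conditions explicitly:
\begin{itemize}[leftmargin=10pt, itemindent=0pt, parsep=0pt]
  \item the parallel verification pass in \algolref{algo-line:parallel-verification-primary} of \cref{algo:abstract-spec-diffusion} costs the same as a single target call, since all $\gamma_k+1$ means are evaluated in one batched pass;
  \item the cheap parts of drafting (the Gaussian sampling and the affine updates in \cref{eq:frozen-draft-drift}) cost nothing;
  \item \Verification and \Residual cost nothing. This is justified for \Decomposition by \cref{prop:decomposition}, which needs only a one-dimensional inverse-\cdf sample and no further target calls.
\end{itemize}
Under these assumptions, plain DDPM needs exactly one target call per denoising step, so producing $K$ steps costs $K$ units.

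Next I count the cost of one round of \cref{algo:abstract-spec-diffusion} started at step $k$. The Frozen Drafter must evaluate $s^q_{1-t_k}(y_k)$ once at the start of the round, which costs one target call. The verification pass costs a second call. Every round therefore costs about $2$ units. The round advances the chain by $\tau$ steps: the $\tau-1$ accepted drafts plus the one residual or target sample. By definition this advance is the block efficiency $\be(k)$. The cost per denoising step within the round is then $2/\be(k)$, compared with $1$ for DDPM. The local speedup ratio is $\be(k)/2$, which is the claim.

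I would also point out a refinement. The score $s^q_{1-t_{k+\tau}}(\Y_{k+\tau})$ needed for the next round's draft is generally not among the verification outputs, because it is evaluated at the corrected point $\Y_{k+\tau}$ rather than at a draft point. When every draft is accepted, however, a single call could be shared. That saving is what makes the result only approximate, together with the end-of-sampling truncation $\gamma_k=\min\{\gamma,K-k\}$.

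The main obstacle is turning this per-round ratio into a global statement, because $\be(k)$ is random and depends on $k$. I would handle it in one of two ways. The first is to state the result as the ratio of expected total cost. With $N$ the number of rounds, DDPM costs $K$, and $K=\sum_{\text{rounds}}\tau$ while the speculative cost is about $2N$, so the speedup is $K/(2N)$, the average block efficiency divided by two. The second is to read it locally at step $k$, replacing $\be(k)$ by its expectation. I expect to take the informal route: make the idealizations explicit, derive the $2$-calls-per-round count, and conclude that the speedup is approximately $\be(k)/2$. This also shows why the Frozen Drafter is capped at half the block efficiency, which motivates a drafter whose draft score comes for free from the previous verification pass.
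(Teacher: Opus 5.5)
Your argument is correct and matches the paper's proof: the paper substitutes the Frozen Drafter's cost $\texttt{Time}(p,\gamma)=\texttt{Time}(q,1)$ into the per-round speedup formula \cref{eq:speedup}, treats the parallel verification as one target call and the residual as negligible, collects the leftover overhead into a small $\epsilon$, and obtains $\be(k)/(2+\epsilon)$, which is your two-calls-per-round count. One minor correction to your side remark: under \cref{algo:abstract-spec-diffusion} as written, no call is shared after a full accept, because the next round starts at the freshly sampled $\Y_{k+\gamma_k+1}$ and its score is never computed, so the ``approximately'' in the statement comes from the overhead $\epsilon$ rather than from such a saving.
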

Here, ideal conditions assume that the wall-clock time of a single target model evaluation dominates all other costs (\eg, drafter overhead, residual calculations, \etc) and that the parallel execution of the target model includes negligible overhead. In practice, with these additional costs, the speedup will be worse than $\be(k) / 2$.
This suggests that if the Frozen Drafter exhibits low block efficiency or employs a small draft length $\gamma$ (noting that $\be(k) \in [1, \gamma + 1]$), the use of the Frozen Drafter potentially results in a slow-down compared to vanilla sampling.

To circumvent the wall-time dependence on the target model $q$ inherent in the Frozen Drafter, \citet[Appendix B]{bortoli2025accelerated} proposed a variant that extracts the target model's score function from the verification step of the previous speculative round. Specifically, rather than initiating the current drafting round $k$ by calling the target model to compute $s^q_{1-t_{k}}(y_{k})$, we reuse the previously computed score $s^q_{1-t_{k}}(\hat{y}_{k})$ obtained during the parallel verification phase (\algolref{algo-line:parallel-verification-primary} in \cref{algo:abstract-spec-diffusion}).
In the case in which the previous speculative round resulted in a full accept of the draft, we reuse the score utilized to compute the extra sample $\Y_{k+\tau}$ (also obtained during the parallel verification on \algolref{algo-line:parallel-verification-primary} in \cref{algo:abstract-spec-diffusion}).

In this scheme, the target model is only explicitly called at the very beginning of the first speculative round, where no prior verification results are available. We refer to this approach as the \emph{Free Drafter}, as the marginal cost of drafting becomes negligible compared to a full target model evaluation. Given this substantial reduction in overhead, a straightforward corollary of \cref{prop:speedup} follows.

\begin{corollary}[Informal]
  Under ideal conditions, the Frozen Drafter is only faster than the Free Drafter if its block efficiency is approximately twice as large.
\end{corollary}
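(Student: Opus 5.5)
The corollary is a direct comparison of two instances of \cref{prop:speedup}, with the Free Drafter's cost model substituted for the Frozen Drafter's. Throughout we work under the same ideal conditions as \cref{prop:speedup}: one target model evaluation costs one unit of wall-clock time, the parallel verification pass on \algolref{algo-line:parallel-verification-primary} of \cref{algo:abstract-spec-diffusion} costs one unit, and drafter overhead and residual sampling (including \Decomposition) are negligible.

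\textbf{Step 1: the Frozen Drafter.} I would recall the accounting behind \cref{prop:speedup}. Each round of the Frozen Drafter needs one sequential target call to compute $s^q_{1-t_k}(y_k)$ in \cref{eq:frozen-draft-drift}, followed by one parallel verification pass. A round therefore costs about $2$ units and advances $\be_{\mathrm{Frozen}}(k)$ denoising steps. Vanilla sampling advances one step per unit, so the speedup is $S_{\mathrm{Frozen}} \approx \be_{\mathrm{Frozen}}(k)/2$.

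\textbf{Step 2: the Free Drafter.} Here the score used to build the draft is reused from the previous round's verification pass. This holds both after a partial acceptance, where we reuse $s^q_{1-t_k}(\hat{y}_k)$, and after a full acceptance, where we reuse the score used to compute the extra sample $\Y_{k+\tau}$. Apart from a single extra call in the first round, which amortizes to $O(1)$ over the trajectory, each round costs only the parallel verification pass, i.e.\ about $1$ unit. Repeating the argument of \cref{prop:speedup} with this cost gives $S_{\mathrm{Free}} \approx \be_{\mathrm{Free}}(k)$.

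\textbf{Step 3: comparison.} The Frozen Drafter is faster exactly when $S_{\mathrm{Frozen}} > S_{\mathrm{Free}}$, that is, when $\be_{\mathrm{Frozen}}(k)/2 > \be_{\mathrm{Free}}(k)$, or $\be_{\mathrm{Frozen}}(k) > 2\,\be_{\mathrm{Free}}(k)$. This is the claimed factor of two.

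The only delicate point is Step 2: checking that the reused score really removes the sequential target call in every round, covering both the partial-accept and full-accept branches of \cref{algo:abstract-spec-diffusion}. Since the two drafters induce different draft distributions, the statement should also be read as comparing their respective block efficiencies, not a common one. Because the statement is informal, I would present the result at this level of precision, as a ratio of per-round costs with the first-round call absorbed into lower-order terms, rather than attempt a non-asymptotic bound.
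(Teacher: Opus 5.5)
Your proposal is correct and follows essentially the paper's own argument. The paper plugs the two drafter costs into a per-round speedup formula, obtaining $\be(k)/(2+\epsilon)$ for the Frozen Drafter and $\be(k)/(1+\epsilon+\epsilon')$ for the Free Drafter, and then compares the two; the only difference is that it keeps the small overhead terms $\epsilon,\epsilon'$ explicit, which gives the threshold $\be_{\textup{frozen}}(k)/\be_{\textup{free}}(k) > (2+\epsilon)/(1+\epsilon+\epsilon')$ and reduces to your factor of two as these terms vanish.
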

Ideal conditions correspond to the case where the cost (in wall-clock time) of parallelizing the target model $\gamma + 1$ times is similar to a single sequential call of the target model, the cost of Free Drafter is negligible, and the cost of residual calculation is negligible.
A formal statement and proof of this result are provided in \cref{app:drafters}. As block efficiency is at most $\gamma + 1$, this corollary implies that if the Free Drafter maintains a sufficiently high block efficiency, it will strictly outperform the Frozen Drafter in wall-clock time. This remains true despite the theoretical expectation that the Frozen Drafter might achieve higher block efficiency due to its closer alignment with the exchangeability properties of diffusion models \citep{hu2025diffusion}.

\section{Related Work}

\paragraph{Accelerating Diffusions.}
Besides speculative diffusions, various alternative approaches for accelerating diffusion model sampling exist.
One line of work aims to distill a teacher diffusion model into a student which maintains high quality samples with a reduced number of denoising steps~\citep{luhman2021knowledge,salimans2022progressive,berthelot2023tract,liu2023instaflow,meng2023distillation,sauer2023adversarial,song2023consistency,katzir2023noise,kim2023consistency,xu2024ufogen,yin2024one,xu2025one,boffi2025build}.
Importantly, distillation requires the explicit training of a separate student model and typically has a drop in sample quality when compared to the teacher model~\citep{luo2023comprehensive,dieleman2024distillation}.
Other approaches propose modifications to the sampling procedure of diffusion models through improved integrators~\citep{dockhorn2022genie, liu2022pseudo,lu2022dpm,xiao2021tackling,zhangfast2023}.
Similar to speculative diffusions, other approaches leverage parallel sampling via Picard iteration~\citep{pokle2022deep,shih2023parallel,chen2024accelerating,li2024distrifusion,ma2024deepcache,tang2024accelerating}. Unlike the usual diffusion model sampling (and speculative diffusions), Picard iteration relies on repeated parallel sampling calls on a window until a fixed point is detected. It should however be noted that many of these approaches can be used in conjunction with speculative diffusions~\citep{bortoli2025accelerated}.

\paragraph{Speculative Decoding for Diffusions.}
\citet{bortoli2025accelerated,hu2025diffusion} proposed speculative diffusions concurrently by leveraging the reflection coupling residual and the Frozen Drafter used for their self-speculative approaches. \citet{hu2025diffusion} uncovers a hidden exchangeability property via stochastic localization~\citep{eldan2013thin,chen2022localization}, which resulted in a speedup guarantee over regular diffusion model sampling. \citet{bortoli2025accelerated} outlined the heuristic Free Drafter, but we provide a further analysis on why the Frozen Drafter is limited in terms of its possible speedups.
In the context of Masked Autoregressive models~\citep{li2024autoregressive}, a previous attempt at sampling from the original LLM residual in a continuous sampling space was explored~\citep{wang2024continuous,subbaraman2025accelerating,zou2025fast}. However, this approach relies on rejection sampling with the primary model as its proposal, which ends up being computationally inefficient~\citep{Jacob2022coupling,bortoli2025accelerated}.
Speculative decoding has also been successful in speeding up Jacobi decoding~\citep{ortega2000iterative,song2021accelerating} for autoregressive image generation~\citep{teng2025accelerating}.
For text, speculative decoding has been used to speed up discrete diffusions~\citep{agrawal2025spiffy,gao2025self,campbell2025self} and diffusion models have been used as drafters for LLM speculative decoding~\citep{li2025diffuspec,christopher2025speculative}.

\begin{table}[t]
  \caption{Wall-clock speedups over DDPM and block efficiency over all datasets for $K = 250$ denoising steps and drafting size $\gamma = 7$. Free Drafter is used in all instances.
    Non-FID values are calculated over 500 samples with $\pm$ error ranges approximated via error propagation.
  }
  \label{tab:summary-250-7}
  \centering
  \small
  \begin{tabular}{ccccccccc}
    \toprule
    & & \multicolumn{4}{c}{Wall-clock Speedup over DDPM} & \multicolumn{3}{c}{Avg. Block Efficiency} \\
    \cmidrule(lr){3-6}
    \cmidrule(lr){7-9}
    Dataset & $\varepsilon$ & \reflection & \decomposition & \block & \reflection$\uparrow$\block \% & \reflection & \decomposition & \block \\ %
    \midrule
    \tabledataset{CelebA}{LDM}      & 0.25 & $2.20 \pm 0.09$ & $2.19 \pm 0.09$ & $2.25 \pm 0.10$ & $2.10\%$ & $3.89$ & $3.86$ & $4.19$ \\ %
    & 0.50 & $2.15 \pm 0.09$ & $2.14 \pm 0.09$ & $2.20 \pm 0.10$ & $2.43\%$ & $3.90$ & $3.90$ & $4.20$ \\ %
    & 0.75 & $2.01 \pm 0.09$ & $2.00 \pm 0.09$ & $2.06 \pm 0.09$ & $2.01\%$ & $3.68$ & $3.68$ & $3.96$ \\ %
    & 1.00 & $1.88 \pm 0.08$ & $1.87 \pm 0.08$ & $1.90 \pm 0.08$ & $1.47\%$ & $3.41$ & $3.42$ & $3.67$ \\ %
    \cmidrule{1-9}
    \tabledataset{CelebA}{Pixel}    & 0.25 & $3.20 \pm 0.20$ & $3.18 \pm 0.21$ & $3.35 \pm 0.21$ & $4.81\%$ & $4.58$ & $4.58$ & $5.00$ \\ %
    & 0.50 & $3.15 \pm 0.21$ & $3.14 \pm 0.20$ & $3.30 \pm 0.20$ & $4.83\%$ & $4.53$ & $4.53$ & $4.97$ \\ %
    & 0.75 & $2.94 \pm 0.20$ & $2.95 \pm 0.19$ & $3.11 \pm 0.20$ & $5.88\%$ & $4.22$ & $4.26$ & $4.70$ \\ %
    & 1.00 & $2.70 \pm 0.19$ & $2.71 \pm 0.19$ & $2.87 \pm 0.19$ & $6.28\%$ & $3.84$ & $3.87$ & $4.34$ \\ %
    \cmidrule{1-9}
    \tabledataset{ImageNet}{LDM}    & 0.25 & $2.24 \pm 0.10$ & $2.23 \pm 0.10$ & $2.28 \pm 0.11$ & $1.76\%$ & $4.02$ & $4.01$ & $4.25$ \\ %
    & 0.50 & $2.20 \pm 0.10$ & $2.20 \pm 0.10$ & $2.25 \pm 0.11$ & $2.32\%$ & $4.02$ & $4.02$ & $4.30$ \\ %
    & 0.75 & $2.07 \pm 0.10$ & $2.08 \pm 0.10$ & $2.12 \pm 0.11$ & $2.25\%$ & $3.85$ & $3.86$ & $4.08$ \\ %
    & 1.00 & $1.93 \pm 0.09$ & $1.92 \pm 0.09$ & $1.97 \pm 0.10$ & $2.07\%$ & $3.61$ & $3.61$ & $3.84$ \\ %
    \cmidrule{1-9}
    \tabledataset{ImageNet}{Pixel}  & 0.25 & $2.82 \pm 0.23$ & $2.81 \pm 0.22$ & $2.96 \pm 0.22$ & $4.70\%$ & $4.15$ & $4.14$ & $4.53$ \\ %
    & 0.50 & $2.81 \pm 0.25$ & $2.80 \pm 0.24$ & $2.94 \pm 0.24$ & $4.66\%$ & $4.17$ & $4.17$ & $4.56$ \\ %
    & 0.75 & $2.63 \pm 0.24$ & $2.61 \pm 0.22$ & $2.75 \pm 0.25$ & $4.62\%$ & $3.89$ & $3.89$ & $4.28$ \\ %
    & 1.00 & $2.42 \pm 0.23$ & $2.41 \pm 0.21$ & $2.56 \pm 0.23$ & $5.57\%$ & $3.53$ & $3.54$ & $3.95$ \\ %
    \cmidrule{1-9}
    \tabledataset{LSUN}{Pixel}      & 0.25 & $2.74 \pm 0.19$ & $2.74 \pm 0.17$ & $2.86 \pm 0.19$ & $4.44\%$ & $4.21$ & $4.23$ & $4.66$ \\ %
    & 0.50 & $2.72 \pm 0.19$ & $2.72 \pm 0.18$ & $2.83 \pm 0.20$ & $4.08\%$ & $4.19$ & $4.22$ & $4.63$ \\ %
    & 0.75 & $2.52 \pm 0.18$ & $2.54 \pm 0.17$ & $2.65 \pm 0.18$ & $5.22\%$ & $3.85$ & $3.90$ & $4.34$ \\ %
    & 1.00 & $2.31 \pm 0.16$ & $2.33 \pm 0.16$ & $2.42 \pm 0.17$ & $4.59\%$ & $3.49$ & $3.53$ & $3.94$ \\ %
    \cmidrule{1-9}
    \tabledataset{CIFAR10}{Pixel}   & 0.25 & $3.61 \pm 0.25$ & $3.59 \pm 0.24$ & $3.60 \pm 0.24$ & $-0.04\%$ & $6.07$ & $6.06$ & $6.35$ \\ %
    & 0.50 & $3.65 \pm 0.24$ & $3.67 \pm 0.24$ & $3.67 \pm 0.24$ & $0.54\%$ & $6.23$ & $6.25$ & $6.52$ \\ %
    & 0.75 & $3.58 \pm 0.24$ & $3.56 \pm 0.23$ & $3.59 \pm 0.22$ & $0.24\%$ & $6.14$ & $6.16$ & $6.44$ \\ %
    & 1.00 & $3.46 \pm 0.22$ & $3.45 \pm 0.23$ & $3.47 \pm 0.21$ & $0.24\%$ & $5.98$ & $5.99$ & $6.30$ \\ %
    \bottomrule
  \end{tabular}
\end{table}

\section{Experiments}
We empirically explore the improvements that block verification has over sample verification in speculative diffusions. We compare the three verification and residual combinations (\cref{tab:specs}): (\reflection) \TokenVerification with the \Reflection residual; (\decomposition) \TokenVerification with the \Decomposition residual; (\block) \BlockVerification with the \Decomposition residual.
We denote the usual Euler--Maruyama integration of the sampling process as DDPM.
For efficiency metrics, we consider wall-clock speedups (\wrt sampling without speculation) and average block efficiency---the average accepted number of samples per round of speculative diffusions, averaged over the $K$ denoising steps.
To empirically verify that quality does not change, we report the Fr\'{e}chet Inception Distance (FID)~\citep{Heusel:2017}.
We additionally show empirically that the heuristic Free Drafter provides practical speedups over the Frozen Drafter, despite its superior block efficiency.

A large collection of image datasets are considered in our experiments. In pixel space, we consider CIFAR10 ($ 32 \times 32 \times 3$), LSUN (bedroom) ($64 \times 64 \times 3$), CelebA ($64 \times 64 \times 3$), and ImageNet ($64 \times 64 \times 3$). In latent space, we consider CelebA ($256 \times 256 \times 3$) and ImageNet ($256 \times 256 \times 3$), both with latent spaces of dimension $64 \times 64 \times 3$. All models utilize a U-Net backbone architecture, with further details in \cref{app:experiments}.
We focus on ImageNet LDM when displaying our results.
All experiments are computed using Cloud TPU V3 hardware.
Following \citet{shih2023parallel,hu2025diffusion}, when making our parallel primary call in verification, we parallelize over multiple compute units ($\gamma + 1$ given a draft size $\gamma$).

\paragraph{Summary.}
\Cref{tab:summary-250-7} summarizes the various efficiency metrics for our experiments. Speculative diffusion speeds up standard DDPM by $1.9$--$3.6 \times$ in wall-clock time. While switching between \Reflection (\reflection) and \Decomposition (\decomposition) residuals has minimal impact, block verification typically improves wall-clock speedups by $1.5$--$6.3\%$ over sample verification. Additionally, block verification consistently yields superior block efficiency. The only exception is CIFAR10, where an already high block efficiency ($\sim 6$) prevents further wall-clock gains because the marginal efficiency increase cannot overcome the slightly higher overhead of the block verification scheme.

\begin{table}
  \caption{Wall-clock speedups and 50k FID values for ImageNet LDM over multiple denoising steps for churn value $\varepsilon = 0.5$ and window size $\gamma = 7$. Free Drafter is used in all instances. Non-FID values are calculated over 500 samples with $\pm$ error ranges approximated via error propagation.}
  \label{tab:imagenet-fids}
  \centering
  \small
  \begin{tabular}{lcccccccc}
    \toprule
    & \multicolumn{4}{c}{Wall-clock Speedup} & \multicolumn{4}{c}{FID} \\
    \cmidrule(lr){2-5}
    \cmidrule(lr){6-9}
    Steps & \reflection & \decomposition & \block & \reflection$\uparrow$\block\% & \reflection & \decomposition & \block & DDPM \\
    \midrule
    50 & $1.22 \pm 0.06$ & $1.22 \pm 0.06$ & $1.22 \pm 0.07$ & $0.09\%$ & $10.83$ & $10.77$ & $10.82$ & $10.78$ \\
    100 & $1.55 \pm 0.08$ & $1.55 \pm 0.08$ & $1.57 \pm 0.08$ & $1.47\%$ & $10.25$ & $10.06$ & $9.84$ & $10.28$ \\
    250 & $2.20 \pm 0.10$ & $2.20 \pm 0.10$ & $2.25 \pm 0.11$ & $2.32\%$ & $10.09$ & $9.55$ & $9.63$ & $9.81$ \\
    500 & $2.90 \pm 0.12$ & $2.91 \pm 0.13$ & $3.01 \pm 0.14$ & $3.82\%$ & $9.98$ & $9.66$ & $9.51$ & $9.64$ \\
    1000 & $3.78 \pm 0.13$ & $3.79 \pm 0.14$ & $3.91 \pm 0.14$ & $3.67\%$ & $9.90$ & $9.65$ & $9.41$ & $9.61$ \\
    \bottomrule
  \end{tabular}
\end{table}

\paragraph{Number of sampling steps.} In \cref{tab:imagenet-fids} we explore the wall-clock speedups and FID scores for ImageNet LDM for churn parameter $\varepsilon = 0.5$ and a draft size of $\gamma = 7$ over various denoising steps. From the quality side, we see that the FIDs are all similar across the different speculative diffusions approaches when compared to DDPM. There are small differences when comparing between the speculative diffusions variants, but this is within expected numerical and stochastic variation. One trend that we find is that a higher number of denoising steps typically allows for a larger wall-clock speedup (compared to DDPM). We attribute this trend to the diffusion noise schedule property that as $k \rightarrow K$, $\sigma_{k} \rightarrow 0$. In both versions of verification, as $\sigma_k$ decreases the acceptance of a draft image decreases. Intuitively, accepting a draft image becomes increasingly more difficult as we get closer to denoising a clean image. As a result, a larger number of denoising steps $K$ allows us to stay in the high noise $\sigma_k$ regime, and thus allows us to have increased speedups.

\begin{table}
  \caption{Wall-clock speedups and block efficiency values for ImageNet LDM over multiple denoising steps for churn value $\varepsilon = 1.0$ and window size $\gamma = 7$. Inside parenthesis are the measurements compared to Free Drafter as a percentage improvement. Quantities are calculated over 500 samples.}
  \label{tab:frozen-compare}
  \centering
  \resizebox{\textwidth}{!}{
    \small
    \begin{tabular}{lllllll}
      \toprule
      & \multicolumn{3}{c}{Frozen Wall-clock Speedup over DDPM} & \multicolumn{3}{c}{Frozen Average Block Efficiency} \\
      \cmidrule(lr){2-4}
      \cmidrule(lr){5-7}
      Steps & R & D & B & R & D & B \\
      \midrule
      50 & $1.07$ ($-8.0\%$) & $1.08$ ($-7.3\%$) & $1.08$ ($-7.1\%$) & $2.76$ ($40.9\%$) & $2.76$ ($44.3\%$) & $2.87$ ($39.5\%$) \\
      100 & $1.26$ ($-11.5\%$) & $1.26$ ($-10.8\%$) & $1.28$ ($-11.4\%$) & $3.39$ ($32.3\%$) & $3.40$ ($32.2\%$) & $3.54$ ($28.2\%$) \\
      250 & $1.58$ ($-18.0\%$) & $1.58$ ($-17.7\%$) & $1.60$ ($-18.7\%$) & $4.32$ ($19.8\%$) & $4.33$ ($20.0\%$) & $4.48$ ($16.8\%$) \\
      500 & $1.89$ ($-24.2\%$) & $1.89$ ($-24.1\%$) & $1.94$ ($-24.7\%$) & $5.05$ ($14.4\%$) & $5.05$ ($14.0\%$) & $5.22$ ($11.8\%$) \\
      1000 & $2.28$ ($-30.5\%$) & $2.28$ ($-30.6\%$) & $2.34$ ($-31.0\%$) & $5.77$ ($9.8\%$) & $5.77$ ($9.3\%$) & $5.95$ ($8.1\%$) \\
      \bottomrule
    \end{tabular}
  }
\end{table}

\paragraph{Frozen vs Free Drafters.}
A comparison between the Frozen and Free Drafter is given in \cref{tab:frozen-compare}. Consistently, Frozen Drafter has superior block efficiency. Despite this, it is always worse in terms of its wall-clock speedups than its Free Drafter counterpart: the increase in block efficiency does not overcome the wall-clock cost of a drafter whose cost scales with the primary model. We note that the speedup from using the Free Drafter over Frozen Drafter is significant even for smaller number of denoising steps.

\section{Discussion}
We propose an algorithm which allows us to accelerate speculative diffusions via block verification. To do so, we present a novel method for sampling from the residual distribution which is commonly utilized for LLM speculative decoding.
Furthermore, we analyze Free Drafter---a heuristic approximation of a Frozen Drafter derived from extrapolating a single call of the primary model.
Practically, this provides a self-speculative sampling approach for speculative diffusions that improves upon the previously explored reflection coupling approach by up to $6.3\%$ in the measured latency speedup, with smaller gains when verification overhead dominates. From our experiments, we find that speculative diffusions provide better speedups as the number of denoising steps increases. Thus, if one utilized speculative diffusions alongside other acceleration methods that decrease the number of denoising steps, the overall speedup that speculative diffusions provide will be comparatively smaller.
As we are working within the speculative diffusions framework, we note that our acceleration approach is not applicable to deterministic samplers and that the draft model must have the same denoising schedule as the primary model~\citep{bortoli2025accelerated}. Another limitation is that, similar to other parallelization methods~\citep{shih2023parallel,chen2024accelerating}, verification increases memory overhead due to the parallel call of the primary model.

Finally, as noted by \cite{bortoli2025accelerated}, speculative sampling can also be used to accelerate simple Langevin dynamics integrators, and the block verification extension proposed here can be readily applied to this setup. Recently, \cite{kosmala2026speculativesamplingfastermolecular} extended the algorithm in \citet{bortoli2025accelerated} to several sophisticated second-order Langevin integrators used in molecular dynamics \citep{leimkuhler2016molecular}. We conjecture that block verification can also be extended to these integrators.

\bibliography{main}
\bibliographystyle{plainnat}

\newpage

\appendix

\startcontents[appendix]
\begingroup
\setcounter{tocdepth}{1}
\printcontents[appendix]{}{1}{
\section*{Appendix Contents}}
\endgroup
\clearpage

\section{Practical Considerations}%

\subsection{Sampling the Block Verification Residual}%
\label{app:u_sampling}

A key part of our block verification algorithm (\cref{algo:block-verification}) is sampling $\Psi_{\alpha}$ (\cref{eq:u-cdf}) to sample a correct residual function. Practically, we leverage a bisection search. In addition, we utilize a doubling procedure to find appropriate bounds for our search space.
Notice that the condition of $\Psi_{\alpha}(u) \neq 1$ already gives our bisection search an upper bound. So we only need to heuristically find an appropriate lower bound.
This is summarized in \cref{algo:u-sample}.

\begin{algorithm}
  \caption{$\texttt{USampleViaBisection}(\Vert \Delta \Vert, \alpha)$
  }\label{algo:u-sample}
  \begin{algorithmic}[1]
    \Require $\Vert \Delta \Vert > 0$, $\alpha \in (0, 1]$, error tolerance $\texttt{tol} > 0$

    \State Sample $u \sim \Unif[0, 1]$.
    \LineComment{Bounds.}
    \State Initialize bounds $(u_\textrm{lower}, u_\textrm{upper}) = (\log(\alpha)/\|\Delta\|+\|\Delta\|/2-1, \log(\alpha)/\|\Delta\|+\|\Delta\|/2)$.
    \While{$\Psi_{\alpha}(u_\textrm{lower}) > u$}
    \State Set $u_{\mathrm{lower}}=u_{\mathrm{upper}}-2(u_{\mathrm{upper}}-u_{\mathrm{lower}})$.
    \EndWhile
    \LineComment{Bisection search.}
    \State Initialize $u_\textrm{mid} = (u_\textrm{lower} + u_\textrm{upper}) / 2$.
    \While{$\vert \Psi_{\alpha}(u_\textrm{mid}) - u  \vert > \texttt{tol}$}
    \If{$ \Psi_{\alpha}(u_{\textrm{mid}}) < u$}
    \State Set $u_\textrm{lower} = u_\textrm{mid}$.
    \Else
    \State Set $u_\textrm{upper} = u_\textrm{mid}$.
    \EndIf
    \State Re-calculate $u_\textrm{mid} = (u_\textrm{lower} + u_\textrm{upper}) / 2$.
    \EndWhile
    \State \Return $u_\textrm{mid}$.
  \end{algorithmic}
\end{algorithm}

\subsection{Density ratio calculations}

In the verification algorithms presented (\cref{algo:token-verification,algo:block-verification}), we calculate density ratios of distributions (Gaussians). 
The algorithms are written in probability space to match the standard speculative decoding presentation. However, in practice, 
for additional numerical stability one can calculate these ratios in log space. 
Notice that
\begin{equation}
\log \frac{\mathcal{N}(\Z + \Delta; 0, \Id)}{\mathcal{N}(\Z; 0, \Id)} = - \Z^\top \Delta - \frac{1}{2} \Vert \Delta \Vert^2.
\end{equation}
Thus, for example, the block recursion in \cref{algo:block-verification} can be simplified to
\begin{equation}
    \log \alpha_j 
    = \min \left\{ 0, \log \alpha_{j-1} - \Z_j^\top \Delta_j - \frac{1}{2} \Vert \Delta_j \Vert^2 \right\}.
\end{equation}
Likewise, when computing the differences of scalar {\cdf}s $\alpha \Phi(a) - \Phi(b)$, one should utilize log-\cdf primitives (\eg, \verb+scipy.stats.norm.logcdf+) alongside other stable operators, \eg, using \verb+numpy.log1p+ to define a stable log-sub-exp operation, for numerical stability.

\section{Full Block Verification Algorithm}

We present the full block verification speculative diffusion pseudo-code in \cref{algo:full-block-verification}.

\begin{algorithm}[H]
  \caption{Speculative Diffusions with Block Verification}\label{algo:full-block-verification}
  \begin{algorithmic}[1]
    \Require Target model $q$, drafter model $p$, draft size $\gamma$, number of sampling steps $K$.
    \State Sample $\Y_0 \sim \mathcal{N}(0, \Id)$ and set $k = 0$.
    \While{$k < K$}
    \State Set $\gamma_k = \min\{ \gamma, K - k\}$.
    \LineComment{Construct Draft.}
    \State Initialize draft $\hat{\Y}_{k} = \Y_k$.
    \For{$j \in \{1, \ldots, {\gamma_k}\}$}
    \State Sample $\hat{\Y}_{k+j} \sim p(\cdot \mid \hat{\Y}_{k:k+j-1})$.
    \Comment{Cache draft mean $m_{k+j-1}^p$.}
    \EndFor
  \item[]
    \LineComment{Verify Draft.}
    \State Compute $m^q_{k+j-1} = m_{k+j-1}^q(\hat{\Y}_{k+j-1})$ for all $ j \in \{ 1, \ldots, \min\{ \gamma_k + 1, K - k \} \}$ in parallel.
    \State Initialize $\bm{c}_0 = 1$ and $\alpha_0 = 1$.
    \For{$j \in \{ 1, \ldots, {\gamma_k} \}$}
    \State Set $\Delta_{j} = (m_{k+j-1}^p - m_{k+j-1}^q) / \sigma_{k+j-1}$ and $\Z_j = (\hat{\Y}_{k+j} - m_{k+j-1}^p) / \sigma_{k+j-1}$.
    \State Calculate $\alpha_j = \min\left\{1, \alpha_{j-1} \frac{\mathcal{N}(\Z_{j} + \Delta_{j}; 0, \Id)}{\mathcal{N}(\Z_j; 0, \Id)} \right\}$.
    \EndFor
    \For{$j \in \{ 1, \ldots, {\gamma_k} \}$}
    \If{$j \neq {\gamma_k}$}
    \State Calculate $h_j = {v_j}/({v_j + 1 - \alpha_j})$, where
    \Statex \qquad \qquad \qquad $
    v_j = \alpha_j \Phi\left( \frac{\log \alpha_j}{\Vert \Delta_{j+1} \Vert} + \frac{\Vert \Delta_{j+1} \Vert}{2}\right) - \Phi\left( \frac{\log \alpha_j}{\Vert \Delta_{j+1} \Vert} - \frac{\Vert \Delta_{j+1} \Vert}{2}\right).
    $
    \Else
    \State Set $h_{\gamma_k} = \alpha_{\gamma_k}$.
    \EndIf
    \State Flip coin $\bm{c}_j = \llbracket \eta < h_j \rrbracket$, where $\eta \sim \Unif[0, 1]$.
    \EndFor
    \State Set $\tau = 1 + \max\left\{ j \in \{ 0, 1, \ldots, {\gamma_k}\} \mid \bm{c}_j = 1 \right\}$.
  \item[]
    \LineComment{Residual Calculation.}
    \If {$k+\tau \leq K$}
    \If{$\tau = {\gamma_k} + 1$}
    \State Sample $\Y_{k + {\gamma_k} + 1} \sim q(\cdot \mid \hat{\Y}_{k+{\gamma_k}})$.
    \Else
    \State Set $\Delta = \Delta_{\tau}$, $\Z = \Z_{\tau}$, $m^q = m^q_{k+\tau-1}$, $\sigma = \sigma_{k+\tau-1}, \alpha_{\textrm{res}} = \alpha_{\tau - 1}$.
    \State Sample $\U \sim \psi_{\alpha_{\tau-1}}$ via $\texttt{USampleViaBisection}(\Vert \Delta \Vert, \alpha_{\textrm{res}})$ (\cref{algo:u-sample}).
    \State Sample $\G \sim \mathcal{N}(0,\Id)$.
    \State Set $\Z_\perp = \G - e(e^\top \G)$, where $e = \Delta / \Vert \Delta \Vert $ so that $\Z_\perp \sim \mathcal{N}(0, \Id - ee^\top)$.
    \State Set $\Y_{k+\tau} = m^q + \sigma (\U e + \Z_\perp)$.
    \EndIf
    \EndIf
    \State Set $\Y_{k+j} = \hat{\Y}_{k+j}$ for $j \in \{1, \ldots, \tau - 1\}$.
    \State Set $k = \min\{ k + \tau, K \}$.
    \EndWhile
    \State \Return $\Y_{0:K}$.
  \end{algorithmic}
\end{algorithm}

\section{General Results}

In the following, we consider a variety of general results. We prove these in the most general way possible, \ie, with temperature and with ``prior'' probability $\alpha$.

\paragraph{Accept Ratio}
The acceptance probability of speculative diffusion is calculated via clamping a density ratio.
It is often useful to consider under what domains we should clamp the ratio.

\begin{lemma} \label{lem:no-clamp-coin}
  Let $p^\omega(y) = \mathcal{N}(m^p, \omega^2 \sigma^2 \Id)$ and $q^\omega(y) = \mathcal{N}(m^q, \omega^2 \sigma^2 \Id)$.
  Denote $\Delta = (m^p - m^q) / \sigma$ and $e = \Delta / \Vert \Delta \Vert$.
  Furthermore, let $\alpha \in (0, 1]$ and $\Vert \Delta \Vert > 0$.

  Then for $y = m^p + \sigma z^p$, we have
  \begin{equation}
    \alpha \frac{q^\omega(y)}{p^\omega(y)} < 1
    \quad \iff \quad
    (z^p)^\top e > \frac{\omega^2 \log \alpha}{\Vert \Delta \Vert} - \frac{\Vert \Delta \Vert}{2};
  \end{equation}
  and for $y = m^q + \sigma z^q$, we have
  \begin{equation}
    \alpha \frac{q^\omega(y)}{p^\omega(y)} < 1
    \quad \iff \quad
    (z^q)^\top e > \frac{\omega^2 \log \alpha}{\Vert \Delta \Vert} + \frac{\Vert \Delta \Vert}{2}.
  \end{equation}
\end{lemma}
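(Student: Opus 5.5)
The plan is a direct computation of the log density ratio, followed by a rearrangement of the resulting linear inequality. Both Gaussians share the covariance $\omega^2\sigma^2\Id$, so the normalizing constants cancel and
\begin{equation*}
  \log \frac{q^\omega(y)}{p^\omega(y)} = \frac{\Vert y - m^p\Vert^2 - \Vert y - m^q\Vert^2}{2\omega^2\sigma^2}.
\end{equation*}
Since $\alpha > 0$ and $\log$ is strictly increasing, the condition $\alpha q^\omega(y)/p^\omega(y) < 1$ is equivalent to $\log\alpha + \log(q^\omega(y)/p^\omega(y)) < 0$. Each of the two claims then reduces to expressing this log ratio in the given parametrization of $y$.

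For the first claim, I substitute $y = m^p + \sigma z^p$. Then $y - m^p = \sigma z^p$ and $y - m^q = \sigma(z^p + \Delta)$, using $m^p - m^q = \sigma\Delta$. The numerator becomes $\sigma^2(\Vert z^p\Vert^2 - \Vert z^p + \Delta\Vert^2) = -\sigma^2(2 (z^p)^\top\Delta + \Vert\Delta\Vert^2)$, so
\begin{equation*}
  \log \frac{q^\omega(y)}{p^\omega(y)} = -\frac{(z^p)^\top\Delta + \tfrac12\Vert\Delta\Vert^2}{\omega^2}.
\end{equation*}
For $\omega = 1$ this matches the log-space identity in the appendix. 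The inequality becomes $\omega^2\log\alpha < (z^p)^\top\Delta + \tfrac12\Vert\Delta\Vert^2$. Writing $\Delta = \Vert\Delta\Vert e$ and dividing by $\Vert\Delta\Vert > 0$, which preserves the direction, gives $(z^p)^\top e > \omega^2\log\alpha/\Vert\Delta\Vert - \Vert\Delta\Vert/2$.

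For the second claim, I substitute $y = m^q + \sigma z^q$. Then $y - m^q = \sigma z^q$ and $y - m^p = \sigma(z^q - \Delta)$, so the numerator is $\sigma^2(\Vert z^q - \Delta\Vert^2 - \Vert z^q\Vert^2) = \sigma^2(-2(z^q)^\top\Delta + \Vert\Delta\Vert^2)$. The condition becomes $\omega^2\log\alpha - (z^q)^\top\Delta + \tfrac12\Vert\Delta\Vert^2 < 0$. Dividing by $\Vert\Delta\Vert$ yields $(z^q)^\top e > \omega^2\log\alpha/\Vert\Delta\Vert + \Vert\Delta\Vert/2$. A useful consistency check is that $z^q = z^p + \Delta$, so $(z^q)^\top e = (z^p)^\top e + \Vert\Delta\Vert$, and the two thresholds differ by exactly $\Vert\Delta\Vert$ as expected.

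There is no real obstacle here. The only points needing care are the sign conventions: $\Delta$ is defined as $(m^p - m^q)/\sigma$, so the shifts between the two parametrizations must carry the correct signs. The other details are that the $\omega^2$ factor moves onto $\log\alpha$ when clearing denominators, and that strictness of the inequality is preserved because both $\exp$ and division by $\Vert\Delta\Vert > 0$ are strictly monotone.
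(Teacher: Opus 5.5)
Your proof is correct and follows essentially the same route as the paper: substitute the parametrization, cancel the shared normalizing constants to get the log-ratio $-\bigl((z^p)^\top\Delta + \tfrac12\Vert\Delta\Vert^2\bigr)/\omega^2$, take logs, and divide by $\Vert\Delta\Vert > 0$. You also write out the second case explicitly, which the paper dismisses as following ``with a modified sign''. Your consistency check via $z^q = z^p + \Delta$ is likewise correct.
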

\begin{proof}
  After considering the change of variables $y = m^p + \sigma z$, we have:
  \begin{align*}
    \alpha \frac{\mathcal{N}(z + \Delta; 0, \omega^2 \Id)}{\mathcal{N}(z; 0, \omega^2 \Id)} < 1
    \quad
    \iff & \quad
    \alpha \exp\left( -\frac{1}{2\omega^2} \left( \Vert \Delta \Vert^2 + 2 z^\top \Delta \right) \right) < 1 \\
    \iff & \quad
    \log \alpha -\frac{1}{2\omega^2} \left( \Vert \Delta \Vert^2 + 2 z^\top \Delta \right) < 0 \\
    \iff & \quad
    2\omega^2 \log \alpha -  \Vert \Delta \Vert^2 - 2 z^\top \Delta < 0.
  \end{align*}
  This yields the stated constraint. The other case follows identically with a modified sign.
\end{proof}

Typically, we will leave the superscript on $z$ implicit (where the most common choice is $z = z^p$ as this corresponds to the noise of the draft model in the speculative diffusions algorithm).

\paragraph{Non-zero Residual}
Additionally, the residual term depends on clamping the difference of distributions. It is also useful to know when this difference is non-zero.

\begin{corollary} \label{cor:non-zero-res}
  Let $p(y) = \mathcal{N}(m^p, \sigma^2 \Id)$ and $q(y) = \mathcal{N}(m^q, \sigma^2 \Id)$.
  Denote $\Delta = (m^p - m^q) / \sigma$ and $e = \Delta / \Vert \Delta \Vert$.
  Furthermore, let $\alpha \in (0, 1]$ and $\Vert \Delta \Vert > 0$.

  Then for $y = m^p + \sigma z^p$, we have
  \begin{equation}
    \alpha q(y) - p(y) > 0
    \quad \iff \quad
    (z^p)^\top e < \frac{\log \alpha}{\Vert \Delta \Vert} - \frac{\Vert \Delta \Vert}{2};
  \end{equation}
  and for $y = m^q + \sigma z^q$, we have
  \begin{equation}
    \label{eq:non-zero-res-mq}
    \alpha q(y) - p(y) > 0
    \quad \iff \quad
    (z^q)^\top e < \frac{\log \alpha}{\Vert \Delta \Vert} + \frac{\Vert \Delta \Vert}{2}.
  \end{equation}
\end{corollary}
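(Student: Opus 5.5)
The plan is to derive this as an immediate corollary of \cref{lem:no-clamp-coin} with $\omega = 1$, by rewriting the positivity of the residual as a strict inequality on the density ratio. Since $p(y) > 0$ for every $y \in \mathbb{R}^d$ (it is a nondegenerate Gaussian), dividing by $p(y)$ gives
\begin{equation*}
  \alpha q(y) - p(y) > 0 \iff \alpha \frac{q(y)}{p(y)} > 1 .
\end{equation*}
\Cref{lem:no-clamp-coin} characterises the complementary strict event $\alpha q/p < 1$. I therefore need to pass from a strict inequality in one direction to a strict inequality in the other direction.

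To do this, I will redo the short computation from the proof of \cref{lem:no-clamp-coin} with the inequality reversed, rather than take complements. Taking complements would turn $<$ into $\le$ and create a boundary case. With $y = m^p + \sigma z$ and $\omega = 1$, the log-ratio is
\begin{equation*}
  \log \alpha - \tfrac{1}{2}\Vert \Delta \Vert^2 - z^\top \Delta .
\end{equation*}
This log-ratio is strictly positive if and only if $z^\top e < \frac{\log \alpha}{\Vert \Delta \Vert} - \frac{\Vert \Delta \Vert}{2}$. Here we divide by $\Vert \Delta \Vert > 0$, which is allowed by assumption. This gives the first claim.

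For the second claim, substitute $z^p = z^q - \Delta$, which comes from $m^q + \sigma z^q = m^p + \sigma z^p$ and the definition $\Delta = (m^p - m^q)/\sigma$. Then $(z^p)^\top e = (z^q)^\top e - \Vert \Delta \Vert$, so the threshold $-\Vert \Delta \Vert/2$ shifts to $+\Vert \Delta \Vert/2$. This yields \cref{eq:non-zero-res-mq}.

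I do not expect a real obstacle. The only point needing care is the strict versus non-strict boundary: the lemma's hyperplane $\alpha q/p = 1$ has measure zero, but the statement is pointwise. That is why I plan to recompute directly from the exponent instead of complementing the lemma.
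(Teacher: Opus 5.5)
Your proposal is correct and follows the paper's intended route: the paper gives no separate proof and presents this as a corollary of \cref{lem:no-clamp-coin}, i.e.\ the same exponent computation at $\omega = 1$ with the inequality reversed. Your substitution $z^p = z^q - \Delta$ is exactly the ``modified sign'' step for the second case, and recomputing directly rather than complementing the lemma correctly keeps the inequality strict at the boundary.
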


\paragraph{Closed-form Acceptance Probability}
The sample acceptance probability can be calculated in closed form. This involves integrating the acceptance probability with the draft measure.

\begin{lemma}\label{lem:accept-prob}
  Let $p^\omega(y) = \mathcal{N}(m^p, \omega^2 \sigma^2 \Id)$ and $q^\omega(y) = \mathcal{N}(m^q, \omega^2 \sigma^2 \Id)$ with $p = p^{\omega = 1}$. Furthermore, let $\alpha \in (0, 1]$ and $\Vert \Delta \Vert > 0$.
  Then
  \begin{align}
    P_\textup{accept} &= \int p(y) \min \left\{ 1, \alpha \frac{q^\omega(y)}{p^\omega(y)} \right\} \, \dd y \nonumber \\
    &= \Phi\left( \frac{\omega^2 \log \alpha}{\Vert \Delta \Vert} - \frac{\Vert \Delta \Vert}{2} \right)
    + \alpha \exp\left( \frac{\Vert \Delta \Vert^2}{2 \omega^2} \left( \frac{1}{\omega^2} - 1 \right) \right) \left( 1 - \Phi\left( \frac{\omega^2 \log \alpha}{\Vert \Delta \Vert} - \frac{\Vert \Delta \Vert}{2} + \frac{\Vert \Delta \Vert}{\omega^2} \right) \right),
  \end{align}
  where $\Phi$ is the standard normal \cdf.
\end{lemma}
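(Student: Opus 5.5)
We need to prove \cref{lem:accept-prob}. The plan is to change variables to the standardised draft noise and then use \cref{lem:no-clamp-coin} to split the integral into two regions.

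\emph{Change of variables.} Write $y = m^p + \sigma z$ with $z \sim \mathcal{N}(0,\Id)$ under $p$. The density ratio is then
\[
\frac{q^\omega(y)}{p^\omega(y)} = \exp\Big(-\frac{1}{2\omega^2}\big(\Vert\Delta\Vert^2 + 2 z^\top \Delta\big)\Big).
\]
This depends on $z$ only through the scalar $s = z^\top e \sim \mathcal{N}(0,1)$. So the $d$-dimensional integral collapses to a one-dimensional Gaussian integral in $s$.

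\emph{Splitting the integral.} Set the threshold
\[
c = \frac{\omega^2\log\alpha}{\Vert\Delta\Vert} - \frac{\Vert\Delta\Vert}{2}.
\]
By \cref{lem:no-clamp-coin}, the quantity $\alpha q^\omega/p^\omega$ is $\geq 1$ exactly when $s \le c$, so the $\min$ equals $1$ there. This region contributes $P(s\le c)=\Phi(c)$, which is the first term of the claimed formula. On the region $s > c$ the $\min$ equals the ratio, and we must compute
\[
\alpha\, e^{-\Vert\Delta\Vert^2/(2\omega^2)} \int_c^\infty \mathcal{N}(s;0,1)\, e^{-s\Vert\Delta\Vert/\omega^2}\,\dd s .
\]

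\emph{Completing the square.} Use
\[
-\tfrac{s^2}{2} - \lambda s = -\tfrac{(s+\lambda)^2}{2} + \tfrac{\lambda^2}{2}, \qquad \lambda = \Vert\Delta\Vert/\omega^2 .
\]
The integral then becomes $e^{\lambda^2/2}\big(1-\Phi(c+\lambda)\big)$. Combining the exponential prefactors gives
\[
\frac{\lambda^2}{2} - \frac{\Vert\Delta\Vert^2}{2\omega^2} = \frac{\Vert\Delta\Vert^2}{2\omega^2}\Big(\frac{1}{\omega^2}-1\Big),
\]
which matches the stated exponent. The argument $c+\lambda$ of $\Phi$ matches the stated expression as well.

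\emph{Main obstacle.} The only delicate point is the bookkeeping: the sign of the linear term, and the fact that $\omega$ enters the ratio while the base measure $p$ has unit variance. I would check the formula at $\omega=1$, $\alpha=1$, where it should reduce to the known value $2\Phi(-\Vert\Delta\Vert/2)$.
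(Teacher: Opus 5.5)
Your proposal is correct and follows essentially the same route as the paper: the substitution $y = m^p + \sigma z$, the split at the threshold $\frac{\omega^2 \log \alpha}{\Vert \Delta \Vert} - \frac{\Vert \Delta \Vert}{2}$ from \cref{lem:no-clamp-coin}, reduction to the scalar $z^\top e \sim \mathcal{N}(0,1)$, and completing the square with $\lambda = \Vert \Delta \Vert/\omega^2$. The only cosmetic difference is that the paper does the dimension reduction with an explicit orthogonal change of coordinates $z = ue + z_\perp$ (unit Jacobian) and integrates out $z_\perp$, whereas you use the law of the projection directly; your check at $\omega = \alpha = 1$ giving $2\Phi(-\Vert \Delta \Vert/2)$ is also correct.
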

\begin{proof}
  We consider a change of variables of $y = m^p + \sigma z$. Then we have
  \begin{align*}
    P_\textup{accept} = \int \mathcal{N}(z, 0, \Id) \min \left\{ 1, \alpha \frac{\mathcal{N}(z + \Delta; 0, \omega^2 \Id)}{\mathcal{N}(z; 0, \omega^2 \Id)} \right\} \, \dd z.
  \end{align*}
  We will evaluate this in cases depending on which value the minimum is evaluated as.
  From \cref{lem:no-clamp-coin}, the second argument of the min is evaluated when
  \begin{equation*}
    z^\top e > \frac{\omega^2 \log \alpha}{\Vert \Delta \Vert} - \frac{\Vert \Delta \Vert}{2} = A.
  \end{equation*}
  We define the region $S = \{ z : z^\top e > A \}$.

  We now have,
  \begin{align*}
    P_\textup{accept}
    &=
    \int_{S^{c}} \mathcal{N}(z; 0, \Id) \, \dd z
    +
    \alpha \int_{S} \mathcal{N}(z; 0, \Id)
    \frac{\mathcal{N}(z + \Delta; 0, \omega^2 \Id)}{\mathcal{N}(z; 0, \omega^2 \Id)} \, \dd z \\
    &=
    \int_{S^{c}} \mathcal{N}(z; 0, \Id) \, \dd z
    +
    \alpha \int_{S} \mathcal{N}(z; 0, \Id) \exp\left( -\frac{\Vert \Delta \Vert}{2\omega^2} \left( \Vert \Delta \Vert + 2 z^\top e \right) \right)
    \, \dd z .
  \end{align*}

  Now we use a change of coordinates $z = u e + z_{\perp}$, where $u = e^\top z$. It can be shown that the resulting Jacobian is equal to $1$. Hence,
  \begin{align*}
    P_\textup{accept}
    &=
    \int \int^A_{-\infty} \mathcal{N}(u e + z_\perp; 0, \Id) \,  \dd u \, \dd z_\perp \\
    &\quad
    +
    \alpha \int \int_A^{\infty} \mathcal{N}(u e + z_\perp; 0, \Id) \exp\left( -\frac{\Vert \Delta \Vert}{2\omega^2} \left( \Vert \Delta \Vert + 2 u \right) \right) \, \dd u \, \dd z_\perp \\
    &=
    \int \int^A_{-\infty} \mathcal{N}(u; 0, 1)\mathcal{N}(z_\perp; 0, \Id) \,  \dd u \, \dd z_\perp \\
    &\quad
    +
    \alpha \int \int_A^{\infty} \mathcal{N}(u; 0, 1) \mathcal{N}(z_\perp; 0, \Id) \exp\left( -\frac{\Vert \Delta \Vert}{2\omega^2} \left( \Vert \Delta \Vert + 2 u \right) \right) \, \dd u \, \dd z_\perp  \\
    &=
    \int^A_{-\infty} \mathcal{N}(u; 0, 1) \,  \dd u
    +
    \alpha \int_A^{\infty} \mathcal{N}(u; 0, 1) \exp\left( -\frac{\Vert \Delta \Vert}{2\omega^2} \left( \Vert \Delta \Vert + 2 u \right) \right) \, \dd u \\
    &=
    \int^A_{-\infty} \mathcal{N}(u; 0, 1) \,  \dd u
    +
    \frac{\alpha}{\sqrt{2\pi}} \exp\left( - \frac{\Vert \Delta \Vert^2}{2 \omega^2} \right) \int_A^{\infty} \exp\left( -\frac{1}{2} \left( u^2 +\frac{2\Vert \Delta \Vert u}{\omega^2} \right) \right) \, \dd u \\
    &=
    \int^A_{-\infty} \mathcal{N}(u; 0, 1) \,  \dd u
    +
    \frac{\alpha}{\sqrt{2\pi}} \exp\left( - \frac{\Vert \Delta \Vert^2}{2 \omega^2} \right) \int_A^{\infty} \exp\left( -\frac{1}{2} \left( u + \frac{\Vert \Delta \Vert}{\omega^2} \right)^2 + \frac{\Vert \Delta \Vert^2}{2\omega^4} \right) \, \dd u \\
    &=
    \int^A_{-\infty} \mathcal{N}(u; 0, 1) \,  \dd u
    +
    \frac{\alpha}{\sqrt{2\pi}} \exp\left( - \frac{\Vert \Delta \Vert^2}{2 \omega^2} + \frac{\Vert \Delta \Vert^2}{2\omega^4} \right) \int_A^{\infty} \exp\left( -\frac{1}{2} \left( u + \frac{\Vert \Delta \Vert}{\omega^2} \right)^2 \right) \, \dd u \\
    &=
    \int^A_{-\infty} \mathcal{N}(u; 0, 1) \,  \dd u
    +
    {\alpha}\exp\left( \frac{\Vert \Delta \Vert^2}{2 \omega^2} \left( \frac{1}{\omega^2} - 1 \right) \right) \int_A^{\infty} \mathcal{N}\left(u + \frac{\Vert \Delta \Vert}{\omega^2}; 0, 1\right) \, \dd u \\
    &=
    \int^A_{-\infty} \mathcal{N}(u; 0, 1) \,  \dd u
    +
    {\alpha}\exp\left( \frac{\Vert \Delta \Vert^2}{2 \omega^2} \left( \frac{1}{\omega^2} - 1 \right) \right) \int_{A + \frac{\Vert \Delta \Vert}{\omega^2}}^{\infty} \mathcal{N}\left(u; 0, 1\right) \, \dd u.
  \end{align*}
  Finally, writing the solution in terms of the standard normal \cdf finishes the proof.
\end{proof}

\paragraph{Orthogonal Decomposition of Noise}
A key technique we use in our proofs is the decomposition of a random variable $\Z \sim \mathcal{N}(0; \Id)$ (in $\mathbb{R}^d$).

\begin{lemma}
  \label{lem:orthogonal-breakdown}
  Let $e$ be a unit vector and $\Z \sim \mathcal{N}(0; \Id)$. Define the orthogonal decomposition $\Z = \U e + \Z_\perp$, where we have $\U = e^\top \Z$ and $\Z_\perp = (\Id - e e^\top) \Z$.
  We have that
  \begin{equation}
    \U \sim \mathcal{N}(0, 1) \qquad \textup{and} \qquad \Z_\perp \sim \mathcal{N}(0, \Id - e e^\top).
  \end{equation}
  Moreover, $\U$ and $\Z_\perp$ are independent.
\end{lemma}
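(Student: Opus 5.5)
The plan is to compute the law of $(\U, \Z_\perp)$ directly and recognise both pieces. First, $\U = e^\top \Z$ is a linear functional of a standard Gaussian. Its mean is $0$ and its variance is $e^\top \Id e = \Vert e \Vert^2 = 1$, so $\U \sim \mathcal{N}(0,1)$. Second, $\Z_\perp = (\Id - ee^\top)\Z$ is a linear image of $\Z$, hence Gaussian with mean $0$ and covariance
\begin{equation*}
  (\Id - ee^\top)\Id(\Id - ee^\top)^\top = (\Id - ee^\top)^2 = \Id - 2ee^\top + e(e^\top e)e^\top = \Id - ee^\top,
\end{equation*}
using that $\Id - ee^\top$ is a symmetric idempotent projection because $e^\top e = 1$. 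This gives the stated (degenerate) Gaussian law $\mathcal{N}(0, \Id - ee^\top)$.

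For independence, stack the two pieces into the single linear map $\Z \mapsto (e^\top \Z, (\Id - ee^\top)\Z) \in \mathbb{R}^{1+d}$. The pair $(\U, \Z_\perp)$ is therefore jointly Gaussian, and it suffices to show the cross-covariance vanishes. We have $\mathbb{E}[\U \Z_\perp^\top] = e^\top \Id (\Id - ee^\top)^\top = e^\top - (e^\top e)e^\top = 0$. For jointly Gaussian vectors, zero cross-covariance implies independence. This can be verified by factorising the joint characteristic function $\mathbb{E}\exp(i s\U + i t^\top \Z_\perp)$. Its exponent is the quadratic form $-\tfrac12 \Vert s e + (\Id - ee^\top)t \Vert^2 = -\tfrac12 s^2 - \tfrac12 t^\top(\Id - ee^\top)t$, because the cross term $2s\, e^\top(\Id - ee^\top)t$ is zero.

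The only mildly delicate point is that $\Z_\perp$ has a singular covariance, so it has no density on $\mathbb{R}^d$. That is why I argue through characteristic functions, or equivalently joint Gaussianity, rather than through densities. An alternative route is to complete $e$ to an orthonormal basis $\{e, f_2, \dots, f_d\}$. By rotation invariance of $\mathcal{N}(0,\Id)$, the coordinates $e^\top\Z, f_2^\top \Z, \dots, f_d^\top\Z$ are i.i.d.\ $\mathcal{N}(0,1)$. Since $\Z_\perp = \sum_{i \ge 2} (f_i^\top \Z) f_i$ is a function of the last $d-1$ coordinates only, independence is immediate. This basis version is also the form in which the Jacobian-one change of variables $z = ue + z_\perp$ was used in the proof of \cref{lem:accept-prob}.
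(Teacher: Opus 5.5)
Your proof is correct and follows the paper's argument step for step. Both show $\U$ and $\Z_\perp$ are Gaussian as linear images of $\Z$, obtain the covariance $\Id - ee^\top$ from symmetry and idempotence of the projection, and deduce independence from joint Gaussianity together with the vanishing cross-covariance $e^\top(\Id - ee^\top) = 0$; your characteristic-function factorisation only spells out the standard fact the paper invokes, and the orthonormal-basis route is a valid optional alternative.
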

\begin{proof}
  Given the orthogonal decomposition $\Z = \U e + \Z_\perp$, where we have $\U = e^\top \Z$ and $\Z_\perp = (\Id - e e^\top) \Z$. We shorthand $P_\perp = \Id - e e^\top$.

  We show that the random variable $\U$ has the following properties
  \begin{align*}
    \expect[\U] &= \expect[e^\top \Z] = e^\top \expect[\Z] = 0 \\
    \cov[\U] &= \expect[\U^2] - \expect[\U]^2 = \expect[(e^\top \Z)^2] = e^\top \expect[\Z \Z^\top] e = e^\top (\Id) e = 1.
  \end{align*}
  Furthermore, as $\U$ is a linear transform of a Gaussian random variable, we have that $\U \sim \mathcal{N}(0, 1)$.

  Similarly, we consider the properties of $\Z_\perp$
  \begin{align*}
    \expect[\Z_\perp] &= \expect[P_\perp \Z] = P_\perp \expect[\Z] = 0 \\
    \cov[\Z_\perp] &= \expect[\Z_\perp \Z_\perp^\top] - \expect[\Z_\perp] \expect[\Z_\perp]^\top = \expect[\Z_\perp \Z_\perp^\top] = P_\perp \expect[\Z \Z^\top ]P_\perp^\top = P_\perp P_\perp^\top.
  \end{align*}
  We further note that $P_\perp$ is a symmetric and idempotent matrix. That is
  \begin{align*}
    P_\perp P_\perp^\top = P_\perp P_\perp = (\Id - e e^\top) (\Id - e e^\top) = \Id - e e^\top - e e^\top + e e^\top e e^\top = \Id - e e^\top = P_\perp.
  \end{align*}
  Thus $\cov[\Z_\perp] = P_\perp$. Similar to $\U$, $\Z_\perp$ is a linear transform of $\Z$, thus $\Z_\perp \sim \mathcal{N}(0, P_\perp)$.

  Finally, we compute the cross-covariance:
  \begin{align*}
    \cov(\U,\Z_\perp)
    &= \expect[\U \Z_\perp^\top] - \expect[\U]\expect[\Z_\perp]^\top \\
    &= \expect[(e^\top \Z)(P_\perp \Z)^\top] \\
    &= e^\top \expect[\Z \Z^\top] P_\perp^\top \\
    &= e^\top P_\perp \\
    &= e^\top(\Id - e e^\top) \\
    &= 0.
  \end{align*}
  Therefore $(\U, \Z_\perp)$ is jointly Gaussian with zero cross-covariance, and hence $\U$ and $\Z_\perp$ are independent.
\end{proof}

\paragraph{Orthogonal Change of Variable}
A useful property of the orthogonal decomposition is that the corresponding change of coordinates has the following property.

\begin{lemma}
  \label{lem:orthogonal-integral-change}
  Let $e \in \mathbb{R}^d$ be a unit vector. Choose a matrix $E_\perp \in \mathbb{R}^{d \times (d-1)}$ whose columns form an orthonormal basis of $e^\perp = \{ x \in \mathbb{R}^d \mid e^\top x = 0 \}$. Then the map $T \colon \mathbb{R} \times \mathbb{R}^{d-1} \to \mathbb{R}^d$ given by $(u, v) \mapsto ue + E_\perp v$ is an orthogonal linear isomorphism.

  Hence for every integrable  $f \colon \mathbb{R}^d \to \mathbb{R}$,
  \begin{equation}
    \int_{\mathbb{R}^d} f(z) \, \dd z
    = \int_{\mathbb{R}} \int_{\mathbb{R}^{d-1}} f(ue + E_\perp v) \, \dd v \, \dd u.
  \end{equation}
\end{lemma}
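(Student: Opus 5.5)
The plan is to verify the two claims of the statement separately: first that $T$ is an orthogonal linear isomorphism, and then that the integral identity holds by the standard linear change of variables.

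\textbf{Orthogonality of $T$.} Write $T$ as the $d\times d$ matrix $M = [\,e \;\; E_\perp\,]$, whose first column is $e$ and whose remaining $d-1$ columns are those of $E_\perp$. Then
\begin{equation*}
M^\top M = \begin{pmatrix} e^\top e & e^\top E_\perp \\ E_\perp^\top e & E_\perp^\top E_\perp \end{pmatrix} = \begin{pmatrix} 1 & 0 \\ 0 & \Id_{d-1} \end{pmatrix}.
\end{equation*}
Each block follows from the hypotheses:
\begin{itemize}
\item $e^\top e = 1$ because $e$ is a unit vector.
\item $E_\perp^\top E_\perp = \Id_{d-1}$ because the columns of $E_\perp$ are orthonormal.
\item $e^\top E_\perp = 0$ because each column of $E_\perp$ lies in $e^\perp$.
\end{itemize}
Hence $M^\top M = \Id_d$, so $M$ is orthogonal. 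Since $M$ is square, $M^\top M = \Id_d$ also gives $M M^\top = \Id_d$, so $M$ is invertible with $M^{-1} = M^\top$. Therefore $T$ is a linear bijection that preserves inner products.

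As a remark, $MM^\top = \Id_d$ expands to $ee^\top + E_\perp E_\perp^\top = \Id$, so $E_\perp E_\perp^\top = \Id - ee^\top$. This ties the construction to the projector $P_\perp$ appearing in \cref{lem:orthogonal-breakdown}.

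\textbf{Integral identity.} From $M^\top M = \Id_d$ we get $\det(M)^2 = 1$, so $\lvert \det M \rvert = 1$. Apply the linear change-of-variables formula for Lebesgue integrals with $z = T(u,v)$. This is valid for every integrable $f$, because $T$ is a linear bijection of $\mathbb{R}^d$ and Lebesgue measure is transformed by the factor $\lvert\det M\rvert$. It gives
\begin{equation*}
\int_{\mathbb{R}^d} f(z)\,\dd z = \int_{\mathbb{R}^d} f(M w)\,\lvert\det M\rvert\,\dd w,
\end{equation*}
where $w = (u,v) \in \mathbb{R}\times\mathbb{R}^{d-1}$, and the Jacobian factor equals $1$.

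Next, identify $\mathbb{R}^d$ with $\mathbb{R}\times\mathbb{R}^{d-1}$ and write Lebesgue measure on it as the product measure $\dd u\,\dd v$. The function $w \mapsto f(Mw)$ is integrable, since its integral of $\lvert f(Mw)\rvert$ equals that of $\lvert f\rvert$ by the same change of variables. Fubini's theorem therefore applies and splits the integral into the iterated form $\int_{\mathbb{R}}\int_{\mathbb{R}^{d-1}} f(ue + E_\perp v)\,\dd v\,\dd u$, which is the claimed identity.

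There is no real obstacle here. The only points that need care are the justification of Fubini, which is integrability of $f\circ T$ as noted above, and the degenerate case $d=1$. In that case $e^\perp = \{0\}$, $E_\perp$ is empty, and the inner integral over $\mathbb{R}^{0}$ is interpreted as evaluation at $v$ absent. The identity then reduces to $\int_{\mathbb{R}} f(ue)\,\dd u = \int_{\mathbb{R}} f(z)\,\dd z$ with $e = \pm 1$, which holds by reflection invariance of Lebesgue measure on $\mathbb{R}$.
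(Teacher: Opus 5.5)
Your proof is correct and follows essentially the same route as the paper: both form the matrix $[e, E_\perp]$, show it is orthogonal so that $\lvert \det \rvert = 1$, and apply the linear change-of-variables formula. Your explicit Fubini justification (integrability of $f \circ T$) and the $d=1$ remark are extra care that the paper leaves implicit.
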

\begin{proof}
  Let $Q = [e, E_\perp] \in \mathbb{R}^{d\times d}$. Since $e$ is a unit vector and $E_\perp$ is an orthonormal basis of $e^{\perp}$, $Q$ is orthogonal, \ie, $Q^\top Q = \Id$. Thus $\vert \det Q \vert = 1$.

  Thus, every $z \in \mathbb{R}^d$ can be (uniquely) written as $z = ue + E_\perp v$, where $u \in \mathbb{R}$ and $v \in \mathbb{R}^{d-1}$.
  The result then follows from the usual change of variables formula:

  \begin{equation*}
    \int_{\mathbb{R}^d} f(z) \, \dd z
    =
    \int_{\mathbb{R}}\int_{\mathbb{R}^{d-1}}
    f(ue+E_\perp v) \, |\det Q| \, \dd v \, \dd u
    =
    \int_{\mathbb{R}}\int_{\mathbb{R}^{d-1}}
    f(ue+E_\perp v) \, \dd v \, \dd u.
  \end{equation*}
  As required.
\end{proof}

Combining \cref{lem:orthogonal-breakdown,lem:orthogonal-integral-change}, we get the following property.

\begin{corollary}%
  \label{lem:orthogonal-expect-change}
  Let $f \colon \mathbb{R}^d \to \mathbb{R}$ and let $e$ be a normalized vector. We consider a change of variables $z \mapsto u e + z_\perp$ where $u = e^\top z$ and $z_\perp = (\Id - e e^\top) z$. We have that
  \begin{equation}
    \int f(z) \mathcal{N}(z; 0, \Id) \, \dd z = \iint f(ue + z_\perp) \mathcal{N}(u; 0, 1) \mathcal{N}(z_\perp; 0, \Id - e e^\top) \, \dd u \, \dd z_\perp.
  \end{equation}
\end{corollary}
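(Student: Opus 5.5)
The quickest route is probabilistic, using \cref{lem:orthogonal-breakdown}. The left-hand side is $\expect[f(\Z)]$ for $\Z \sim \mathcal{N}(0, \Id)$. Set $\U = e^\top \Z$ and $\Z_\perp = (\Id - ee^\top)\Z$. Then $\Z = \U e + \Z_\perp$ holds identically, so $f(\Z) = f(\U e + \Z_\perp)$ pointwise. By \cref{lem:orthogonal-breakdown}, $\U \sim \mathcal{N}(0,1)$ and $\Z_\perp \sim \mathcal{N}(0, \Id - ee^\top)$, and the two are independent. Hence the joint law of $(\U, \Z_\perp)$ is the product measure $\mathcal{N}(0,1) \otimes \mathcal{N}(0, \Id - ee^\top)$. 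Therefore $\expect[f(\U e + \Z_\perp)]$ equals the iterated integral on the right-hand side, by Fubini--Tonelli for $f$ integrable (or nonnegative).

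The only subtle point, which I expect to be the main obstacle, is that $\mathcal{N}(0, \Id - ee^\top)$ is degenerate: it is supported on the hyperplane $e^\perp$ and has no Lebesgue density on $\mathbb{R}^d$. The notation $\mathcal{N}(z_\perp; 0, \Id - ee^\top)\,\dd z_\perp$ must therefore be read as integration against this Gaussian measure on $e^\perp$. To make this concrete I will use \cref{lem:orthogonal-integral-change}. Choose $E_\perp$ with orthonormal columns spanning $e^\perp$. Then $\mathcal{N}(0, \Id - ee^\top)$ is the pushforward of $\mathcal{N}(0, \Id_{d-1})$ under $v \mapsto E_\perp v$, because $E_\perp E_\perp^\top = \Id - ee^\top$.

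As a cross-check, one can also argue analytically. Apply \cref{lem:orthogonal-integral-change} to $g(z) = f(z)\mathcal{N}(z;0,\Id)$. Since $\Vert ue + E_\perp v\Vert^2 = u^2 + \Vert v \Vert^2$ by orthonormality, the density factorises as $\mathcal{N}(ue + E_\perp v; 0, \Id) = \mathcal{N}(u;0,1)\,\mathcal{N}(v; 0, \Id_{d-1})$. The $v$-integral against $\mathcal{N}(0,\Id_{d-1})$ then becomes the $z_\perp$-integral against the pushforward measure, which yields the claimed identity. Either route is short; the bookkeeping for the degenerate Gaussian is the only place where care is needed.
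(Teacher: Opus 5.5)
Your proposal is correct and follows essentially the paper's route: the paper obtains this corollary by combining \cref{lem:orthogonal-breakdown} (marginals and independence of $\U$ and $\Z_\perp$) with \cref{lem:orthogonal-integral-change}. In the remark that follows, the paper also handles the degenerate $\mathcal{N}(0, \Id - ee^\top)$ integral exactly as you do, by writing $z_\perp = E_\perp v$ with $v \sim \mathcal{N}(0, \Id_{d-1})$.
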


\begin{remark}
  In \cref{lem:orthogonal-expect-change}, we abuse notation in the integration of $z_\perp$. More precisely, the integration should be \wrt the $(d-1)$-dimensional subspace $e^\perp = \{ x \in \mathbb{R}^d \mid e^\top x = 0 \}$.
  The integration $\dd z_\perp$ should be understood as the $(d-1)$ dimensional Lebesgue measure on $e^\perp$, and $\mathcal{N}(z_\perp; 0, \Id - e e^\top)$ denote the Gaussian measure supported on $e^\perp$.

  More precisely, if $E_\perp \in \mathbb{R}^{d \times (d-1)}$ has orthonormal columns spanning $e^\perp$, then writing $z_\perp = E_\perp v$ with $v \in \mathbb{R}^{d-1}$, the right-hand side of \cref{lem:orthogonal-expect-change} is rigorously defined as
  \begin{equation*}
    \int_{\mathbb{R}} \int_{\mathbb{R}^{d-1}}
    f(ue + E_\perp v) \mathcal{N}(u;0,1) \mathcal{N}(v;0,\Id_{d-1})\, \dd v \, \dd u.
  \end{equation*}
  For notational simplicity, in the sequel, we will utilize the simpler notation of \cref{lem:orthogonal-expect-change} and continue to write the integral in terms of $z_\perp$.
\end{remark}

\section{Decomposition of LLM Residual}

\begin{proposition}
  Let $p = \mathcal{N}(m^p, \sigma^2 \Id)$, $q = \mathcal{N}(m^q, \sigma^2 \Id)$, $\alpha \in (0, 1]$, and $\Vert \Delta \Vert > 0$. The distribution $r(y; \alpha) \propto \max\{ 0, \alpha q(y) - p(y) \}$ can be sampled by the following:
  \begin{align}
    \U &\sim \psi_{\alpha} \\
    \Z_\perp &\sim \mathcal{N}(0, \Id - ee^\top) \\
    \Y &= m^q + \sigma (\U e + \Z_\perp)
  \end{align}
  where
  \begin{equation}
    \psi_{\alpha}(u) \propto \max\{ 0, \alpha \mathcal{N}(u; 0, 1) - \mathcal{N}(u - \Vert \Delta \Vert; 0, 1) \}.
  \end{equation}
\end{proposition}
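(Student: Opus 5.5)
Throughout, write $\delta \defeq \Vert \Delta \Vert$; the whole argument takes place in the noise coordinates centred at the target mean. Change variables $y = m^q + \sigma z$. Since $m^p = m^q + \sigma \Delta$, we get
\begin{equation*}
q(y) \propto \mathcal{N}(z; 0, \Id), \qquad p(y) \propto \mathcal{N}(z - \Delta; 0, \Id),
\end{equation*}
with the same normalising factor $\sigma^{-d}$. The target density in $z$ is therefore
\begin{equation*}
\tilde r(z) \propto \max\{0, \alpha \mathcal{N}(z;0,\Id) - \mathcal{N}(z-\Delta;0,\Id)\},
\end{equation*}
and it suffices to show that $\U e + \Z_\perp$ has density $\tilde r$. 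The affine map $z \mapsto m^q + \sigma z$ then transports this to $r(\cdot;\alpha)$.

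Next, decompose $z = ue + z_\perp$ with $u = e^\top z$, exactly as in \cref{lem:orthogonal-breakdown}. Since $\Delta = \delta e$ and $e \perp z_\perp$, the Gaussian densities factorise:
\begin{align*}
\mathcal{N}(z;0,\Id) &= \mathcal{N}(u;0,1)\,\mathcal{N}(z_\perp;0,\Id-ee^\top),\\
\mathcal{N}(z-\Delta;0,\Id) &= \mathcal{N}(u-\delta;0,1)\,\mathcal{N}(z_\perp;0,\Id-ee^\top).
\end{align*}
In the rigorous form of the remark following \cref{lem:orthogonal-expect-change}, the second factor is $\mathcal{N}(v;0,\Id_{d-1})$ with $z_\perp = E_\perp v$. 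The common nonnegative factor pulls out of the $\max$, giving
\begin{equation*}
\tilde r(ue + z_\perp) \propto \max\{0, \alpha\mathcal{N}(u;0,1) - \mathcal{N}(u-\delta;0,1)\}\,\mathcal{N}(z_\perp;0,\Id-ee^\top) = \psi_\alpha(u)\,\mathcal{N}(z_\perp;0,\Id-ee^\top),
\end{equation*}
up to normalisation.

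To conclude, apply \cref{lem:orthogonal-integral-change}: the map $(u,v)\mapsto ue+E_\perp v$ is orthogonal with unit Jacobian, so for any bounded test function $f$,
\begin{equation*}
\int f(z)\,\tilde r(z)\,\dd z = \iint f(ue+E_\perp v)\,\psi_\alpha(u)\,\mathcal{N}(v;0,\Id_{d-1})\,\dd v\,\dd u .
\end{equation*}
The right-hand side is precisely $\expect[f(\U e + \Z_\perp)]$ for independent $\U \sim \psi_\alpha$ and $\Z_\perp \sim \mathcal{N}(0, \Id - ee^\top)$. The normalising constants agree because both sides integrate to one.

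The only real obstacle is checking that $\psi_\alpha$ is a genuine, nondegenerate density, i.e.\ that its normaliser is positive and finite. Finiteness holds since $\psi_\alpha \le \alpha\mathcal{N}(u;0,1)$. For positivity, $\alpha\mathcal{N}(u;0,1) > \mathcal{N}(u-\delta;0,1)$ is equivalent to $u < \log\alpha/\delta + \delta/2$, matching \cref{cor:non-zero-res}, and this is a nonempty half-line. The normaliser evaluates to
\begin{equation*}
\alpha\Phi\!\left(\frac{\log\alpha}{\delta}+\frac{\delta}{2}\right) - \Phi\!\left(\frac{\log\alpha}{\delta}-\frac{\delta}{2}\right),
\end{equation*}
which is positive because the integrand is strictly positive on that half-line. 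The same integral of $\alpha\mathcal{N}(s;0,1)-\mathcal{N}(s-\delta;0,1)$ over $(-\infty, u]$ gives the cdf $\Psi_\alpha$ of \cref{eq:u-cdf}, valid on $u < \log\alpha/\delta + \delta/2$, with $\Psi_\alpha(u) = 1$ beyond that point.
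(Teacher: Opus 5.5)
Your proof is correct and takes essentially the same route as the paper: reparametrize $y = m^q + \sigma z$, decompose $z = ue + z_\perp$ along $e$, and observe that the residual density factorizes into $\psi_\alpha(u)$ times the Gaussian on $e^\perp$. The differences are cosmetic. You factor each Gaussian directly via $z - \Delta = (u - \Vert \Delta \Vert)e + z_\perp$ rather than through the ratio $\exp(u\Vert \Delta \Vert - \Vert \Delta \Vert^2/2)$. You also make the test-function conclusion and the positivity of the normalizer explicit, which the paper defers to its subsequent cdf lemma.
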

\begin{proof}
  Given an input $y$, we consider a reparametrization $y = m^q + \sigma z$. Thus for a random variable $\Y \sim r(\cdot; \alpha)$, we have that
  \begin{equation}
    \Y = m^q + \sigma \Z.
  \end{equation}
  We will examine the distribution of $\Z$. We denote the normalization term of $r(\cdot; \alpha)$ as  $\mathcal{Z}_\alpha$. As all but $\Z$ are constants here, we have
  \begin{align*}
    r(y; \alpha)
    &= \frac{1}{\mathcal{Z}_\alpha} \max\{ 0, \alpha q(y) - p(y) \} \\
    &= \frac{1}{\mathcal{Z}_\alpha} \max\{ 0, \alpha \mathcal{N}(y; m^q, \sigma^2 \Id) - \mathcal{N}(y; m^p, \sigma^2 \Id) \} \\
    &= \frac{1}{\mathcal{Z}_\alpha} \max\{ 0, \alpha \mathcal{N}(m^q + \sigma z; m^q, \sigma^2 \Id) - \mathcal{N}(m^q + \sigma z; m^p, \sigma^2 \Id) \} \\
    &= \frac{1}{\tilde{\mathcal{Z}}_\alpha} \max\{ 0, \alpha \mathcal{N}(z; 0, \Id) - \mathcal{N}(z - \Delta; 0, \Id) \},
  \end{align*}
  where $\tilde{\mathcal{Z}}_\alpha$ absorbs the ${1}/{\sigma}$ change-of-variables constant.

  With slight abuse of notation, we can now write $r(\cdot; \alpha)$ in terms of $z$ as
  \begin{equation}
    r(z; \alpha)
    \propto
    \mathcal{N}(z; 0, \Id)\max\left\{ 0, \alpha - \frac{\mathcal{N}(z - \Delta; 0, \Id)}{\mathcal{N}(z; 0, \Id)} \right\}.
  \end{equation}
  Notice that the ratio simplifies as
  \begin{align*}
    \frac{\mathcal{N}(z - \Delta; 0, \Id)}{\mathcal{N}(z; 0, \Id)}
    &= \exp\left( - \frac{1}{2} \left( \Vert z - \Delta \Vert^2 - \Vert z \Vert^2 \right) \right) \\
    &= \exp\left( \Delta ^\top z - \frac{\Vert \Delta \Vert^2}{2} \right) \\
    &= \exp\left( (e^\top z) \Vert \Delta \Vert - \frac{\Vert \Delta \Vert^2}{2} \right).
  \end{align*}
  We will now consider an orthogonal decomposition with $e = \Delta / \Vert \Delta \Vert$, using \cref{lem:orthogonal-breakdown} with $\Z = \U e + \Z_\perp$, for $\U = e^\top \Z$ and $\Z_\perp = P_\perp \Z$. We shorthand $P_\perp = \Id - e e^\top$.
  Notice that the ratio can be solely written in terms of $\U$. In particular, we get
  \begin{align*}
    r(z; \alpha)
    =
    r(u; \alpha)
    r(z_\perp; \alpha)
    &\propto
    \mathcal{N}(z; 0, \Id)\max\left\{ 0, \alpha - \frac{\mathcal{N}(z - \Delta; 0, \Id)}{\mathcal{N}(z; 0, \Id)} \right\} \\
    &=
    \mathcal{N}(u; 0, 1) \mathcal{N}(z_\perp; 0, P_\perp) \max\left\{ 0, \alpha - \exp\left( u \Vert \Delta \Vert - \frac{\Vert \Delta \Vert^2}{2} \right) \right\}.
  \end{align*}
  Notice that the sampling of $\Z_\perp$ is independent of $\U$. Hence we can sample $\Z_\perp$ separately.

  Now examining $\U$ and its \pdf $r(u; \alpha)$, we have
  \begin{align*}
    r(u; \alpha)
    &\propto
    \mathcal{N}(u; 0, 1) \max\left\{ 0, \alpha - \exp\left( u \Vert \Delta \Vert - \frac{\Vert \Delta \Vert^2}{2} \right) \right\} \\
    &= \max\left\{ 0, \alpha \mathcal{N}(u; 0, 1) - \mathcal{N}(u; 0, 1) \exp\left( u \Vert \Delta \Vert - \frac{\Vert \Delta \Vert^2}{2} \right) \right\}.
  \end{align*}
  Simplifying the last term, we get
  \begin{align*}
    \mathcal{N}(u; 0, 1) \exp\left( u \Vert \Delta \Vert - \frac{\Vert \Delta \Vert^2}{2} \right)
    &= \frac{1}{(2 \pi)^{1 / 2}} \exp\left( -\frac{u^2}{2} \right)
    \exp\left( u \Vert \Delta \Vert - \frac{\Vert \Delta \Vert^2}{2} \right) \\
    &= \frac{1}{(2 \pi)^{1 / 2}} \exp\left( -\frac{1}{2} \left( u^2 - 2u \Vert \Delta \Vert + \Vert \Delta \Vert^2 \right) \right) \\
    &= \frac{1}{(2 \pi)^{1 / 2}} \exp\left( -\frac{1}{2} \left( u - \Vert \Delta \Vert \right)^2 \right) \\
    &= \mathcal{N}(u - \Vert \Delta \Vert; 0, 1).
  \end{align*}

  Combining together, we have that $r(u; \alpha) = \psi_\alpha(u) \propto \max\{ 0, \alpha \mathcal{N}(u; 0, 1) - \mathcal{N}(u - \Vert \Delta \Vert; 0, 1) \}$.
  Further combining the reparameterization of $\Y$, independent sampling $\U$ and $\Z_\perp$, yields the desired result.
\end{proof}

\begin{lemma}
  Let $\psi_{\alpha}(u) \propto \max\{ 0, \alpha \mathcal{N}(u; 0, 1) - \mathcal{N}(u - v; 0, 1) \}$ for $\alpha \in (0, 1]$ and $ v > 0$.
  The \cdf of $\psi_{\alpha}$ is given by
  \begin{equation}
    \Psi_{\alpha}(u)
    =
    \begin{cases}
      \frac{\alpha \Phi(u) - \Phi(u - v)}{\alpha \Phi(\frac{\log \alpha}{v} + \frac{v}{2}) - \Phi(\frac{\log \alpha}{v} - \frac{v}{2})}
      & \textrm{if }  u < \frac{\log \alpha}{v} + \frac{v}{2} \\
      1 & \textrm{otherwise},
    \end{cases}
  \end{equation}
  where $\Phi$ corresponds to the \cdf of the standard normal distribution.
\end{lemma}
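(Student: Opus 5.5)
The plan is to compute the unnormalized mass $\int_{-\infty}^{u} \max\{0, \alpha \mathcal{N}(s;0,1) - \mathcal{N}(s-v;0,1)\}\,\dd s$ directly and then divide by its value at $u = +\infty$.

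First I locate the support of the integrand. The integrand is positive exactly when $\alpha > \exp(sv - v^2/2)$. This holds iff $s < u^\star \defeq \frac{\log\alpha}{v} + \frac{v}{2}$, by taking logs and using $v>0$; equivalently, it is \cref{eq:non-zero-res-mq} of \cref{cor:non-zero-res} with $\Vert\Delta\Vert = v$ after projecting onto $e$. So $\psi_\alpha$ vanishes on $[u^\star,\infty)$ and equals $\alpha\mathcal{N}(s;0,1)-\mathcal{N}(s-v;0,1)>0$ on $(-\infty,u^\star)$.

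Next, for $u<u^\star$, the maximum can be dropped on the whole range of integration. Integrating term by term then gives $\alpha\Phi(u)-\Phi(u-v)$, since $\int_{-\infty}^u \mathcal{N}(s-v;0,1)\,\dd s = \Phi(u-v)$ after the shift $s\mapsto s-v$. For $u\ge u^\star$, the integral stops growing and equals its value at $u^\star$, namely
\[
\mathcal{Z} = \alpha\Phi\!\left(\tfrac{\log\alpha}{v}+\tfrac{v}{2}\right)-\Phi\!\left(\tfrac{\log\alpha}{v}-\tfrac{v}{2}\right).
\]
This constant is the total mass. Dividing by it gives the stated expression for $u<u^\star$, and $\Psi_\alpha(u)=1$ otherwise.

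The only delicate point is checking that $\mathcal{Z}>0$, so that the normalization is well defined. This is immediate, because $\mathcal{Z}$ is the integral of a function that is strictly positive on the half-line $(-\infty,u^\star)$, which has positive Lebesgue measure. I would also note in passing that $\Psi_\alpha$ is continuous at $u^\star$, which confirms that the two cases agree.
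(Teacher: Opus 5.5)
Your proposal is correct and follows essentially the same route as the paper. Both arguments identify the support $u < \frac{\log\alpha}{v}+\frac{v}{2}$ from the density-ratio condition, drop the maximum there, integrate term by term to get $\alpha\Phi(u)-\Phi(u-v)$, and normalize by the value at the endpoint; your explicit checks that the normalizer is strictly positive and that the two cases agree at $u^\star$ are minor additions the paper only asserts or leaves implicit.
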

\begin{proof}
  The \pdf of $\psi_\alpha$ is given by
  \begin{equation}
    \psi_\alpha(u) = \frac{ \max \left\{ 0, \alpha \mathcal{N}(u; 0, 1) - \mathcal{N}(u - v; 0, 1) \right\}}{ \int_{-\infty}^\infty \max\left\{ 0, \alpha \mathcal{N}(u'; 0, 1) - \mathcal{N}(u' - v; 0, 1) \right\} \, \dd u' },
  \end{equation}
  where the denominator is strictly positive by definition.

  Note that from a modification of the proof of \cref{cor:non-zero-res}, $\psi_\alpha$ is non-zero iff
  \begin{align*}
    \alpha > \frac{\mathcal{N}(u - v; 0, 1)}{\mathcal{N}(u; 0, 1)}
    \iff  uv < \log \alpha + \frac{v^2}{2}
    \iff u < \frac{\log \alpha}{v} + \frac{v}{2}.
  \end{align*}
  Notably, the \cdf $\Psi_\alpha$ will be $1$ whenever $u$ is larger than this upper bound.
  Otherwise, we can consider the case where this bound holds.
  We have that the \cdf in this case will be
  \begin{align*}
    \Psi_\alpha(u)
    &= \frac{ \int_{-\infty}^u \max \left\{ 0, \alpha \mathcal{N}(u'; 0, 1) - \mathcal{N}(u' - v; 0, 1) \right\} \, \dd u' }{ \int_{-\infty}^\infty \max\left\{ 0, \alpha \mathcal{N}(u'; 0, 1) - \mathcal{N}(u' - v; 0, 1) \right\} \, \dd u' } \\
    &= \frac{ \int_{-\infty}^u \alpha \mathcal{N}(u'; 0, 1) - \mathcal{N}(u' - v; 0, 1) \, \dd u' }{ \int_{-\infty}^{\frac{\log \alpha}{v} + \frac{v}{2}} \alpha \mathcal{N}(u'; 0, 1) - \mathcal{N}(u' - v; 0, 1) \, \dd u' }.
  \end{align*}
  We calculate the numerator and denominator separately.

  For the numerator, we get
  \begin{align*}
    \int_{-\infty}^u \alpha \mathcal{N}(u'; 0, 1) - \mathcal{N}(u' - v; 0, 1) \, \dd u'
    &= \int_{-\infty}^u \alpha \mathcal{N}(u'; 0, 1) \, \dd u' - \int_{-\infty}^u \mathcal{N}(u' - v; 0, 1) \, \dd u' \\
    &= \alpha \Phi(u) - \Phi(u - v).
  \end{align*}

  Then similarly for the denominator,
  \begin{align*}
    &\int_{-\infty}^{\frac{\log \alpha}{v} + \frac{v}{2}} \alpha \mathcal{N}(u'; 0, 1) - \mathcal{N}(u' - v; 0, 1) \, \dd u' \\
    &=
    \int_{-\infty}^{\frac{\log \alpha}{v} + \frac{v}{2}} \alpha \mathcal{N}(u'; 0, 1) \, \dd u' - \int_{-\infty}^{\frac{\log \alpha}{v} + \frac{v}{2}} \mathcal{N}(u' - v; 0, 1) \, \dd u' \\
    &= \alpha \Phi \left( \frac{\log \alpha}{v} + \frac{v}{2} \right) - \Phi \left( \frac{\log \alpha}{v} - \frac{v}{2} \right).
  \end{align*}
  Together, alongside considering the constraint on $u$, yields the result.
\end{proof}

\subsection*{Examples of Residual Plot Comparison}

The residuals provided by \cref{algo:reflect-res,algo:decomposition-res} are fundamentally different.
Indeed, the reflection coupling gives a deterministic correction, while the decomposition corresponds to a distribution made from the difference of Gaussian distributions (normalized and non-zero). In \cref{fig:res-fixed-scale,fig:res-fixed-shift}, we show a 1D example of what the decomposition/LLM residual $r_\Gamma$ looks like for Gaussian distributions, \ie, the diffusion model case. \Cref{fig:res-fixed-scale} fixes the scale $c=1$ and varies the shift $v$, which corresponds to the mismatch between the drafter and primary models. \Cref{fig:res-fixed-shift} fixes $v=1$ and varies the scale $c$ (equivalently $\alpha$), which is relevant for block verification.

\begin{figure*}
  \centering
  \includegraphics[width=1.0\linewidth]{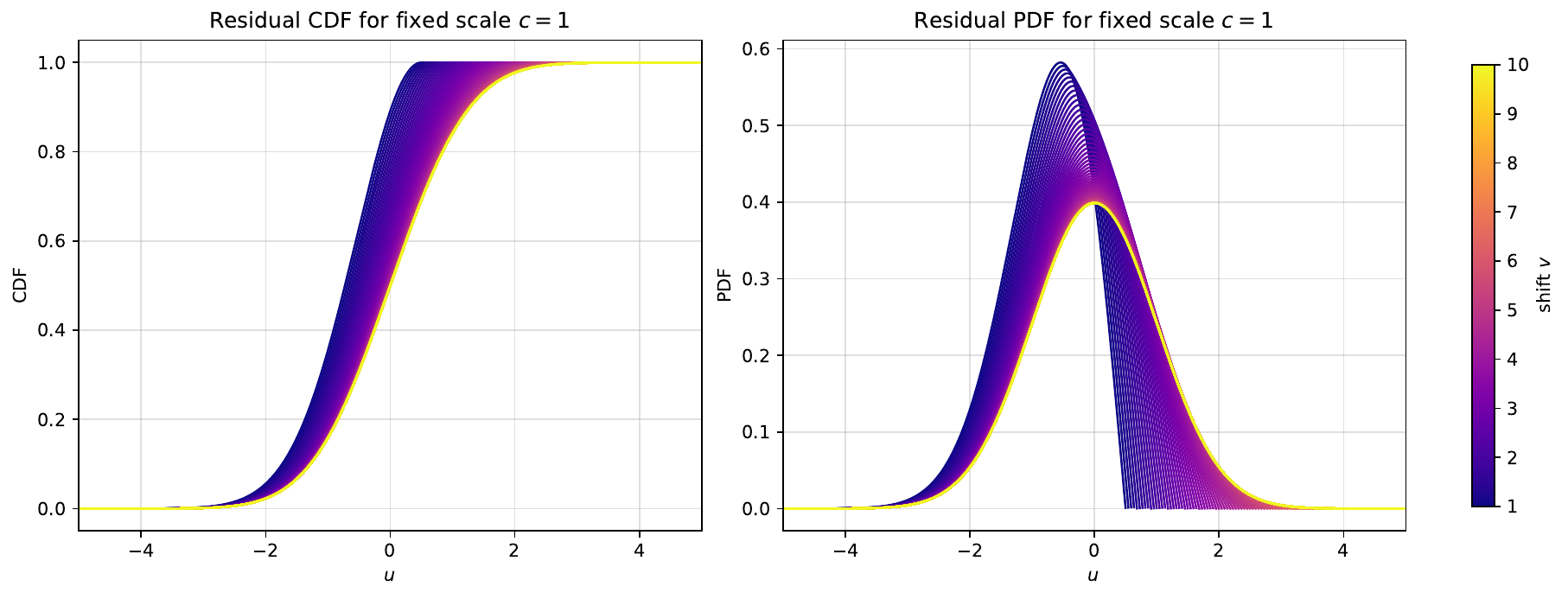}%
  \caption{Plot of the \pdf $f(u) \propto \max\{ 0, c \mathcal{N}(u; 0, 1) - \mathcal{N}(u - v; 0, 1) \}$ and its \cdf for fixed scale $c = 1$ over various shifts $v$. Normalization approximated via trapezoid integration.}%
  \label{fig:res-fixed-scale}
\end{figure*}

\begin{figure*}
  \centering
  \includegraphics[width=1.0\linewidth]{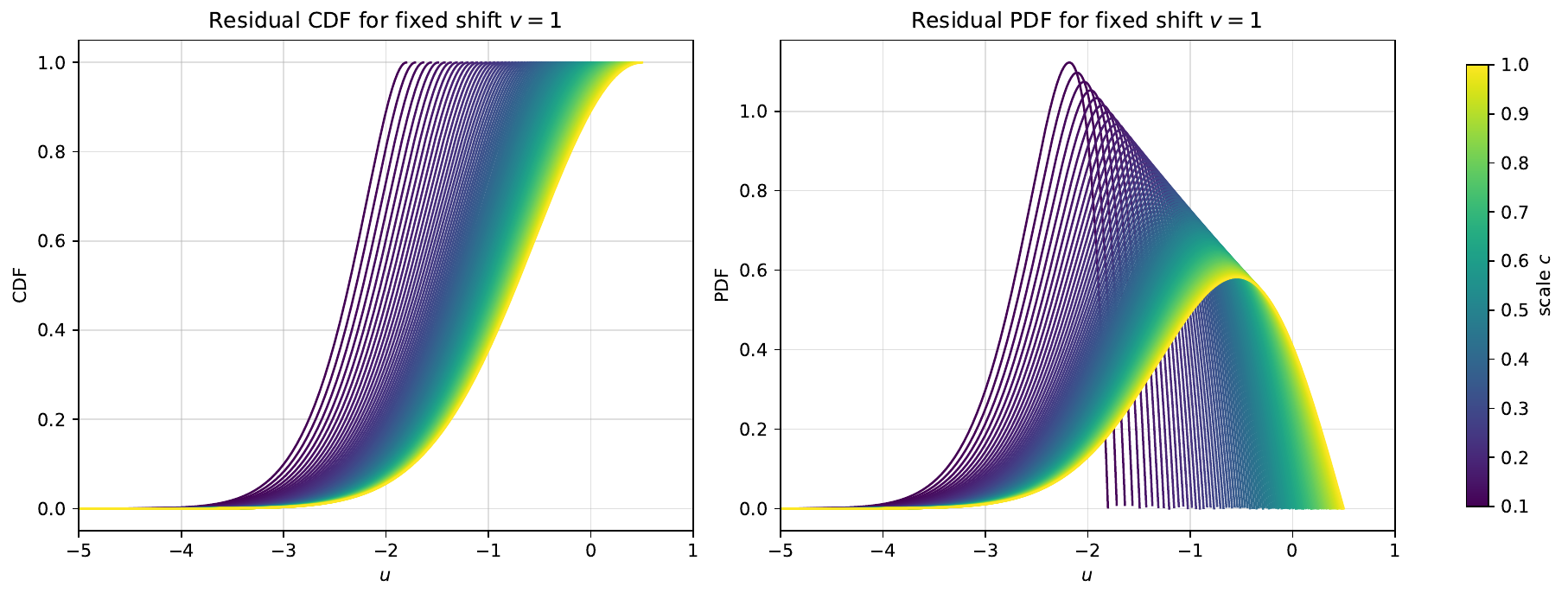}%
  \caption{Plot of the PDF $f(u) \propto \max\{ 0, c \mathcal{N}(u; 0, 1) - \mathcal{N}(u - v; 0, 1) \}$ and its \cdf for fixed shift $v = 1$ over various scales $c$. Normalization approximated via trapezoid integration.}%
  \label{fig:res-fixed-shift}
\end{figure*}

\section{Block Verification Proof}
\label{sec:block-verification-proof}

\begin{proposition}
  Given {the input of block verification \cref{algo:block-verification}}, the acceptance probability of block verification for $j\in\{1,\ldots,\gamma-1\}$ can be calculated as
  \begin{equation}
    h_j = \frac{\alpha_j \Phi\left( \frac{\log \alpha_j}{\Vert \Delta_{j+1} \Vert} + \frac{\Vert \Delta_{j+1} \Vert}{2} \right) - \Phi\left( \frac{\log \alpha_j}{\Vert \Delta_{j+1} \Vert} - \frac{\Vert \Delta_{j+1} \Vert}{2} \right)}{\alpha_j \Phi\left( \frac{\log \alpha_j}{\Vert \Delta_{j+1} \Vert} + \frac{\Vert \Delta_{j+1} \Vert}{2} \right) - \Phi\left( \frac{\log \alpha_j}{\Vert \Delta_{j+1} \Vert} - \frac{\Vert \Delta_{j+1} \Vert}{2} \right) + 1 - \alpha_j},
  \end{equation}
  where $\Delta_{j+1}$ and $\alpha_j$ are values calculated in a round of speculative diffusions, as per \cref{algo:block-verification}. For $j=\gamma$, the algorithm sets $h_\gamma=\alpha_\gamma$.
\end{proposition}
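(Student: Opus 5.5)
The plan is in two parts. First, recall the general form of the block verification acceptance probabilities from \citet{sun2025block}. Second, evaluate the one quantity in that form that is not already explicit, namely the mass of the unnormalized block residual, using the Gaussian structure and the computations behind \cref{prop:decomposition}.

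In \citet{sun2025block}, with weights $\alpha_0 = 1$ and $\alpha_j = \min\{1, \alpha_{j-1} q(\hat y_{k+j}\mid \cdot)/p(\hat y_{k+j}\mid\cdot)\}$, the stopping probability for the prefix of length $j<\gamma$ is
\begin{equation*}
  h_j = \frac{v_j}{v_j + 1 - \alpha_j},
  \qquad
  v_j = \int \max\{0, \alpha_j q(y\mid \hat y_{k+j}) - p(y \mid \hat y_{k:k+j})\}\,\dd y,
\end{equation*}
and $h_\gamma = \alpha_\gamma$. Their statement uses sums over a finite vocabulary. I would first argue that their proof carries over verbatim to densities on $\mathbb{R}^d$, because it only uses:
\begin{itemize}
  \item the telescoping identity for the residual mass, $\int \max\{0,\alpha q - p\} = \int\max\{0, p-\alpha q\} + \alpha - 1$;
  \item the inductive check that the marginal of the accepted prefix together with a residual draw from $r_\textrm{BLOCK}$ in \cref{eq:block-res} equals $q$.
\end{itemize}
Both steps replace sums by Lebesgue integrals without change. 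I also check that the ratio in \algolref{algo-line:block-coin} is exactly $q/p$ evaluated at $\hat y_{k+j}$ after the substitution $\hat y_{k+j} = m^p_{k+j-1} + \sigma_{k+j-1}\Z_j$. This substitution turns $\alpha_j$ into the recursion of \citet{sun2025block}.

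It then remains to compute $v_j$ in closed form. The two densities are $q(\cdot\mid \hat y_{k+j}) = \mathcal{N}(m^q_{k+j}, \sigma_{k+j}^2\Id)$ and $p(\cdot\mid\hat y_{k:k+j}) = \mathcal{N}(m^p_{k+j}, \sigma_{k+j}^2 \Id)$, with normalized mean gap $\Delta_{j+1}$. I would proceed in three steps.
\begin{enumerate}
  \item Reparametrize $y = m^q_{k+j} + \sigma_{k+j} z$. The Jacobian $\sigma_{k+j}^d$ cancels the normalization, giving $v_j = \int \max\{0, \alpha_j\mathcal{N}(z;0,\Id) - \mathcal{N}(z - \Delta_{j+1};0,\Id)\}\,\dd z$.
  \item Apply the ratio simplification from the proof of \cref{prop:decomposition} and the orthogonal decomposition $z = ue + z_\perp$ with $e = \Delta_{j+1}/\Vert\Delta_{j+1}\Vert$. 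By \cref{lem:orthogonal-expect-change}, the $z_\perp$ integral is $1$, leaving $v_j = \int \max\{0,\alpha_j\mathcal{N}(u;0,1) - \mathcal{N}(u - \Vert\Delta_{j+1}\Vert;0,1)\}\,\dd u$.
  \item Observe that this is exactly the normalizing constant computed in the \cdf lemma for $\psi_\alpha$, with $v = \Vert\Delta_{j+1}\Vert$ and $\alpha = \alpha_j$. By \cref{cor:non-zero-res}, the integrand is positive exactly for $u < \log\alpha_j/\Vert\Delta_{j+1}\Vert + \Vert\Delta_{j+1}\Vert/2$, which gives the stated $\Phi$-expression.
\end{enumerate}
Substituting this into $h_j = v_j/(v_j + 1 - \alpha_j)$ yields the claimed formula. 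The case $j=\gamma$ needs no computation, since there is no next step to integrate over.

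I expect the main obstacle to be the first part, namely rigorously justifying the continuous-space version of the block verification acceptance rule, rather than the Gaussian integral. To handle it, I would re-derive the key invariant directly by induction on $j$: conditional on reaching step $j$ with weight $\alpha_j$, the combination of
\begin{itemize}
  \item stopping with probability $h_j$ and then drawing from $r_\textrm{BLOCK}$ with $\alpha_j$, or
  \item continuing, which contributes the remaining mass $1 - \alpha_j$,
\end{itemize}
reproduces $q$ on the next coordinate. This check is a one-line identity once the residual mass $v_j$ has been identified as above. A minor point to check is that $v_j > 0$ whenever $\Vert\Delta_{j+1}\Vert > 0$, so that $h_j$ is well defined.
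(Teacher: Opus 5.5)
Your main line matches the paper's proof. You transfer \citet[Equation (5)]{sun2025block} to densities and evaluate $v_j=\int\max\{0,\alpha_j q-p\}$ by decomposing along $e=\Delta_{j+1}/\Vert\Delta_{j+1}\Vert$. The only difference is cosmetic. You center at $m^q_{k+j}$ and recognize the normalizer of $\psi_{\alpha_j}$ from the CDF lemma. The paper centers at $m^p_{k+j}$ and integrates over the half-space $\{z^\top e<A\}$, with $A=\log\alpha_j/\Vert\Delta_{j+1}\Vert-\Vert\Delta_{j+1}\Vert/2$, obtaining $\alpha_j\Phi(A+\Vert\Delta_{j+1}\Vert)-\Phi(A)$, which is the same expression. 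The paper justifies the continuous-space transfer only by that citation, so at its level of rigor your argument is complete. Your check that $v_j>0$ is a small improvement.

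One part of your plan would not work as stated: the self-contained induction you sketch to justify the transfer.
\begin{itemize}
\item Because $\tau-1$ is the \emph{last} index whose coin succeeds, $h_j$ is a backward hazard: the probability of stopping at $j$ given that all later coins failed. It is not the probability of rejecting $\hat y_{k+j+1}$ once the prefix through $j$ is accepted; that probability is $v_j/\alpha_j$.
\item The mass $1-\alpha_j$ never enters the law of coordinate $k+j+1$. It equals $\Pr(\tau\le j\mid\hat y_{k:k+j})$, the event that the round ends before that coordinate is produced.
\end{itemize}

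The invariant that pins down $h_j$ is $\Pr(\tau>j\mid\hat y_{k:k+j})=\alpha_j$, proved by downward induction from $h_\gamma=\alpha_\gamma$. Assume it at $j+1$ and integrate over $\hat y_{k+j+1}\sim p$. This gives $\Pr(\tau>j+1\mid\hat y_{k:k+j})=\int\min\{p,\alpha_j q\}=\alpha_j-v_j$. Hence $\Pr(\tau>j\mid\hat y_{k:k+j})=h_j+(1-h_j)(\alpha_j-v_j)$. This equals $\alpha_j$ exactly when $h_j=v_j/(v_j+1-\alpha_j)$, or equivalently when $\Pr(\tau=j+1\mid\hat y_{k:k+j})=h_j(1-\alpha_j+v_j)=v_j$.

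Only then does the next-coordinate check go through. Given $\hat y_{k:k+j}$, on $\{\tau>j\}$ the sub-densities of $\Y_{k+j+1}$ are $\min\{p,\alpha_j q\}$ (accepted draft) and $\max\{0,\alpha_j q-p\}$ (residual). They sum to $\alpha_j q$, and dividing by $\Pr(\tau>j\mid\hat y_{k:k+j})=\alpha_j$ gives $q(\cdot\mid\hat y_{k+j})$. That is the induction to write if you want the continuous-space step to stand on its own.
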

\begin{proof}
  Without loss of generality, we assume that we are examining step $j$ in a round starting at index $k$, \ie, we are considering the $k + j + 1$ sample. We denote the draft sequence by $\hat{y}_{k:k+j}$ up to this step.
  Through adapting \citet[Equation (5)]{sun2025block} to a continuous state space, we get
  \begin{equation}
    \label{eq:h_j_in_proof}
    h_j = \frac{\int_{\mathbb{R}^d} \max\{0, \alpha_j q(\hat{y} \mid \hat{y}_{k+j}) - p(\hat{y} \mid \hat{y}_{k:k+j})\} \, \dd \hat{y}}{\int_{\mathbb{R}^d} \max\{0, \alpha_j q(\hat{y} \mid \hat{y}_{k+j}) - p(\hat{y} \mid \hat{y}_{k:k+j})\} \, \dd \hat{y} + 1 - \alpha_j}
  \end{equation}
  where $d$ corresponds to the dimension of our model; we remind that the target model is Markov, but the draft is not necessarily Markov.

  We note that \cref{eq:h_j_in_proof} can be rewritten in terms of its denoising difference $\Delta_{j+1}$ (which depends on ``history'' $\hat{y}_{k:k+j}$), its variance $\sigma_{k+j}^2$, and the sampled noise of the drafter at that step $z_{k+j}$. For simplicity, we will drop all indices.

  Key to our derivation is the simplification of the numerator.
  To do so, we focus on the region in which the inner term of the integral is positive by utilizing \cref{cor:non-zero-res}. In particular, we utilize a change of coordinates via $\hat{y} = m^p + \sigma z$. We designate $S = \{ z : z^\top e < A \}$, where $A = \log \alpha / \Vert \Delta \Vert - \Vert \Delta \Vert / 2$. Thus we have,
  \begin{align*}
    \int_{\mathbb{R}^d} \max\{ 0, \alpha q(\hat{y} \mid \hat{y}_{k+j}) - p(\hat{y} \mid \hat{y}_{k:k+j}) \} \, \dd \hat{y}
    &= \int_{S} \alpha \mathcal{N}(\hat{y}; m^q, \sigma^2 \Id) - \mathcal{N}(\hat{y}; m^p, \sigma^2 \Id) \, \dd \hat{y} \\
    &= \int_{S} \alpha \mathcal{N}(z + \Delta; 0, \Id) - \mathcal{N}(z; 0, \Id) \, \dd z.
  \end{align*}
  Now we consider the orthogonal decomposition of $z = (e^\top z) e + z_\perp$, noting that $e^\top z$ appears in the condition of $S$. We denote $u = e^\top z$. Thus, we can simplify
  \begin{align*}
    & \int_{S} \alpha \mathcal{N}(z + \Delta; 0, \Id) - \mathcal{N}(z; 0, \Id) \, \dd z \\
    &= \int_{S} \alpha \mathcal{N}(u e + \Vert \Delta \Vert e + z_\perp; 0, \Id) - \mathcal{N}(u e + z_\perp; 0, \Id) \, \dd z \\
    &= \iint_{S} \alpha \mathcal{N}(u e + \Vert \Delta \Vert e + z_\perp; 0, \Id) - \mathcal{N}(u e + z_\perp; 0, \Id) \, \dd u \, \dd z_\perp \\
    &= \int_{-\infty}^{A} \int_{\mathbb{R}^d} \alpha \mathcal{N}((u + \Vert \Delta \Vert) e + z_\perp; 0, \Id) - \mathcal{N}(u e + z_\perp; 0, \Id) \, \dd z_\perp \, \dd u.
  \end{align*}
  We shorthand $P_\perp = \Id - e e^\top$. Now, breaking down each integral via \cref{lem:orthogonal-breakdown}, we have
  \begin{align*}
    &\int_{-\infty}^{A} \int_{\mathbb{R}^d} \alpha \mathcal{N}((u + \Vert \Delta \Vert) e + z_\perp; 0, \Id) - \mathcal{N}(u e + z_\perp; 0, \Id) \, \dd z_\perp \, \dd u \\
    &=
    \alpha
    \int_{-\infty}^{A} \int_{\mathbb{R}^d} \mathcal{N}((u + \Vert \Delta \Vert) e + z_\perp; 0, \Id) \, \dd z_\perp \, \dd u
    -
    \int_{-\infty}^{A} \int_{\mathbb{R}^d} \mathcal{N}(u e + z_\perp; 0, \Id) \, \dd z_\perp \, \dd u \\
    &=
    \alpha
    \int_{-\infty}^{A+\Vert \Delta \Vert} \int_{\mathbb{R}^d} \mathcal{N}( u e + z_\perp; 0, \Id) \, \dd z_\perp \, \dd u
    -
    \int_{-\infty}^{A} \int_{\mathbb{R}^d} \mathcal{N}(u e + z_\perp; 0, \Id) \, \dd z_\perp \, \dd u \\
    &=
    \alpha
    \int_{-\infty}^{A+\Vert \Delta \Vert} \int_{\mathbb{R}^d} \mathcal{N}( u; 0, 1)\mathcal{N}(z_\perp; 0, P_\perp) \, \dd z_\perp \, \dd u
    -
    \int_{-\infty}^{A} \int_{\mathbb{R}^d} \mathcal{N}( u; 0, 1)\mathcal{N}(z_\perp; 0, P_\perp) \, \dd z_\perp \, \dd u \\
    &=
    \alpha
    \int_{-\infty}^{A+\Vert \Delta \Vert} \mathcal{N}( u; 0, 1) \, \dd u
    -
    \int_{-\infty}^{A} \mathcal{N}( u; 0, 1) \, \dd u \\
    &= \alpha \Phi(A + \Vert \Delta \Vert) - \Phi(A).
  \end{align*}
  Substituting our definition of $A$, we get
  \begin{equation*}
    \int_{\mathbb{R}^d} \max\{0, \alpha q(\hat{y} \mid \hat{y}_{k+j}) - p(\hat{y} \mid \hat{y}_{k:k+j})\} \, \dd \hat{y}
    =
    \alpha \Phi\left( \frac{\log \alpha}{\Vert \Delta \Vert} + \frac{\Vert \Delta \Vert}{2} \right) - \Phi\left( \frac{\log \alpha}{\Vert \Delta \Vert} - \frac{\Vert \Delta \Vert}{2} \right).
  \end{equation*}
  Plugging this simplification into our equation of $h_j$ (with $\Delta = \Delta_{j+1}$ and $\alpha = \alpha_{j}$) proves the result.
\end{proof}

\section{Limits of Reflection-Style Deterministic Residuals for Block Verification}
\label{app:impossibility}

\begin{proposition}%
  \label{prop:impossibility_app}
  Fix $\gamma \geq 2$. There does not exist a valid deterministic residual correction of the form $t(\Z)$ for block verification that simultaneously
  \begin{enumerate}
    \item is invertible; and
    \item admits a constant Jacobian factor under change of variables.
  \end{enumerate}
\end{proposition}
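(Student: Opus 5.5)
We argue by contradiction, reducing validity of a deterministic correction to an identity between densities. Suppose block verification rejects at index $\tau$ (so $\tau \le \gamma$ and, since $\gamma \ge 2$, we may take a configuration in which $\alpha_{\tau-1} < 1$ with positive probability). Validity means that the corrected sample $\Y_{k+\tau} = m^q + \sigma\, t(\Z)$ has, conditionally on the history, exactly the density $r_\textrm{BLOCK}(y;\alpha_{\tau-1}) \propto \max\{0, \alpha_{\tau-1} q(y) - p(y)\}$ of \cref{eq:block-res}. This is the distribution required by the block verification proof of \citet{sun2025block}, adapted in \cref{sec:block-verification-proof}. The input noise $\Z$ is the draft noise conditioned on rejection. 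Its density is proportional to $\mathcal{N}(z;0,\Id)\,(1-h(z))$, where the rejection probability is built from $\min\{1,\alpha_{\tau-1}\mathcal{N}(z+\Delta;0,\Id)/\mathcal{N}(z;0,\Id)\}$ and the downstream $h_j$. By \cref{lem:no-clamp-coin} and \cref{cor:non-zero-res}, both densities depend on $z$ only through $u = e^\top z$, apart from the Gaussian factor $\mathcal{N}(z_\perp;0,\Id - ee^\top)$ (\cref{lem:orthogonal-breakdown}).

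The first step is to use assumptions (1) and (2): if $t$ is invertible with $|\det \nabla t| \equiv c$, then the change of variables gives
\[
f_{\textrm{out}}(t(z)) = f_{\textrm{in}}(z)/c .
\]
Integrating both sides over $\mathbb{R}^d$ forces $c = 1$. Hence $t$ must be a measure-preserving bijection carrying the input density onto the output density pointwise. In particular, the two densities must be equimeasurable: for every level $\lambda$, the Lebesgue measure of their superlevel sets must agree. This equimeasurability is the invariant we will test, since it is the only thing a volume-preserving bijection cannot change.

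The second step is to compute these level-set structures explicitly in the decomposition coordinates $(u, z_\perp)$. For the target residual, \cref{prop:decomposition} shows that the density of $\Z$ is $\psi_{\alpha}(u)\,\mathcal{N}(z_\perp;0,\Id-ee^\top)$. It vanishes identically on the half-space $u \ge \log\alpha/\Vert\Delta\Vert + \Vert\Delta\Vert/2$, with $\psi_\alpha(u)$ given by the difference of the Gaussians $\alpha\mathcal{N}(u) - \mathcal{N}(u-\Vert\Delta\Vert)$. The input density, by contrast, is strictly positive for all $z$ when $\alpha < 1$, because the rejection probability $1-\alpha\,\mathcal{N}(z+\Delta)/\mathcal{N}(z) > 0$ on the unclamped region of \cref{lem:no-clamp-coin}.

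The contradiction should follow from comparing support and decay. A measure-preserving bijection must map the positive-density set of full measure onto a set of full measure, but the output's support misses a half-space of infinite Lebesgue measure. An invertible map with constant Jacobian can still send $\mathbb{R}^d$ into a half-space, so this support argument alone is not enough. I would therefore strengthen it using the distribution functions $\lambda \mapsto |\{f > \lambda\}|$ and compare their behaviour as $\lambda \to 0$ along a one-dimensional slice. The input has Gaussian tail $\exp(-u^2/2)$ in both $u$-directions, while the output density $\psi_\alpha$ vanishes \emph{linearly} at the finite endpoint. Equimeasurability then fails for small $\lambda$, which I expect to show via an explicit asymptotic.

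I expect this mismatch argument to be the main obstacle: making "cannot be equimeasurable" rigorous in $d$ dimensions with the Gaussian $z_\perp$ factor. The fallback is to restrict to the class the paper targets, namely affine maps $t(z) = Az + b$ with $|\det A|$ constant, as in \Reflection. For such maps, the image of a Gaussian is Gaussian, while $r_\textrm{BLOCK}$ is not Gaussian: its support is a half-space by \cref{cor:non-zero-res}. This finishes at least that case, and also explains why the case $\gamma = 1$ with $\alpha = 1$ escapes, since reflection works there via the symmetry of the input and target.
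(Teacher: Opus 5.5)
\textbf{There is a genuine gap.} Your whole contradiction rests on the claim that the rejection-conditioned input is strictly positive when $\alpha<1$, and this is false. Rejection at index $\tau$ needs the coin $c_\tau=0$, which has probability $1-h_\tau$. Moreover $h_\tau=1$ whenever $\alpha_\tau=1$: directly for $\tau=\gamma$, and via $h_\tau=v_\tau/v_\tau$ for $\tau<\gamma$. By \cref{lem:no-clamp-coin}, $\alpha_\tau=1$ exactly on the half-space $\{e^\top z\le \log\alpha_{\tau-1}/\Vert\Delta\Vert-\Vert\Delta\Vert/2\}$, so the input density vanishes there.

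In the clean configuration $\tau=\gamma$, $c_{\gamma-1}=1$, the input density is $\propto\max\{0,\mathcal{N}(z;0,\Id)-\alpha\,\mathcal{N}(z+\Delta;0,\Id)\}$. You should restrict to this configuration: for $\tau<\gamma$ the downstream coins depend on $\Z_\tau$ through $\Delta_{\tau+1}$ and the later drafts, so nothing factors through $u$. This density has the same shape as $r_\textrm{BLOCK}$: half-space support, linear vanishing at the boundary, and a single Gaussian tail. So there is no support or decay mismatch to find. Indeed, at $\alpha=1$ the two densities are exact mirror images under $u\mapsto\Vert\Delta\Vert-u$ in $m^q$-centred coordinates, which is why \Reflection is valid.

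Two further steps also fail:
\begin{itemize}
\item Integrating $f_{\textrm{out}}(t(z))=f_{\textrm{in}}(z)/c$ gives $1/c=1/c$, so $c=1$ is not forced (take $z\mapsto 2z$). The correct invariant is $|\{f_{\textrm{out}}>\lambda\}|=c\,|\{f_{\textrm{in}}>c\lambda\}|$.
\item The affine fallback treats the input as Gaussian, which it is not in your own formulation. Read literally, it would also exclude the reflection at $\alpha=1$, so it proves too much.
\end{itemize}

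\textbf{How the paper gets its contradiction.} The paper uses a different criterion. At $\tau=\gamma$ with $c_{\gamma-1}=1$, it demands that the whole mixture (draft with probability $h_\gamma(\Z)$, $t(\Z)$ otherwise) be $\mathcal{N}(0,\Id)$. The part the residual must then supply is
$\mathcal{N}(z;0,\Id)-\min\{\mathcal{N}(z-\Delta_\gamma;0,\Id),\alpha_{\gamma-1}\mathcal{N}(z;0,\Id)\}\ge(1-\alpha_{\gamma-1})\mathcal{N}(z;0,\Id)$, which is positive everywhere. Meanwhile the pushforward $C\,\mathcal{N}(t^{-1}(z);0,\Id)(1-h_\gamma(t^{-1}(z)))$ vanishes on the image under $t$ of the clamped half-space. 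So the strict positivity you wanted lives on the target side, and a support argument finishes.

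\textbf{Why you cannot import that argument.} Your conditional criterion (rejection-conditioned noise pushed onto $r_\textrm{BLOCK}$) is the one that \cref{algo:block-verification} with the stochastic residual actually meets. For $\gamma=2$, conditionally on $\hat{y}_1$ and $c_1=1$, position $2$ is distributed as $\min\{p,\alpha_1 q\}+\frac{1-m}{\alpha_1-m}\max\{0,\alpha_1 q-p\}$. Here $p,q$ are the step-$2$ densities and $m=\int\min\{p,\alpha_1 q\}$. Wherever $\alpha_1 q<p$ this equals $\alpha_1 q\neq q$. So under your criterion both supports are half-spaces, and you need a quantitative invariant that moves with $\alpha$.

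\textbf{What would complete your route.} A second-order level-set comparison works. With $R=\sqrt{2\log(1/\lambda)}$ one has $|\{f>\lambda\}|=\frac{1}{2}V_dR^d+s\,V_{d-1}R^{d-1}+O(R^{d-2})$. Here $V_j$ is the volume of the unit ball in $\mathbb{R}^j$, and $s$ is the signed distance from the centre of the dominant Gaussian to the boundary hyperplane. In $m^q$-centred coordinates, $s_{\textrm{in}}=\Vert\Delta\Vert-u^\star$ and $s_{\textrm{out}}=u^\star$, where $u^\star=\log\alpha/\Vert\Delta\Vert+\Vert\Delta\Vert/2$. The leading order forces $c=1$, and the next order forces $u^\star=\Vert\Delta\Vert/2$, i.e.\ $\alpha=1$.
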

\begin{proof}
  Suppose that $\gamma \geq 2$ and $c_{\gamma - 1} = 1$. To derive a contradiction, assume there exists a deterministic residual correction $t$ satisfying (i) and (ii).
  We note that for a residual to be valid, we additionally require $t$ to yield the primary model $q$ for every possible value of $\alpha_{\gamma-1} \in (0,1]$.

  From our assumptions, for the $\gamma$ index, we have that
  \begin{equation}
    \Y_{k+\gamma} = m^q_{k+\gamma-1} + \sigma_{k+\gamma-1}
    \begin{cases}
      \Z + \Delta_{\gamma} & \textrm{with probability } h_{\gamma}(\Z) \\
      t(\Z) & \textrm{with probability } 1 - h_{\gamma}(\Z)
    \end{cases},
  \end{equation}
  where with some abuse of notation we define $h_{\gamma}(z) = \min\left\{ 1, \alpha_{\gamma - 1} \mathcal{N}(z + \Delta_{\gamma}; 0 , \Id) / \mathcal{N}(z; 0 , \Id) \right\}$ (note that $h_{\gamma} = \alpha_{\gamma}$ in \cref{algo:block-verification}), which depends on the randomization of the last draft element.
  We consider where the draft and target diffusions are non-trivially different:  $\Vert \Delta_{\gamma} \Vert \neq 0$.
  To ensure that the output of block verification follows the distribution of $q$, we require
  \begin{equation}
    \label{eq:correction-law}
    \begin{cases}
      \Z + \Delta_{\gamma} & \textrm{with probability } h_{\gamma}(\Z) \\
      t(\Z) & \textrm{with probability } 1 - h_{\gamma}(\Z)
    \end{cases}
    \sim  \mathcal{N}(0, \Id).
  \end{equation}

  Let us evaluate the law of the left-hand side. By assumption (i), the inverse map $t^{-1}$ exists, and by assumption (ii) the change of variables through $t^{-1}$ contributes a constant Jacobian factor $C > 0$.
  \begin{align*}
    &
    \int \llbracket z = z' + \Delta_{\gamma} \rrbracket \mathcal{N}(z'; 0, \Id) h_{\gamma}(z') \dd z'
    +
    \int \llbracket z = t(z') \rrbracket \mathcal{N}(z'; 0, \Id) (1 - h_{\gamma}(z')) \dd z' \\
    &=
    \mathcal{N}(z - \Delta_{\gamma}; 0, \Id) h_{\gamma}(z- \Delta_{\gamma})
    +
    C \mathcal{N}(t^{-1}(z); 0, \Id) (1 - h_{\gamma}(t^{-1}(z))) \\
    &=
    \min\left\{ \mathcal{N}(z - \Delta_{\gamma}; 0, \Id), \alpha_{\gamma - 1}\mathcal{N}(z; 0, \Id) \right\} \\
    & \quad +
    C \max \left\{ 0, \mathcal{N}(t^{-1}(z); 0, \Id) - \alpha_{\gamma - 1} \mathcal{N}(t^{-1}(z) + \Delta_{\gamma}; 0, \Id) \right\},
  \end{align*}
  where $C > 0$ is the constant Jacobian factor.
  Now matching it with the law on the RHS, we have the equation
  \begin{align}
    &\max\left\{ \mathcal{N}(z; 0, \Id) - \mathcal{N}(z - \Delta_{\gamma}; 0, \Id), (1 - \alpha_{\gamma - 1})\mathcal{N}(z; 0, \Id) \right\} \nonumber \\
    &\quad =
    C \max \left\{ 0, \mathcal{N}(t^{-1}(z); 0, \Id) - \alpha_{\gamma - 1} \mathcal{N}(t^{-1}(z) + \Delta_{\gamma}; 0, \Id) \right\}.
  \end{align}
  For this to be a valid residual, we require this to be true for all $z$ and for every $\alpha_{\gamma - 1} \in (0,1)$.

  Suppose $\alpha_{\gamma - 1} < 1$. Then the LHS is strictly positive, \ie, $(1 - \alpha_{\gamma - 1})\mathcal{N}(z; 0, \Id) > 0$.
  Thus, we require for all $z$
  \begin{equation}
    \max \left\{ 0, \mathcal{N}(t^{-1}(z); 0, \Id) - \alpha_{\gamma - 1} \mathcal{N}(t^{-1}(z) + \Delta_{\gamma}; 0, \Id) \right\} > 0.
  \end{equation}
  Hence, we require $\mathcal{N}(t^{-1}(z); 0, \Id) > \alpha_{\gamma - 1} \mathcal{N}(t^{-1}(z) + \Delta_{\gamma}; 0, \Id)$ for all $z$. This can be shown to be equivalent to the condition
  \begin{equation}
    \label{eq:deterministic-cond}
    \exp\left( \frac{1}{2} (2 \Delta_{\gamma}^{\T}t^{-1}(z) + \Vert \Delta_{\gamma} \Vert^2) \right) > \alpha_{\gamma - 1}.
  \end{equation}
  However, because $t$ is invertible, the image of $t^{-1}$ must cover all possible $z$ values. The quantity
  $
  \exp\left( \frac{1}{2} (2 \Delta_{\gamma}^{\T}x + \Vert \Delta_{\gamma} \Vert^2) \right)
  $
  ranges over $(0, \infty)$ as $x$ varies over $\mathbb{R}^d$, so \cref{eq:deterministic-cond} cannot hold for every $x = t^{-1}(z)$ and every $\alpha_{\gamma - 1} \in (0,1]$. This contradiction proves that no deterministic residual correction satisfying (i) and (ii) exists.
\end{proof}

\section{Block Verification Complexity}%
\label{app:block-complexity}

\begin{proposition}%
  \label{thm:complexity-app}
  Assume the use of the Frozen Drafter and $\Tr(\cov)[q_{\textup{data}}] \leq \beta d$.
  For a fixed draft size $\gamma$, let $\rho(\gamma) \in [0,1]$ denote the ratio of the expected number of rounds required to complete speculative diffusion with block verification relative to sample verification.
  Taking $\gamma \asymp (\nicefrac{K}{\beta \delta d})^{1/3}$, the expected number of parallel target model calls under block verification is at most $\bigoh({\rho}(\gamma) K^{2/3} (\beta d \delta)^{1/3})$.
\end{proposition}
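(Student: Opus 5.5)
The argument reduces the block-verification bound to the sample-verification bound of \citet{hu2025diffusion} through the ratio $\rho(\gamma)$. Throughout I take the Frozen Drafter with a fixed draft size $\gamma$, and write $N_{\textup{S}}$ and $N_{\textup{B}}$ for the random number of rounds of \cref{algo:abstract-spec-diffusion} under \TokenVerification and \BlockVerification, respectively.

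\textbf{Step 1: calls per round.} Each round makes a constant number of \emph{parallel} target model calls. There is one call to compute $s^q_{1-t_k}(y_k)$ for the Frozen Drafter. There is one parallel batch of at most $\gamma+1$ evaluations on \algolref{algo-line:parallel-verification-primary}, and this batch is what supplies the extra sample when every draft is accepted. The residual step costs nothing extra: \Decomposition reuses $m^q$ from the verification batch, which is the single-evaluation property established in \cref{prop:decomposition}. Hence the number of parallel calls is at most $c\, N$ for an absolute constant $c$, under either verification rule. The expected number of parallel calls is therefore bounded, up to constants, by $\expect[N_{\textup{B}}]$.

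\textbf{Step 2: the ratio.} By definition, $\rho(\gamma) = \expect[N_{\textup{B}}]/\expect[N_{\textup{S}}]$. The fact that $\rho(\gamma) \le 1$ follows from \citet[Theorem 2]{sun2025block}. For it to apply here, I need one check: its optimality argument must transfer to continuous state spaces with Gaussian kernels. It does, because the derivation of $h_j$ in \cref{sec:block-verification-proof} reproduces their Equation (5). I also need the comparison to hold round by round from any starting index $k$. I would justify this by stochastic dominance of the accepted length, summed over rounds via a renewal/coupling argument on the position $k$. This gives $\expect[N_{\textup{B}}] = \rho(\gamma)\,\expect[N_{\textup{S}}]$.

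\textbf{Step 3: invoke the sample-verification bound.} Under the assumption $\Tr(\cov)[q_{\textup{data}}] \le \beta d$, and with $\gamma \asymp (K/(\beta\delta d))^{1/3}$, \citet{hu2025diffusion} show that $\expect[N_{\textup{S}}] = \bigoh(K^{2/3}(\beta d\delta)^{1/3})$. Their analysis is for reflection-based sample verification. Its input, however, is only the per-step acceptance probability $\int \min\{p,q\}$, and that quantity is identical for the \Decomposition residual, because it depends only on the marginal laws produced by \TokenVerification. Combining this with Steps 1 and 2 gives the claimed bound $\bigoh(\rho(\gamma)K^{2/3}(\beta d\delta)^{1/3})$.

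\textbf{Main obstacle.} The delicate step is Step 2, namely applying the optimality theorem of \citet{sun2025block} in the present setting. That result compares a single round of verification, whereas here the rounds are coupled through random restart points $k$. The Frozen Drafter and the target also change with $k$. I would first try an inductive coupling argument: show that the expected number of remaining rounds, viewed as a function of the current index, is nonincreasing. Together with the per-round dominance of the accepted length from \citet{sun2025block}, this would give $\expect[N_{\textup{B}}] \le \expect[N_{\textup{S}}]$. If that fails, I would fall back on defining $\rho(\gamma)$ directly as the stated ratio, as in the statement itself, so that only $\rho \le 1$ needs the cited theorem.
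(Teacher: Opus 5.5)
Your skeleton matches the paper's proof: at most two target calls per round, $\mathbb{E}[N_{\textup{B}}]=\rho(\gamma)\,\mathbb{E}[N_{\textup{S}}]$ with $\rho(\gamma)\le 1$ taken from \citet{sun2025block}, the bound $\mathbb{E}[N_{\textup{S}}]\lesssim K/\gamma+\sqrt{K\gamma\delta\beta d}$ from the proof of \citet[Theorem 18]{hu2025diffusion}, and $\gamma$ chosen to balance the two terms. As you note, the $\bigoh$ bound needs only the definition of $\rho(\gamma)$; all the real content of Step 2 is the claim $\rho(\gamma)\le 1$.

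Your remark on transferring Hu et al.\ from \Reflection to \Decomposition covers a point the paper skips. The precise reason it holds is this: on the rejection event, any valid sample-verification residual is forced to output the law $r_\Gamma$. Hence the joint law of the rejection index and the restart state, and so of $N_{\textup{S}}$, is identical for both residuals. It is not just that $\int\min\{p,q\}$ agrees.

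Where you diverge is the multi-round comparison, and your first-choice argument there would not work. The expected number of remaining rounds depends on the state $(k,y_k)$, not on $k$ alone. The restart state $Y_{k+\tau}$ also has different conditional laws under the two verification rules. Finally, with the Frozen Drafter, draft quality depends on how rounds align with the schedule, so there is no general reason that ``starting later is never worse.'' Per-round dominance of the accepted length therefore does not simply add up over rounds.

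The paper avoids this by using the multi-round form of the comparison. It applies \citet[Lemma 5]{sun2025block}, conditioned on the context and output and then marginalized. This gives $\Pr[M_{\textup{S}}(i)\ge l]\le\Pr[M_{\textup{B}}(i)\ge l]$ for every $i$ and $l$, where $M(i)$ is the number of denoising steps completed after $i$ rounds. Since $\{N>i\}=\{M(i)<K\}$, the tail-sum identity $\mathbb{E}[N]=\sum_{i\ge 0}\Pr[M(i)<K]$ gives $\mathbb{E}[N_{\textup{B}}]\le\mathbb{E}[N_{\textup{S}}]$ directly. Drop the monotone-in-$k$ induction and cite this multi-round statement instead. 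Your fallback then coincides with the paper's argument.
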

\begin{proof}
  The result follows from a combination of prior results.
  We will first establish that the number of rounds to complete speculative diffusion is smaller when utilizing block verification rather than sample verification. Through this connection between sample and block verification, we utilize a prior result that directly bounds the number of rounds required to complete speculative diffusion with sample verification.
  Finally, we note that the expected number of primary calls is directly proportional to the number of rounds of speculative diffusion---specifically 2, one call for the Frozen Drafter and one (parallel) call for the verification.

  Firstly, from \citet[Theorem 2]{sun2025block} we know that the expected number of decoded samples after $i$ many rounds of block verification speculative decoding is greater than its corresponding sample algorithm.
  Let $\K_\token(i)$ and $\K_\textrm{block}(i)$ be random variables that correspond to the number of denoising steps completed by speculative diffusion with sample and block verification, respectively. From \citet[Lemma 5]{sun2025block}, we know that for any $l \geq 0$
  \begin{equation}
    \label{eq:decoding-token-block-inequality}
    \Pr[\K_\token(i) \geq l]
    \leq
    \Pr[\K_\textrm{block}(i) \geq l].
  \end{equation}
  This result follows by marginalizing \citet[Lemma 5]{sun2025block} over the context and output.

  We will connect this result to the number of rounds of speculative diffusion required to be completed to complete the full denoising of $K$ fixed steps.
  The number of rounds required for denoising will be denoted by the random variable $\R_\token$ and $\R_\textrm{block}$.
  It follows that we can connect the random variable associated with $\K_\cdot$ and $\R_\cdot$ by the following events:
  \begin{equation}
    \R_{a} > i \iff \K_a(i) < K,
  \end{equation}
  where $a \in \{\token, \textrm{block}\}$.
  As a result, we have $\Pr[\R_{a} > i] = \Pr[\K_a(i) < K]$.
  Note that $\R_a$ is clearly a non-negative integer valued random variable. As a result, the expected number of rounds to denoise will be determined by a sum of tail probabilities. This gives us
  \begin{align*}
    \expect[\R_\token]
    &= \sum_{i=0}^{\infty} \Pr[\R_\token > i] \\
    &= \sum_{i=0}^{\infty} \Pr[\K_\token(i) < K] \\
    &= \sum_{i=0}^{\infty} (1 - \Pr[\K_\token(i) \geq K]) \\
    &\geq \sum_{i=0}^{\infty} (1 - \Pr[\K_\textrm{block}(i) \geq K]) \\
    &= \sum_{i=0}^{\infty} \Pr[\K_\textrm{block}(i) < K] \\
    &= \sum_{i=0}^{\infty} \Pr[\R_\textrm{block} > i]
    =  \expect[\R_\textrm{block}],
  \end{align*}
  where the inequality directly comes from \cref{eq:decoding-token-block-inequality}.

  Hence, for a fixed draft size $\gamma$, there exists a $\rho(\gamma) \in [0, 1]$ such that
  \begin{equation}
    \label{eq:relate-R-token-block}
    \rho(\gamma) \expect[\R_\token]
    =
    \expect[\R_\textrm{block}].
  \end{equation}
  Now with our given assumptions that $\Tr(\cov[q_{\textup{data}}]) \leq \beta d$, from the proof of \citet[Theorem 18]{hu2025diffusion}, we have
  \begin{equation}
    \expect[\R_\token] \lesssim \frac{K}{\gamma} + \sqrt{K\gamma \delta \beta d}.
  \end{equation}
  With \cref{eq:relate-R-token-block}, this immediately gives us
  \begin{equation}
    \label{eq:block-round-bound}
    \expect[\R_\textrm{block}] \lesssim \rho(\gamma) \left( \frac{K}{\gamma} + \sqrt{K\gamma \delta \beta d} \right).
  \end{equation}
  Finally, taking $\gamma = \lceil (\nicefrac{K}{\delta \beta d})^{1/3} \rceil$ yields the desired result.
\end{proof}

\begin{remark}
  It should be noted that \cref{thm:complexity} simply converts the rate for sample verification into its block verification counterpart. However, it does not directly account for circumstances where the improvement in acceptance---quantified by $\rho(\gamma)$---can be extremely advantageous ($\rho(\gamma)$ small) for certain $\gamma$. 
  Indeed, one can express the expected number of rounds for block verification while optimizing over the draft size $\gamma$. From \cref{eq:block-round-bound} we have
  \begin{equation}
    \label{eq:block-round-explicit-gamma}
    \min_\gamma \, \expect[\R^\gamma_\textrm{block}] \lesssim \left( K^{2/3} (\beta d \delta)^{1/3} \right) \min_\gamma \left\{ \rho(\gamma) \left( \frac{1}{\gamma} \left( \frac{K}{\beta d \delta} \right)^{1/3} + \sqrt{\gamma\left( \frac{\beta d \delta} {K}\right)^{1/3}} \right) \right\},
  \end{equation}
  where we make the dependence of $\gamma$ for $\R^\gamma_\textrm{block}$ explicit.

  It becomes apparent from \cref{eq:block-round-explicit-gamma} that the choice of $\gamma = \lceil (\nicefrac{K}{\delta \beta d})^{1/3} \rceil$ need not be optimal, and if $\rho(\gamma)$ has certain structure, block verification yields a more advantageous rate than \cref{thm:complexity} for an alternative draft size.
\end{remark}

\section{Frozen and Free Drafters}
\label{app:drafters}

In the following section, we prove \cref{prop:speedup}.
We begin by formally defining the block efficiency of speculative diffusion. Let $\be(k)$ denote the average number of samples returned when the input of a round of speculative diffusion $y_{k}$ corresponds to the $k$th denoising step. Note that $\be(k) \in [1, \gamma + 1]$---at worst we denoise a single (corrected) step and at best we accept the full draft and denoise an extra step with the primary model. We note that $\be$ depends on the drafter $p$, the primary model $q$, and the exact acceptance/verification choice of the speculative diffusion algorithm. For the purpose of this section, we will leave the dependence of these on $\be(k)$ implicit.

To calculate the speedup of speculative diffusion using different drafters (in idealized conditions), we consider a modification of the following formulation explored in \citet{huang2025moesd}:
\begin{equation}
  \label{eq:speedup}
  \texttt{Speedup}(k) = \be(k) \left( \frac{\texttt{Time}(p, \gamma)}{\texttt{Time}(q, 1)} + \frac{\texttt{PTime}(q, \gamma + 1)}{\texttt{Time}(q, 1)} + \frac{\texttt{Time}(r, 1)}{\texttt{Time}(q, 1)} \right)^{-1},
\end{equation}
where $\texttt{Time}(\cdot, c)$ corresponds to the wall-clock time of evaluating the drafter, primary model, or residual calculation sequentially for $c$ values. $\texttt{PTime}$ corresponds to the parallel time.

For LLM speculative decoding, the drafter typically scales linearly in wall-clock time \wrt its return size, \ie, $\texttt{Time}(p, \gamma) = \gamma \texttt{Time}(p, 1)$.

In terms of dependencies, only the first term in \cref{eq:speedup} affects the speedup of speculative diffusion. The other terms can be considered ``fixed'' in our analysis of drafters.
One typically wishes to choose a drafter $p$ which maximizes $\be(k)$ while minimizing $\texttt{Time}(p, \gamma)$.

In an ideal setting, one can assume that the wall-clock time for computing the residual is negligible $\texttt{Time}(r, 1) \approx 0$. Furthermore, we can assume that the wall-clock time for computing $\gamma + 1$ parallel calls to the primary model is roughly equivalent to a single sequential primary model call ${\texttt{PTime}(q, \gamma + 1)} \approx {\texttt{Time}(q, 1)}$. When we state ``ideal conditions'', we assume that:
\begin{equation}
  \epsilon = \frac{\texttt{PTime}(q, \gamma + 1) - \texttt{Time}(q, 1)}{\texttt{Time}(q, 1)} + \frac{\texttt{Time}(r, 1)}{\texttt{Time}(q, 1)},
\end{equation}
for some small constant $\epsilon > 0$.

\begin{remark}
  The conditions are close to ideal whenever the cost of the primary model dominates all other times and ${\textup{\texttt{PTime}}(q, \gamma + 1)} \approx {\textup{\texttt{Time}}(q, 1)}$.
\end{remark}

\begin{proposition}%
  \label{prop:speedup-app}
  Let $\be(k)$ denote the block efficiency of a speculative diffusions algorithm with the Frozen Drafter at the $k$th denoising step. The speedup compared to no speculation  in ideal conditions is equal to $\be(k) / (2 + \epsilon)$.
\end{proposition}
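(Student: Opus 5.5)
The argument is a direct substitution into the speedup formula \cref{eq:speedup}. The plan is to evaluate the three normalized time terms for the Frozen Drafter under the ideal-conditions assumption and then simplify.

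First, the drafter term $\texttt{Time}(p,\gamma)/\texttt{Time}(q,1)$. By \cref{eq:frozen-draft-drift}, each round of drafting requires exactly one evaluation of the target score $s^q_{1-t_k}(y_k)$. The remaining $\gamma$ drafting steps only apply the affine drift $-f\hat y + \tfrac{1+\varepsilon^2}{2}g^2 s^q$ with the score held fixed, and add Gaussian noise. Under ideal conditions the target network cost dominates all other costs, so I will set these cheap operations to zero. This gives $\texttt{Time}(p,\gamma) = \texttt{Time}(q,1)$, so the first term equals $1$.

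Second, the remaining two terms. I will rewrite them as
\[
\frac{\texttt{PTime}(q,\gamma+1)}{\texttt{Time}(q,1)} + \frac{\texttt{Time}(r,1)}{\texttt{Time}(q,1)} = 1 + \epsilon .
\]
This is exactly the definition of $\epsilon$ in the ideal-conditions assumption: I add and subtract $\texttt{Time}(q,1)$ in the numerator of the parallel-time term. The bracket in \cref{eq:speedup} therefore equals $1 + (1+\epsilon) = 2+\epsilon$.

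Finally, substituting into \cref{eq:speedup} gives $\texttt{Speedup}(k) = \be(k)/(2+\epsilon)$. This is to be read against the baseline where one non-speculative step costs $\texttt{Time}(q,1)$ and produces one denoising step. Here $\be(k)$ is the expected number of denoising steps produced in a round starting at step $k$, so the ratio of steps per unit time compares directly with that baseline. Letting $\epsilon\to 0$ recovers the informal statement \cref{prop:speedup}, namely $\be(k)/2$.

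The only delicate step is justifying that $\texttt{Time}(p,\gamma)$ equals exactly one target call and not more. The key observation is that the frozen score is reused across all $\gamma$ draft steps, so no further network calls are made. Everything else is an algebraic identity that follows from how $\epsilon$ is defined.
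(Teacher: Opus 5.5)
Your proposal is correct and matches the paper's proof. Both set $\texttt{Time}(p,\gamma)=\texttt{Time}(q,1)$ for the Frozen Drafter's single target call and use the definition of $\epsilon$ to reduce the remaining terms of \cref{eq:speedup} to $1+\epsilon$, giving $\be(k)/(2+\epsilon)$; you simply spell out the add-and-subtract step that the paper leaves implicit.
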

\begin{proof}
  In ideal conditions, the cost of the Frozen Drafter for an entire draft proposal of size $\gamma$ is the single sequential call of the primary model. Thus we have $\texttt{Time}(p, \gamma) = {\texttt{Time}(q, 1)}$. Taking the other ideal conditions into account, \cref{eq:speedup} simplifies to $\be(k) / (2 + \epsilon)$, as required.
\end{proof}

The Free Drafter does not have the same $\approx 1/2$ scaling in ideal conditions.

\begin{proposition}%
  \label{prop:speedup-free}
  Let $\be(k)$ denote the block efficiency of a speculative diffusion algorithm with the Free Drafter at the $k$th denoising step, for $k \geq 2$. The speedup compared to no speculative diffusion in ideal conditions is equal to $\be(k) / (1 + \epsilon' + \epsilon)$, where $\epsilon' > 0$ is the cost of computing Free Drafter, \ie, $\epsilon' = \textup{\texttt{Time}}(p, \gamma) / \textup{\texttt{Time}}(q, 1)$.
\end{proposition}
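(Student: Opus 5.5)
The plan is to mirror the proof of \cref{prop:speedup-app} and substitute the cost of the Free Drafter directly into the speedup formula \cref{eq:speedup}. The only term that changes relative to the Frozen Drafter is the drafting cost $\texttt{Time}(p, \gamma)/\texttt{Time}(q, 1)$. So the work is to identify this quantity for the Free Drafter when $k \geq 2$, and then collect the remaining terms using the definition of ideal conditions.

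First, I will argue that for any round whose starting index is not the very first one, the Free Drafter makes no call to the target model. The score $s^q_{1-t_k}(\cdot)$ it needs is taken from the previous round's parallel verification pass (\algolref{algo-line:parallel-verification-primary} in \cref{algo:abstract-spec-diffusion}). In the case of a rejection, this is the score at $\hat{y}_k$ used to form the target mean $m^q_{k-1}$. In the case of a full accept, it is the score used to sample the extra step $\Y_{k+\tau}$. Either way, the drafting cost is only the cost of evaluating the extrapolated drift \cref{eq:frozen-draft-drift} over $\gamma$ steps. By definition this is $\epsilon' \,\texttt{Time}(q,1)$, so $\texttt{Time}(p,\gamma)/\texttt{Time}(q,1) = \epsilon'$. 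The hypothesis $k \geq 2$ is what excludes the initial round, where an explicit target call is unavoidable.

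Second, I will rewrite the remaining two terms of \cref{eq:speedup} using the ideal-conditions definition of $\epsilon$:
\[
\frac{\texttt{PTime}(q,\gamma+1)}{\texttt{Time}(q,1)} + \frac{\texttt{Time}(r,1)}{\texttt{Time}(q,1)} = 1 + \epsilon .
\]
Substituting both pieces gives $\texttt{Speedup}(k) = \be(k)\,(\epsilon' + 1 + \epsilon)^{-1}$, which is the claim.

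The main obstacle is conceptual rather than computational. One has to justify that reusing the cached score is free in wall-clock terms, meaning it adds no hidden target evaluation to the drafting cost. This amounts to checking the bookkeeping in \cref{algo:abstract-spec-diffusion}: the parallel verification already evaluates $m^q$ at every draft position, including position $\gamma_k + 1$, so whichever index starts the next round, its score has already been computed. I will state this explicitly and treat the residual sampling cost as absorbed into $\epsilon$.
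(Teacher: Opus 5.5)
Your proposal is correct and is essentially the paper's own argument: the Free Drafter's term in \cref{eq:speedup} equals $\epsilon'$ by definition, since no fresh target call is made after the first round. The ideal-conditions definition of $\epsilon$ then collapses the parallel-verification and residual terms to $1+\epsilon$, which gives the denominator $1+\epsilon+\epsilon'$. There is one minor bookkeeping slip that does not affect the result: the recycled score at $\hat{y}_k$ forms $m^q_k$ (not $m^q_{k-1}$), and after a full accept the new round at index $k$ reuses the lagged score computed at $\hat{y}_{k-1}$, so the score at every new starting index has not in fact already been computed.
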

\begin{proof}
  In ideal conditions, the cost of Free Drafter is free for an entire draft proposal (the cost of the primary model has been absorbed by the previous round of speculative diffusion). Thus $\texttt{Time}(p, \gamma) / \texttt{Time}(q, 1) = \epsilon'$. Thus the speedup will be simplified to $\be(k) / (1 + \epsilon + \epsilon')$.
\end{proof}

This provides a simple corollary.

\begin{corollary}%
  \label{cor:drafter-compare}
  Under ideal conditions, speculative diffusion with the Frozen Drafter is faster than speculative diffusion with the Free Drafter if
  \begin{equation}
    \frac{\be_\textup{frozen}(k)}{\be_{\textup{free}}(k)} > \frac{2 + \epsilon}{1 + \epsilon + \epsilon'}.
  \end{equation}
\end{corollary}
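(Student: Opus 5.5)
The plan is to compare the two speedup expressions from \cref{prop:speedup-app} and \cref{prop:speedup-free} directly, at the same denoising step $k$ and under the same ideal-condition constant $\epsilon$. Both propositions come from the common formula \cref{eq:speedup}. The terms involving $\texttt{PTime}(q,\gamma+1)$ and $\texttt{Time}(r,1)$ do not depend on the drafter, so they contribute the same $1+\epsilon$ to both denominators.

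First, I would write the Frozen Drafter speedup as $S_{\textup{frozen}}(k) = \be_{\textup{frozen}}(k)/(2+\epsilon)$. Here the drafting cost is one sequential target call, $\texttt{Time}(p,\gamma)=\texttt{Time}(q,1)$. Second, I would write the Free Drafter speedup as $S_{\textup{free}}(k) = \be_{\textup{free}}(k)/(1+\epsilon+\epsilon')$ for $k \geq 2$. In this case the drafting cost is only the residual overhead $\epsilon'$, because the score is reused from the previous verification pass.

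Since both speedups are measured relative to the same baseline (no speculation), ``Frozen is faster than Free'' is equivalent to $S_{\textup{frozen}}(k) > S_{\textup{free}}(k)$. This reads
\begin{equation*}
  \frac{\be_{\textup{frozen}}(k)}{2+\epsilon} > \frac{\be_{\textup{free}}(k)}{1+\epsilon+\epsilon'}.
\end{equation*}
All quantities are strictly positive: $\be(\cdot)\geq 1$, $\epsilon>0$ and $\epsilon'>0$. I can therefore multiply both sides by $(2+\epsilon)/\be_{\textup{free}}(k)$ without changing the direction of the inequality. This gives exactly the stated ratio condition.

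The only delicate point is consistency rather than computation. I would make explicit that the same $\epsilon$ can be used for both drafters. This holds because $\epsilon$ depends only on the parallel verification time and the residual time, neither of which depends on the choice of drafter. I would also note that the comparison is made at a fixed $k\geq 2$, which is where \cref{prop:speedup-free} applies; the first round of the Free Drafter is excluded. Finally, I would remark that letting $\epsilon,\epsilon'\to 0$ recovers the informal ``twice as large'' statement.
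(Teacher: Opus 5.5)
Your proposal is correct and is the same argument the paper intends: the paper states the corollary without a written proof, and it follows by comparing the Frozen speedup $\be/(2+\epsilon)$ with the Free speedup $\be/(1+\epsilon+\epsilon')$ at the same $k\geq 2$, using the drafter-independent $1+\epsilon$ and positivity to rearrange, exactly as you do. One wording fix: $\epsilon'$ is the Free Drafter's own cost $\texttt{Time}(p,\gamma)/\texttt{Time}(q,1)$, not a ``residual'' overhead, since in this paper the residual cost $\texttt{Time}(r,1)$ is already absorbed into $\epsilon$.
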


From \cref{cor:drafter-compare}, we can see that a necessary condition for the Frozen Drafter to be faster is that its block efficiency is close to twice as high as Free Drafter's counterpart.

\begin{remark}
  It should however be noted that when using the Frozen Drafter, one will at least have a block efficiency of $\be_\textup{frozen}(k) \geq 2$.  This is because the first draft image generated by the drafter will be exactly sampled by the primary model $q$ (this is our single primary model call) and thus we will always accept it.
\end{remark}

Experimentally, we show that the Free Drafter in practice can yield much better speedups than the Frozen Drafter. One reason---characteristic to speculative diffusion---is that the difficulty of accepting a draft image becomes progressively harder as we get closer to a clean image.
Take sample verification for example, where the per-sample acceptance probability is given by one minus the total-variation between the draft and primary model. Indeed, from \citet[Proposition 3.1]{bortoli2025accelerated}, the acceptance probability for sample verification speculative diffusions will be
\begin{equation}
  \textrm{TokenAcceptance}(k+1)
  =
  2 - 2\Phi\left( \frac{1}{2} \mathcal{M}(m^q_{k+1}, m^p_{k+1}; \sigma_{k+1}^2 \Id ) \right),
\end{equation}
where $\mathcal{M}^2(a, b; \Sigma) = (a - b)^\top \Sigma^{-1} (a - b)$ corresponds to the Mahalanobis distance with covariance $\Sigma$ and we remind that $\Phi$ is the \cdf of the standard normal distribution.

As one can see, as $\sigma_k \rightarrow 0$, the Mahalanobis distance will approach $\infty$. This causes the acceptance rate to go to $0$. Intuitively, when less noise is injected for each DDPM step, it becomes more difficult to accept a draft image. The block acceptance rate has a similar characteristic ($\alpha_k \rightarrow 0$).
This is independent of the form of the drafter.

\cref{fig:mahalanobis-acceptance} presents the one-step lagged acceptance probability for sample verification on ImageNet over 250 denoising steps and churn $\varepsilon = 0.25$.
This is equivalent to the second draft image acceptance probability of the Frozen Drafter or the first draft image acceptance probability of Free Drafter when the previous verification is sampled from the primary model (\ie, a previous full draft). As one can see, the acceptance probability is high but eventually becomes $0$ once we approach the end of denoising.
In such a case, we are mostly rejecting, thus the larger cost of the Frozen Drafter will heavily affect the speed of speculative diffusion, without its benefit of increased acceptance rate.

\begin{figure}
  \centering
  \includegraphics[width=1.0\linewidth]{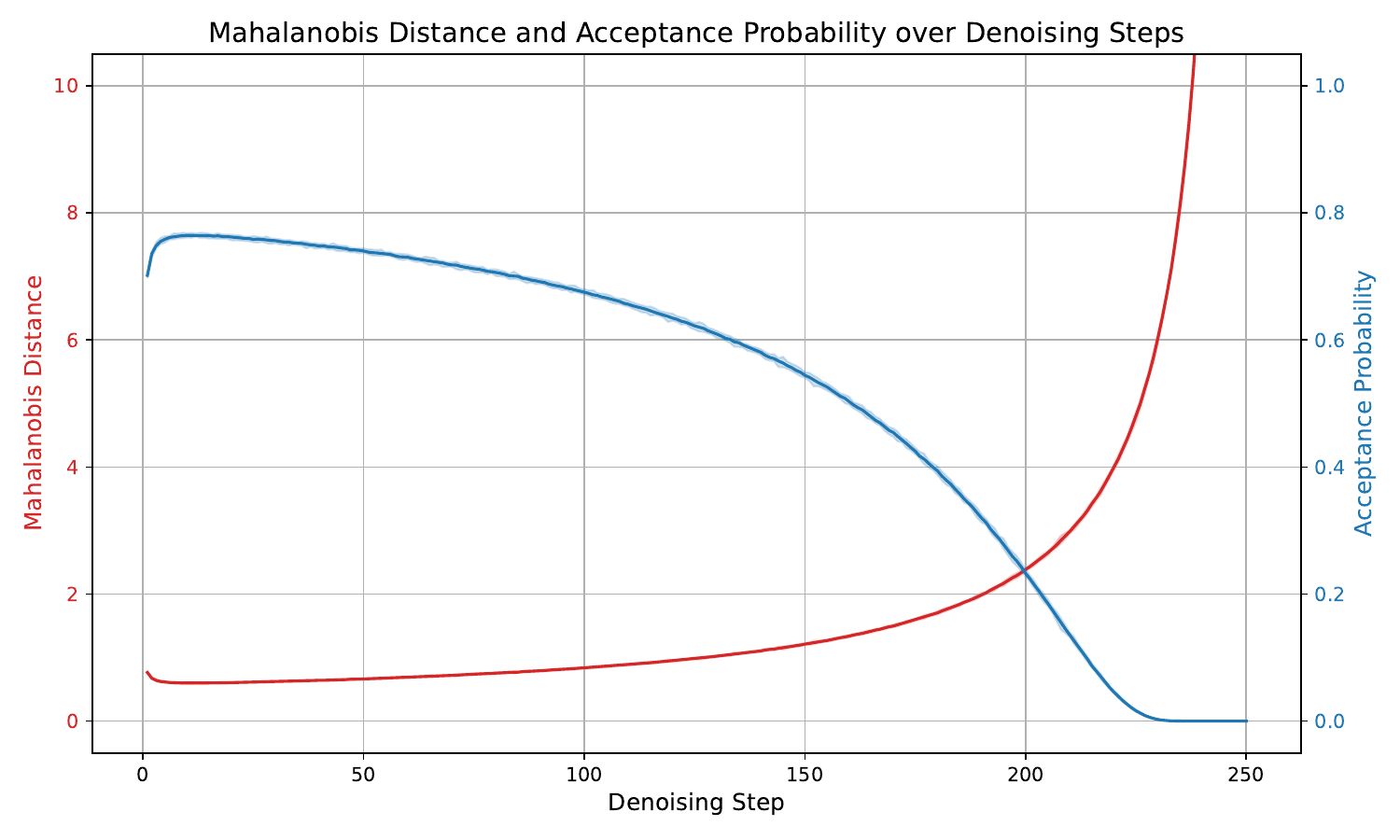}%
  \caption{Sample verification acceptance probability on the ImageNet LDM dataset. This plot uses a sampling configuration with $250$ denoising steps and churn parameter of $\varepsilon = 0.25$. We use 32 DDPM sampling trajectories. The one-step lagged predictions are then compared to the true model predictions to determine the Mahalanobis distance and acceptance probability.}%
  \label{fig:mahalanobis-acceptance}
\end{figure}

\section{Temperature}%
\label{app:temperature}

As a form of lenience for speculative diffusions, one can consider applying a temperature term to sample and block verification~\citep{bortoli2025accelerated}.
The primary motivation of utilizing a temperature term is to increase the acceptance rate of the draft sequence, while gracefully degrading quality (via a deviation from the primary model $q$).
One of the original proposals of speculative decoding~\citep{leviathan2023fast} suggested a form of lenience by replacing the density ratio ``$q / p$'' in verification with a scaled ratio ``$l q / p$'', where an increase in acceptance occurs with $l > 1$. Verification of this form can be shown to be part of an optimal relaxation (in terms of matching $q$) when the target distribution is slightly changed~\citep{Tran-Thien_2023}.
It should be noted that this optimal form of lossy speculative decoding could not be previously utilized as the residual deviates from what the reflection coupling calculates (\cref{algo:reflect-res}). However, the decomposition residual (\cref{algo:decomposition-res}) naturally allows for the required target distribution to implement lenience.
Recently, other approaches have also looked into replacing the verification procedure in speculative decoding to increase acceptance rate~\citep{zhong2025speeding,bachmann2025judge}.

In the following section, we study the unique temperature lenience available in speculative diffusion, as introduced in \citet{bortoli2025accelerated}.
Instead of linearly scaling the density ratio ``$q / p$'' we directly alter $p, q$ by changing their identity covariance matrices to a scaled version $\Id \mapsto \omega^2 \Id$.
For sample verification, we consider a version of \citet[Proposition F.1]{bortoli2025accelerated} specific to the decomposition residual with sample verification, as shown in \cref{algo:token-verification-temperature}. We also discuss and present a variant of block verification with temperature, as shown in \cref{algo:block-verification-temperature}.

We note that when we consider $\omega \neq 1$, the output distribution of speculative diffusion (of any kind) will not sample from the original target diffusion---the exactness guarantees in the main text only apply to the untempered $\omega = 1$ case.

\begin{algorithm}
  \begin{algorithmic}[1]
    \caption{$\texttt{SampleVerificationWithTemperature}( (\hat{y}_{k+i}, m^p_{k+i-1}, m^q_{k+i-1}, \sigma_{k+i-1})_{i=1}^\gamma; \omega)$}\label{algo:token-verification-temperature}
    \Require Draft $\hat{y}_{k+1:k+\gamma}$, draft means $m_{k:k+\gamma-1}^p$, target means $m_{k:k+\gamma-1}^q$, variances $\sigma^2_{k:k+\gamma-1}$, temperature $\omega > 0$.
    \State Set $\bm{c}_0 = 1$.
    \For{$j \in \{ 1, \ldots, \gamma \}$}
    \State Set $\Delta_{j} = (m_{k+j-1}^p - m_{k+j-1}^q) / \sigma_{k+j-1}$ and $\Z_{j} = (\hat{y}_{k+j} - m_{k+j-1}^p) / \sigma_{k+j-1}$.
    \State Calculate $\alpha_j = \min\left\{1, \frac{\mathcal{N}(\Z_j + \Delta_j; 0, \omega^2 \Id)}{\mathcal{N}(\Z_j; 0, \omega^2 \Id)} \right\}$.
    \State Flip coin $\bm{c}_j = \llbracket \eta < \alpha_j \rrbracket$, where $\eta \sim \Unif[0, 1]$.
    \EndFor
    \State Set $\tau = \min(\left\{ j \mid \bm{c}_j = 0 \right\} \cup \{ \gamma + 1 \})$.
    \State \Return $\tau$, $\Delta = \Delta_{\tau}$, $\Z = \Z_{\tau}$, $m^q = m^q_{k+\tau-1}$, $\sigma = \sigma_{k+\tau-1}$, $\alpha_{\textrm{res}} = 1$.\footref{foot:undefined}
  \end{algorithmic}
\end{algorithm}

\subsection{Sample Decomposition}

In the following, we explore using a temperature term in the verification step of sample-decomposition speculative diffusions. We recall that the output distribution of this setting corresponds to
\begin{equation}
  \pi_\token(y) = p(y) \alpha(y) + r(y) \left( 1 - \int p(\bar{y}) \alpha(\bar{y}) \, \dd \bar{y} \right),
\end{equation}
where
\begin{align*}
  p(y) &= \mathcal{N}(y; m^p, \sigma^2 \Id) \\
  \alpha(y) &= \min\left\{ 1, \frac{\mathcal{N}(y; m^q, \sigma^2 \Id)}{\mathcal{N}(y; m^p, \sigma^2 \Id)} \right\} \\
  r(y) &\propto \max\{ 0, \mathcal{N}(y; m^q, \sigma^2 \Id) - \mathcal{N}(y; m^p, \sigma^2 \Id) \}.
\end{align*}
Here no temperature has been added.

To add temperature, we modify the acceptance coin $c$ via
\begin{align*}
  \alpha^\omega(y) &= \min\left\{ 1, \frac{\mathcal{N}(y; m^q, \omega^2 \sigma^2 \Id)}{\mathcal{N}(y; m^p, \omega^2 \sigma^2 \Id)} \right\}.
\end{align*}
One of course has $\alpha^{\omega = 1} = \alpha$.

Now we can calculate
\begin{align*}
  &\pi^\omega_\token(y)
  = \mathcal{N}(y; m^p, \sigma^2 \Id) \min\left\{1, \frac{\mathcal{N}(y; m^q, \omega^2 \sigma^2 \Id)}{\mathcal{N}(y; m^p, \omega^2 \sigma^2 \Id)} \right\} \\
  & \quad + \norm ( \max\{ 0, \mathcal{N}(y; m^q, \sigma^2 \Id) - \mathcal{N}(y; m^p, \sigma^2 \Id) \} ) \left(1 - P_\textup{accept}(\omega) \right),
\end{align*}
where $\norm(f(x)) \defeq f(x) / \int f(x) \, \dd x$ unit normalizes a function and
\begin{equation}
  P_\textup{accept}(\omega) = \int p(\bar{y}) \alpha^\omega({\bar{y}}) \, \dd \bar{y}.
\end{equation}

Through a change of variables $y = m^q + \sigma z$, we have that the expected value of a function $f$ is given by,
\begin{align*}
  & \expect_{\pi^\omega_\token}[f(\Y^\omega)] \\
  &= \int f(m^q + \sigma z) \mathcal{N}(z - \Delta; 0, \Id) \min\left\{1, \frac{\mathcal{N}(z; 0, \omega^2 \Id)}{\mathcal{N}(z - \Delta; 0, \omega^2 \Id)} \right\} \\
  &\qquad + f(m^q + \sigma z) (1 - P_\textup{accept}(\omega)) \norm( \max\{ 0, \mathcal{N}(z; 0, \Id) - \mathcal{N}(z - \Delta; 0, \Id) \}  ) \, \dd z  \\
  &= \int f(m^q + \sigma z) \pi^\omega_\token(z) \, \dd z,
\end{align*}
where with abuse of notation we define $\pi^\omega_\token(z)$ (and each corresponding sub-function) as a function of $z$. One can check that $\pi^\omega_\token(z)$ is a valid distribution over noise values $z$ and that $P_\textup{accept}(\omega)$ can also be rewritten in terms of $z$.

As a result we have the following result.

\begin{proposition}
  Let $\Y^\omega$ be the output of
  speculative diffusion \cref{algo:abstract-spec-diffusion}
  with sample verification with temperature $\omega > 0$ \cref{algo:token-verification-temperature} and decomposition residual (\cref{algo:decomposition-res}, $\alpha = 1$). Then we have that
  \begin{equation}
    \expect_{\pi^\omega_\token}[\Y^\omega]
    = m^q +\sigma e g(\Vert \Delta \Vert, \omega),
  \end{equation}
  where $e = \Delta / \Vert \Delta \Vert$ and $\Delta = (m^p - m^q) / \sigma$.

  Furthermore, we have $g(\Vert \Delta \Vert, \omega) = 0$ for $\omega = 1$ and is an increasing function of $\omega$. Additionally for $\omega > 1$, $g(\Vert \Delta \Vert, \omega) \geq 0$; otherwise when $\omega < 1$, $g(\Vert \Delta \Vert, \omega) \leq 0$.
\end{proposition}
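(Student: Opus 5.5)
The plan is to reduce everything to a one-dimensional computation along $e$, as in the proof of \cref{prop:decomposition}.

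\textbf{Reduction to one dimension.} Write the output in target coordinates $\Y^\omega = m^q + \sigma \Z$. A draft $y = m^p + \sigma z^p$ corresponds to $z = z^p + \Delta$. Decompose $\Z = \U e + \Z_\perp$ using \cref{lem:orthogonal-breakdown} and \cref{lem:orthogonal-expect-change}. Three facts make this work:
\begin{itemize}
\item The tempered acceptance probability depends only on $e^\top z^p$, by the computation in \cref{lem:no-clamp-coin}.
\item The residual $\psi_1$ of \cref{prop:decomposition} also acts only on the $e$-component.
\item In both branches $\Z_\perp \sim \mathcal{N}(0, \Id - ee^\top)$, independently of $\U$, and so has mean zero.
\end{itemize}
Hence $\expect[\Y^\omega] = m^q + \sigma e\, g(\Vert\Delta\Vert,\omega)$ with $g = \expect[\U]$. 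Here $\U$ has the one-dimensional mixture density
\begin{equation*}
  \pi^\omega(u) = \mathcal{N}(u - \Vert\Delta\Vert;0,1)\, a_\omega(u) + \bigl(1 - P(\omega)\bigr)\psi_1(u),
\end{equation*}
where $a_\omega(u) = \min\bigl\{1, \exp\bigl(\Vert\Delta\Vert(\Vert\Delta\Vert/2 - u)/\omega^2\bigr)\bigr\}$ and $P(\omega) = \int \mathcal{N}(u-\Vert\Delta\Vert;0,1)\, a_\omega(u)\,\dd u$. This matches $P_\textup{accept}(\omega)$ in \cref{lem:accept-prob}. It is worth checking the sign convention, $z = z^p + \Delta$ and $u = e^\top z$, once carefully here.

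\textbf{The case $\omega = 1$.} At $\omega = 1$ the first term equals $\min\{\mathcal{N}(u-\Vert\Delta\Vert), \mathcal{N}(u)\}$. The second term equals $\max\{0, \mathcal{N}(u) - \mathcal{N}(u-\Vert\Delta\Vert)\}$, because the normaliser of $\psi_1$ is exactly $1 - P(1)$. Their sum is $\mathcal{N}(u;0,1)$, so $g(\Vert\Delta\Vert, 1) = 0$. This is just the standard correctness of speculative sampling.

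\textbf{Monotonicity in $\omega$.} Let $\mu_r = \int u\,\psi_1(u)\,\dd u$. By the \cdf lemma, $\psi_1$ is supported on $u < \Vert\Delta\Vert/2$, so $\mu_r < \Vert\Delta\Vert/2$. Then
\begin{equation*}
  g(\omega) = \int u\, \mathcal{N}(u-\Vert\Delta\Vert)\, a_\omega(u)\,\dd u + \bigl(1 - P(\omega)\bigr)\mu_r ,
\end{equation*}
and differentiating gives
\begin{equation*}
  \partial_\omega g = \int \bigl(u - \mu_r\bigr)\, \mathcal{N}(u-\Vert\Delta\Vert)\, \partial_\omega a_\omega(u)\,\dd u .
\end{equation*}
For $u < \Vert\Delta\Vert/2$ we have $a_\omega(u) = 1$, so $\partial_\omega a_\omega = 0$ there. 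For $u > \Vert\Delta\Vert/2$, $a_\omega(u) = \exp(-c(u)/\omega^2)$ with $c(u) > 0$, which is strictly increasing in $\omega$. On that region $u - \mu_r > 0$, so the integrand is nonnegative and positive on a set of positive measure. Hence $g$ is strictly increasing in $\omega$. Together with $g(1) = 0$, this gives $g \geq 0$ for $\omega > 1$ and $g \leq 0$ for $\omega < 1$.

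\textbf{Main obstacle.} The step needing most care is justifying differentiation under the integral. On compact $\omega$-intervals in $(0,\infty)$, $\partial_\omega a_\omega$ is bounded by a polynomial in $u$ times $\exp(-c(u)/\omega_{\max}^2)$, so dominated convergence with the Gaussian factor applies. If this is awkward, I would instead compare $g(\omega_2) - g(\omega_1)$ directly for $\omega_2 > \omega_1$, which gives the same sign argument pointwise without derivatives.
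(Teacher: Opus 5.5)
Your proposal is correct and follows essentially the paper's route. Both pass to $z=(y-m^q)/\sigma$, decompose along $e$, use $g(\Vert\Delta\Vert,1)=0$, and show $\partial_\omega g\ge 0$ because the tempered acceptance only varies on $\{u>\Vert\Delta\Vert/2\}$. The differences are in the details. You merge the two derivative terms into $\int (u-\mu_r)\,\mathcal{N}(u-\Vert\Delta\Vert;0,1)\,\partial_\omega a_\omega(u)\,\mathrm{d}u$ and need only $\mu_r<\Vert\Delta\Vert/2$, which follows from the support of $\psi_1$. The paper instead computes $\mu_{\textup{res}}=-eC$ in closed form and checks $A(\omega)\ge 0$ and $B(\omega)C\ge 0$ separately. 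You also verify $g(\cdot,1)=0$ explicitly, where the paper simply cites exactness of the untempered case.
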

\begin{proof}
  One immediately has that
  \begin{equation*}
    \expect_{\pi^\omega_\token}[\Y^\omega]
    = m^q + \sigma \int z \pi^\omega_\token(z) \, \dd z,
  \end{equation*}
  which gives us the functional form of utilizing a temperature in verification. What remains is the explicit analysis of the latter term.

  To do so, we will consider a decomposition that involves the average tempered acceptance. Let $P_\textup{accept}(\omega)$ denote the average (\wrt the drafter distribution) rate of acceptance, \ie,
  \begin{equation}
    P_\textup{accept}(\omega) = \int p(z) \alpha^\omega(z) \, \dd z.
  \end{equation}

  Let
  \begin{align*}
    E(\omega) &= \expect_{\pi^\omega_\token(z)}[\Z]
    =
    \int z \pi^\omega_\token(z) \, \dd z, %
  \end{align*}
  where we note that $P_\textup{accept}(\omega)$ normalizes $p(z) \alpha^\omega(z)$ to be a probability measure.

  As $\omega = 1$ corresponds to the untempered case, we know that $E(1) = 0$. Then, from the fundamental theorem of calculus, we have that
  \begin{align*}
    E(\omega) = E(\omega) - E(1) = \int_{1}^{\omega} \frac{\dd E(t)}{\dd t} \, \dd t,
  \end{align*}
  Noting that our terms are continuous functions and (at least) piece-wise differentiable. We will examine the sign of the derivative to determine the sign of $E(\omega)$.
  Notice that
  \begin{align*}
    \frac{\dd E(\omega)}{\dd \omega}
    =
    \int z p(z) \left( \frac{\partial}{\partial \omega} \alpha^\omega(z) \right) \, \dd z
    -
    \mu_\textup{res} \frac{\partial P_\textup{accept}(\omega)}{\partial \omega}.
  \end{align*}

  First, we give a simplification of $\alpha^\omega(z)$.
  \begin{align*}
    \alpha^\omega(z)
    &= \min\left\{ 1, \frac{\mathcal{N}(z; 0, \omega^2 \Id)}{\mathcal{N}(z - \Delta; 0, \omega^2 \Id)} \right\} \\
    &= \min\left\{ 1, \exp\left( \frac{1}{2\omega^2} \left( \Vert \Delta \Vert^2 - 2 z^\top \Delta \right) \right) \right\}.
  \end{align*}
  Furthermore, the derivative of $\alpha^\omega(z)$ is non-zero only for values $ 2 z^\top \Delta > \Vert \Delta \Vert^2 $, \ie, when $\alpha^\omega(z) < 1$.
  In this case we have,
  \begin{align*}
    \frac{\partial \alpha^\omega(z)}{\partial \omega}
    &= \frac{\partial}{\partial \omega} \left( \frac{\mathcal{N}(z; 0, \omega^2 \Id)}{\mathcal{N}(z - \Delta; 0, \omega^2 \Id)} \right) \\
    &= \frac{\partial}{\partial \omega} \left( \exp\left( \frac{1}{2\omega^2} \left( \Vert \Delta \Vert^2 - 2 z^\top \Delta \right) \right) \right) \\
    &= \exp\left( \frac{1}{2\omega^2} \left( \Vert \Delta \Vert^2 - 2 z^\top \Delta \right) \right) \frac{2z^\top \Delta - \Vert \Delta \Vert^2}{\omega^3}.
  \end{align*}
  Thus, recombining to give our integral, we have
  \begin{align*}
    &\int z p(z) \left( \frac{\partial}{\partial \omega} \alpha^\omega(z) \right) \, \dd z \\
    & \quad =
    \int_{z \colon 2 z^\top \Delta > \Vert \Delta \Vert^2} z \mathcal{N}(z-\Delta; 0, \Id) \exp\left( \frac{1}{2\omega^2} \left( \Vert \Delta \Vert^2 - 2 z^\top \Delta \right) \right) \frac{2z^\top \Delta - \Vert \Delta \Vert^2}{\omega^3} \, \dd z.
  \end{align*}
  To evaluate this, we consider an orthogonal decomposition of $z = (e^\top z) e + z_\perp$, where we remind that $ e = \Delta / \Vert \Delta \Vert$. We will use a change of coordinates onto a basis corresponding to this decomposition (\ie, a basis where the first coordinate is along $e$). In particular, we utilize the reflection matrix $\tilde{P}_\refl = \Id - 2vv^\top / (v^\top v)$ where $v = \xi_1 - e$ and $\xi_1 = (1, 0, \ldots, 0)$ is the first standard basis vector. Here we note that $\tilde{P}_\refl \xi_1 = e$
  and
  \begin{equation*}
    \det(\tilde{P}_\refl)^2 = \det(\tilde{P}_\refl \tilde{P}_\refl) = \det( \Id ) = 1 \quad \implies \quad \vert \det(\tilde{P}_\refl) \vert = 1.
  \end{equation*}
  Thus through a change of coordinates via $\tilde{P}_\refl$ we have
  \begin{align*}
    &\frac{1}{\Vert \Delta \Vert} \int z p(z) \left( \frac{\partial}{\partial \omega} \alpha^\omega(z) \right) \, \dd z  \\
    &=
    \int \int_{u \colon 2 u > \Vert \Delta \Vert} (ue + z_{\perp}) \mathcal{N}(ue + z_{\perp}; \Delta, \Id) \exp\left( \frac{\Vert \Delta \Vert}{2\omega^2} \left( \Vert \Delta \Vert - 2 u \right) \right) \frac{2u - \Vert \Delta \Vert}{\omega^3} \, \dd u  \, \dd z_\perp \\
    &=
    \int \int_{\Vert \Delta \Vert / 2}^{\infty} (ue + z_{\perp}) \mathcal{N}(ue + z_{\perp}; \Delta, \Id) \exp\left( \frac{\Vert \Delta \Vert}{2\omega^2} \left( \Vert \Delta \Vert - 2 u \right) \right) \frac{2u - \Vert \Delta \Vert}{\omega^3} \, \dd u \, \dd z_\perp \\
    &=
    \int \int_{\Vert \Delta \Vert / 2}^{\infty} (ue + z_{\perp}) \mathcal{N}(ue + z_{\perp} - \Vert \Delta \Vert e; 0, \Id) \exp\left( \frac{\Vert \Delta \Vert}{2\omega^2} \left( \Vert \Delta \Vert - 2 u \right) \right) \frac{2u - \Vert \Delta \Vert}{\omega^3} \, \dd u \, \dd z_\perp \\
    &=
    \int \int_{\Vert \Delta \Vert / 2}^{\infty} (ue + z_{\perp}) \mathcal{N}((u - \Vert \Delta \Vert)e + z_{\perp}; 0, \Id) \exp\left( \frac{\Vert \Delta \Vert}{2\omega^2} \left( \Vert \Delta \Vert - 2 u \right) \right) \frac{2u - \Vert \Delta \Vert}{\omega^3} \, \dd u \, \dd z_\perp \\
    &=
    \int \int_{\Vert \Delta \Vert / 2}^{\infty} (ue + z_{\perp}) \mathcal{N}(u - \Vert \Delta \Vert; 0, 1)\mathcal{N}(z_{\perp}; 0, \Id) \exp\left( \frac{\Vert \Delta \Vert}{2\omega^2} \left( \Vert \Delta \Vert - 2 u \right) \right) \frac{2u - \Vert \Delta \Vert}{\omega^3} \, \dd u \, \dd z_\perp,
  \end{align*}
  where we are using the fact that $e^\top z_{\perp} = 0$ by construction.
  Now noting that the expectation of $\mathcal{N}(z_{\perp}; 0, \Id)$ is $0$ (and that it normalizes to $1$), we have
  \begin{align*}
    &\int z p(z) \left( \frac{\partial}{\partial \omega} \alpha^\omega(z) \right) \, \dd z  \\
    &=
    e \Vert \Delta \Vert \int_{\Vert \Delta \Vert / 2}^{\infty} u \mathcal{N}(u - \Vert \Delta \Vert; 0, 1)\exp\left( \frac{\Vert \Delta \Vert}{2\omega^2} \left( \Vert \Delta \Vert - 2 u \right) \right) \frac{2u - \Vert \Delta \Vert}{\omega^3} \, \dd u \\
    &\defeq e A(\omega).
  \end{align*}
  Notice, the integral term $A(\omega)$ is non-negative due to the bounds of integration. Thus the entire term's sign is dictated by $e = \Delta / \Vert \Delta \Vert$.

  Similarly, we can simplify $\partial_\omega P_\textup{accept}(\omega)$ via
  \begin{align*}
    & \frac{\partial P_\textup{accept}(\omega)}{\partial \omega} \\
    &=
    \int p(z) \left( \frac{\partial}{\partial \omega} \alpha^\omega(z) \right) \, \dd z  \\
    &= \Vert \Delta \Vert
    \int \int_{\Vert \Delta \Vert / 2}^{\infty} \mathcal{N}(u - \Vert \Delta \Vert; 0, 1)\mathcal{N}(z_{\perp}; 0, \Id) \exp\left( \frac{\Vert \Delta \Vert}{2\omega^2} \left( \Vert \Delta \Vert - 2 u \right) \right) \frac{2u - \Vert \Delta \Vert}{\omega^3} \, \dd u \, \dd z_\perp \\
    &= \Vert \Delta \Vert
    \int_{\Vert \Delta \Vert / 2}^{\infty} \mathcal{N}(u - \Vert \Delta \Vert; 0, 1) \exp\left( \frac{\Vert \Delta \Vert}{2\omega^2} \left( \Vert \Delta \Vert - 2 u \right) \right) \frac{2u - \Vert \Delta \Vert}{\omega^3} \, \dd u \\
    &\defeq B(\omega).
  \end{align*}
  Which again $B(\omega)$ is positive.

  Finally, we consider $\mu_\textup{res}$. We consider the integral of the un-normalized term:
  \begin{align*}
    \int z \max\{ 0, \mathcal{N}(z; 0, \Id) - \mathcal{N}(z - \Delta; 0, \Id) \} \, \dd z.
  \end{align*}
  The inner term is only positive when
  \begin{align*}
    \mathcal{N}(z; 0, \Id) - \mathcal{N}(z - \Delta; 0, \Id) > 0
    \iff \quad &
    \frac{\mathcal{N}(z; 0, \Id)}{\mathcal{N}(z - \Delta; 0, \Id)} > 1 \\
    \iff \quad &
    \exp\left( \frac{1}{2\omega^2} \left( \Vert \Delta \Vert^2 - 2 z^\top \Delta \right) \right) > 1 \\
    \iff \quad &
    \frac{1}{2\omega^2} \left( \Vert \Delta \Vert^2 - 2 z^\top \Delta \right) > 0 \\
    \iff \quad &
    2 z^\top e < \Vert \Delta \Vert.
  \end{align*}
  Notice, that the condition on $z^\top e$ is flipped this time.
  Now finally, we do the same change of coordinates as above to yield
  \begin{align*}
    & \int z \max\{ 0, \mathcal{N}(z; 0, \Id) - \mathcal{N}(z - \Delta; 0, \Id) \} \, \dd z \\
    &=
    \int \int_{-\infty}^{\Vert \Delta \Vert / 2}
    (ue + z_{\perp}) \left( \mathcal{N}(ue + z_{\perp}; 0, \Id) - \mathcal{N}((u - \Vert \Delta \Vert)e + z_{\perp}; 0, \Id) \right) \, \dd u \, \dd z_\perp \\
    &=
    \int \int_{-\infty}^{\Vert \Delta \Vert / 2}
    (ue + z_{\perp}) \mathcal{N}(z_{\perp}; 0, \Id) \left( \mathcal{N}(u; 0, 1) - \mathcal{N}(u - \Vert \Delta \Vert; 0, 1) \right) \, \dd u \, \dd z_\perp \\
    &=
    e \int_{-\infty}^{\Vert \Delta \Vert / 2}
    u \mathcal{N}(u; 0, 1) - u \mathcal{N}(u - \Vert \Delta \Vert; 0, 1) \, \dd u \\
    &=
    e \int_{-\infty}^{\Vert \Delta \Vert / 2}
    u \mathcal{N}(u; 0, 1) \, \dd u - e \int_{-\infty}^{- \Vert \Delta \Vert / 2} (v + \Vert \Delta \Vert) \mathcal{N}(v; 0, 1) \, \dd v \\
    &=
    e \int_{-\infty}^{\Vert \Delta \Vert / 2}
    u \mathcal{N}(u; 0, 1) \, \dd u - e \int_{-\infty}^{- \Vert \Delta \Vert / 2} v \mathcal{N}(v; 0, 1) \, \dd v - e \Vert \Delta \Vert \int_{-\infty}^{- \Vert \Delta \Vert / 2} \mathcal{N}(v; 0, 1) \, \dd v \\
    &= e \left( \left[ - \mathcal{N}(u; 0, 1)\bigg|^{u = \Vert \Delta \Vert / 2}_{u = - \infty} \right] - \left[ - \mathcal{N}(u; 0, 1)\bigg|^{u = -\Vert \Delta \Vert / 2}_{u = - \infty} \right] - \Vert \Delta \Vert \Phi\left( - \frac{\Vert \Delta \Vert}{2} \right) \right) \\
    &= e \left( \mathcal{N}\left( - \frac{\Vert \Delta \Vert}{2} ; 0, 1 \right) - \mathcal{N}\left( \frac{\Vert \Delta \Vert}{2} ; 0, 1 \right) - \Vert \Delta \Vert \Phi\left( - \frac{\Vert \Delta \Vert}{2} \right) \right) \\
    &= - e \Vert \Delta \Vert \Phi\left( - \frac{\Vert \Delta \Vert}{2} \right),
  \end{align*}
  where we are using the symmetry of the normal distribution \pdf.

  Thus together, we get
  \begin{align*}
    \mu_\textup{res}
    &= - e \Vert \Delta \Vert  \frac{\Phi\left( - \frac{\Vert \Delta \Vert}{2} \right)}{\int_{-\infty}^{\Vert \Delta \Vert / 2}
    \left( \mathcal{N}(u; 0, 1) - \mathcal{N}(u - \Vert \Delta \Vert; 0, 1) \right) \, \dd u} \\
    &= - e \Vert \Delta \Vert \frac{\Phi\left( - \frac{\Vert \Delta \Vert}{2} \right)}{\Phi\left( \frac{\Vert \Delta \Vert}{2} \right) - \Phi\left( - \frac{\Vert \Delta \Vert}{2} \right)} \\
    &= - e \Vert \Delta \Vert \frac{ 1 - \Phi\left( \frac{\Vert \Delta \Vert}{2} \right) }{2 \Phi\left( \frac{\Vert \Delta \Vert}{2} \right) - 1} \\
    &\defeq - e C.
  \end{align*}
  where the fractional term $C$ is non-negative.

  Hence we have that
  \begin{equation}
    \frac{\dd E(\omega)}{\dd \omega}
    = e \left( A(\omega) + B(\omega) C \right).
  \end{equation}
  And thus
  \begin{equation}
    E(\omega) = e \int_{1}^{\omega} A(t) + B(t) C  \, \dd t,
  \end{equation}
  with $A(t) + B(t) C \geq 0$  for all $t \geq 0$.
  Thus with $\omega > 1$, $E(\omega)$ increases in magnitude while following the sign of $e$. When $\omega < 1$, $E(\omega)$ increases in magnitude in the opposite direction/sign of $e$.
\end{proof}

\subsection{Block}

Adding temperature to block verification is not as straightforward as its sample verification counterpart. There are multiple ways of doing this, where the differences come from determining where the influence of the temperature $\omega$ starts and stops, \ie, should $\alpha_i$, which is passed to \cref{algo:decomposition-res}, be computed with temperature. For simplicity, we propose and experimentally test a variant of temperature using the verification of \cref{algo:block-verification-temperature}. In addition, when calculating the decomposition residual, we utilize the temperature scaled $\alpha_i$'s.
Again, we note that for $\omega \neq 1$, \cref{algo:block-verification-temperature} is lossy and does not guarantee that speculative diffusion recovers the target diffusion.

\begin{algorithm}
  \caption{$\texttt{BlockVerificationWithTemperature}((\hat{y}_{k+i}, m^p_{k+i-1}, m^q_{k+i-1}, \sigma_{k+i-1})_{i=1}^\gamma; \omega)$}\label{algo:block-verification-temperature}
  \begin{algorithmic}[1]
    \Require Draft $\hat{y}_{k+1:k+\gamma}$, draft means $m_{k:k+\gamma-1}^p$, target means $m_{k:k+\gamma-1}^q$, variances $\sigma^2_{k:k+\gamma-1}$, temperature $\omega > 0$.
    \State Initialize $\bm{c}_0 = 1$ and $\alpha_0 = 1$.
    \For{$j \in \{ 1, \ldots, \gamma \}$}
    \State Set $\Delta_{j} = (m_{k+j-1}^p - m_{k+j-1}^q) / \sigma_{k+j-1}$.
    \State Set $\Z_j = (\hat{y}_{k+j} - m_{k+j-1}^p) / \sigma_{k+j-1}$.
    \State Calculate $\alpha_j = \min\left\{1, \alpha_{j-1} \frac{\mathcal{N}(\Z_j + \Delta_{j}; 0, \omega^2 \Id)}{\mathcal{N}(\Z_j; 0, \omega^2 \Id)} \right\}$.\label{algo-line:block-coin-temperature}
    \EndFor
    \For{$j \in \{ 1, \ldots, \gamma \}$}
    \If{$j \neq \gamma$}
    \State Calculate $h_j = {v_j}/({v_j + 1 - \alpha_j})$, where
    \begin{equation}
      v_j = \alpha_j \Phi\left( \frac{\log \alpha_j}{\Vert \Delta_{j+1} \Vert} + \frac{\Vert \Delta_{j+1} \Vert}{2}\right) - \Phi\left( \frac{\log \alpha_j}{\Vert \Delta_{j+1} \Vert} - \frac{\Vert \Delta_{j+1} \Vert}{2}\right).
    \end{equation}
    \Else
    \State Set $h_\gamma = \alpha_\gamma$
    \EndIf
    \State Flip coin $\bm{c}_j = \llbracket \eta < h_j \rrbracket$, where $\eta \sim \Unif[0, 1]$.
    \EndFor
    \State Set $\tau = 1 + \max\left\{ j \in \{ 0, 1, \ldots, \gamma\} \mid \bm{c}_j = 1 \right\}$.
    \State \Return $\tau$, $\Delta = \Delta_{\tau}$, $\Z = \Z_{\tau}$, $m^q = m^q_{k+\tau-1}$, $\sigma = \sigma_{k+\tau-1}$, $\alpha_{\textrm{res}} = \alpha_{\tau-1}$.\footref{foot:undefined}
  \end{algorithmic}
\end{algorithm}

\section{Experimental details}
\label{app:experiments}

\paragraph{Pixel Experiments.}
We follow the experimental setup of \citet{bortoli2025accelerated}. For all pixel experiments, we consider a U-Net architecture~\citep{ronneberger2015u}. For each U-Net, the channel size is 192 for ImageNet and 256 for the other datasets. We also apply multipliers for the channels at each level, where the multipliers are $(1, 2, 3, 4)$ for ImageNet and $(1, 2, 2, 2)$ for other datasets. At each level, we consider 3 residual blocks for ImageNet and 2 residual blocks for other datasets. An attention layer is applied on the second level of the U-Net and attention is also used at the bottleneck block. Each residual block consists of the following: the input is normalized, a $3 \times 3$ convolution block is applied, non-linear activation is applied, a dropout layer, and then a final $3 \times 3$ convolution block is applied. A residual connection is also utilized to add the input via a $1 \times 1$ convolutional block. RMS normalization and GELU are used for the normalization layer and non-linearity, respectively. 
When applicable, multi-head attention with 8 heads is used after the convolutional residual block.
For a time embedding, we use a 192-dimensional sinusoidal embedding for ImageNet and 256-dimensional sinusoidal embedding for other datasets. When the dataset has labels, we embed the different classes and consider a conditional model, where we also condition on the augmentation vector. For training, we use the Adam optimizer with additional norm clipping (equal to 1) with learning rate $1e-5$.

\paragraph{Latent Experiments.}
Our latent models follow the recipe of \citet{rombach2022high}. We train an autoencoder to encode images of shape $256 \times 256 \times 3$ to latent tensors of shape $64 \times 64 \times 3$.
The autoencoder architectures utilized mirror LDM-4 for their respective datasets, where we use $\beta$-VAE with $\beta = 1e-6$ for both datasets.
For CelebA, we utilize a 10.0 coefficient on adversarial loss and 10.0 coefficient on generator loss.
For ImageNet, we utilize a 0.5 coefficient on adversarial loss and 1.0 coefficient on generator loss.
For the actual latent diffusion model, we also mirror LDM-4 in \citep{rombach2022high}.
For CelebA, we use a channel size of 224 with $(1, 2, 3, 4)$ multipliers. We utilize a dropout rate of 0.2 and 2 residual blocks. A learning rate of $9.6e-5$ is used with Adam (with norm clipping equal to 1).
For ImageNet, we use a channel size of 256 with $(1, 2, 3, 5)$ multipliers. We use a dropout of 0.1 and 2 residual blocks. A learning rate of $1e-4$ is used with the same optimizer.

\paragraph{Sampling.}
Both in sampling and training, we use the rectified flow noise schedule~\citep{liu2022flow}. We utilize the sampling method in \citet{ho2020denoising} with a safety epsilon of $1e-4$.
Speculative diffusions are then used to speed up this base sampling method.
In \cref{algo:u-sample}, the two while-loops suggest that we repeat the loop until a termination criterion is met. In practice, we find that fixing the number of iterations does not cause a major degradation in performance. For ease of implementation in \verb+JAX+, we establish the lower bound $u_\textrm{lower}$ by a maximum of $3$ iterations and run the binary search with a maximum of $4$ iterations.

\paragraph{Error Propagation.}
In some of our result tables, we give an estimate of the \std for the speedup \wrt the DDPM wall-clock time. Although we have the \std of the wall-clock time of the speculative diffusion sampling and the DDPM sampling, we do not have the \std of the ratio. We propagate the error via a first-order Taylor series approximation, see for instance \citet[Chapter 4]{lee2005analyzing}.

\section{Additional Experiments}

\paragraph{Additional FID scores}
We present additional results to verify speedups and FID scores. \cref{tab:fids_0_25,tab:fids_0_5} present these results over all datasets for churn parameters $\varepsilon = 0.25$ and $\varepsilon = 0.5$. This is fixed for a draft sequence size of $\gamma = 7$. FID scores for all datasets are calculated over 50k samples, except for CelebA LDM which utilizes 30k samples.

We note that the original sampling FID and speculative diffusion FID are similar across these results. The FID scores of the decomposition ($\decomposition$) and block ($\block$) approaches tend to be more similar in FID scores, which is expected as the residual distributions they target are similar.

\begin{table}[ht]
  \centering
  \caption{Wall-clock speedups and FID scores over all datasets and different number of denoising steps. The churn parameter is set to $\varepsilon = 0.25$ and the window size is set to $\gamma = 7$.
    Non-FID quantities are calculated over 500 samples and the $\pm$ error ranges are approximated via error propagation.
  }
  \label{tab:fids_0_25}
  \resizebox{\textwidth}{!}{
    \begin{tabular}{clcccccccc}
      \toprule
      & & \multicolumn{4}{c}{Wall-clock Speedup} & \multicolumn{4}{c}{FID} \\
      \cmidrule(lr){3-6}
      \cmidrule(lr){7-10}
      Dataset & Steps & \reflection & \decomposition & \block & \reflection$\uparrow$\block \% & \reflection & \decomposition & \block & DDPM \\
      \midrule
      \tabledataset{CelebA}{LDM} & 50 & $1.10 \pm 0.06$ & $1.10 \pm 0.06$ & $1.10 \pm 0.06$ & $-0.59\%$ & $6.17$ & $6.33$ & $6.36$ & $6.38$ \\
      & 100 & $1.48 \pm 0.07$ & $1.47 \pm 0.07$ & $1.48 \pm 0.08$ & $0.54\%$ & $6.19$ & $6.13$ & $6.18$ & $6.23$ \\
      & 250 & $2.20 \pm 0.09$ & $2.19 \pm 0.09$ & $2.25 \pm 0.10$ & $2.10\%$ & $6.42$ & $6.11$ & $6.13$ & $6.23$ \\
      & 500 & $2.94 \pm 0.11$ & $2.95 \pm 0.11$ & $3.04 \pm 0.11$ & $3.47\%$ & $6.43$ & $6.09$ & $6.08$ & $6.27$ \\
      & 1000 & $3.83 \pm 0.11$ & $3.83 \pm 0.10$ & $3.95 \pm 0.11$ & $3.27\%$ & $6.27$ & $6.12$ & $6.11$ & $6.22$ \\
      \midrule
      \tabledataset{CelebA}{Pixel} & 50 & $1.41 \pm 0.11$ & $1.40 \pm 0.12$ & $1.42 \pm 0.13$ & $0.67\%$ & $6.24$ & $6.28$ & $6.20$ & $6.24$ \\
      & 100 & $2.02 \pm 0.16$ & $2.03 \pm 0.17$ & $2.11 \pm 0.18$ & $4.52\%$ & $3.45$ & $3.46$ & $3.41$ & $3.44$ \\
      & 250 & $3.20 \pm 0.20$ & $3.18 \pm 0.21$ & $3.35 \pm 0.21$ & $4.81\%$ & $2.77$ & $2.61$ & $2.57$ & $2.63$ \\
      & 500 & $4.16 \pm 0.19$ & $4.16 \pm 0.19$ & $4.32 \pm 0.21$ & $3.79\%$ & $2.81$ & $2.62$ & $2.54$ & $2.63$ \\
      & 1000 & $5.08 \pm 0.17$ & $5.09 \pm 0.16$ & $5.22 \pm 0.16$ & $2.75\%$ & $2.73$ & $2.62$ & $2.60$ & $2.68$ \\
      \midrule
      \tabledataset{CIFAR10}{Pixel} & 50 & $1.52 \pm 0.19$ & $1.51 \pm 0.19$ & $1.51 \pm 0.21$ & $-0.90\%$ & $3.76$ & $3.77$ & $3.80$ & $3.85$ \\
      & 100 & $2.33 \pm 0.24$ & $2.34 \pm 0.25$ & $2.35 \pm 0.24$ & $1.10\%$ & $2.76$ & $2.75$ & $2.78$ & $2.85$ \\
      & 250 & $3.61 \pm 0.25$ & $3.59 \pm 0.24$ & $3.60 \pm 0.24$ & $-0.04\%$ & $2.13$ & $2.15$ & $2.14$ & $2.23$ \\
      & 500 & $4.49 \pm 0.21$ & $4.49 \pm 0.20$ & $4.45 \pm 0.20$ & $-0.90\%$ & $2.17$ & $2.21$ & $2.22$ & $2.29$ \\
      & 1000 & $5.22 \pm 0.15$ & $5.23 \pm 0.15$ & $5.12 \pm 0.14$ & $-2.00\%$ & $2.23$ & $2.26$ & $2.25$ & $2.37$ \\
      \midrule
      \tabledataset{ImageNet}{LDM} & 50 & $1.18 \pm 0.06$ & $1.18 \pm 0.07$ & $1.18 \pm 0.06$ & $0.35\%$ & $11.31$ & $10.97$ & $11.01$ & $11.22$ \\
      & 100 & $1.53 \pm 0.08$ & $1.53 \pm 0.08$ & $1.55 \pm 0.07$ & $0.98\%$ & $10.70$ & $10.13$ & $10.14$ & $10.77$ \\
      & 250 & $2.24 \pm 0.10$ & $2.23 \pm 0.10$ & $2.28 \pm 0.11$ & $1.76\%$ & $10.81$ & $9.64$ & $9.70$ & $10.52$ \\
      & 500 & $2.99 \pm 0.12$ & $2.99 \pm 0.13$ & $3.10 \pm 0.14$ & $3.80\%$ & $10.83$ & $9.58$ & $9.66$ & $10.16$ \\
      & 1000 & $3.88 \pm 0.12$ & $3.90 \pm 0.12$ & $4.05 \pm 0.13$ & $4.20\%$ & $10.59$ & $9.60$ & $9.59$ & $10.18$ \\
      \midrule
      \tabledataset{ImageNet}{Pixel} & 50 & $1.24 \pm 0.09$ & $1.24 \pm 0.10$ & $1.25 \pm 0.11$ & $1.20\%$ & $3.47$ & $3.42$ & $3.44$ & $3.50$ \\
      & 100 & $1.77 \pm 0.16$ & $1.77 \pm 0.16$ & $1.83 \pm 0.18$ & $3.47\%$ & $2.93$ & $2.97$ & $2.93$ & $2.97$ \\
      & 250 & $2.82 \pm 0.23$ & $2.81 \pm 0.22$ & $2.96 \pm 0.22$ & $4.70\%$ & $2.60$ & $2.59$ & $2.59$ & $2.66$ \\
      & 500 & $3.72 \pm 0.24$ & $3.71 \pm 0.23$ & $3.86 \pm 0.23$ & $3.75\%$ & $2.52$ & $2.49$ & $2.49$ & $2.57$ \\
      & 1000 & $4.58 \pm 0.20$ & $4.60 \pm 0.21$ & $4.76 \pm 0.19$ & $3.92\%$ & $2.47$ & $2.45$ & $2.45$ & $2.55$ \\
      \midrule
      \tabledataset{LSUN}{Pixel} & 50 & $1.18 \pm 0.10$ & $1.17 \pm 0.10$ & $1.17 \pm 0.10$ & $-0.43\%$ & $6.49$ & $6.56$ & $6.58$ & $6.46$ \\
      & 100 & $1.71 \pm 0.14$ & $1.70 \pm 0.14$ & $1.75 \pm 0.16$ & $2.42\%$ & $5.10$ & $5.15$ & $5.10$ & $5.08$ \\
      & 250 & $2.74 \pm 0.19$ & $2.74 \pm 0.17$ & $2.86 \pm 0.19$ & $4.44\%$ & $4.11$ & $4.05$ & $4.05$ & $4.03$ \\
      & 500 & $3.64 \pm 0.18$ & $3.63 \pm 0.18$ & $3.75 \pm 0.18$ & $3.03\%$ & $3.76$ & $3.64$ & $3.64$ & $3.70$ \\
      & 1000 & $4.48 \pm 0.16$ & $4.49 \pm 0.15$ & $4.59 \pm 0.15$ & $2.53\%$ & $3.67$ & $3.50$ & $3.51$ & $3.55$ \\
      \bottomrule
    \end{tabular}
  }
\end{table}

\begin{table}[ht]
  \centering
  \caption{Wall-clock speedups and FID scores over all datasets and different number of denoising steps. The churn parameter is set to $\varepsilon = 0.5$ and the window size is set to $\gamma = 7$.
    Non-FID quantities are calculated over 500 samples, and the $\pm$ error ranges are approximated via error propagation.%
  }%
  \label{tab:fids_0_5}
  \resizebox{\textwidth}{!}{
    \begin{tabular}{clcccccccc}
      \toprule
      & & \multicolumn{4}{c}{Wall-clock Speedup} & \multicolumn{4}{c}{FID} \\
      \cmidrule(lr){3-6}
      \cmidrule(lr){7-10}
      Dataset & Steps & \reflection & \decomposition & \block & \reflection$\uparrow$\block \% & \reflection & \decomposition & \block & DDPM \\
      \midrule
      \tabledataset{CelebA}{LDM} & 50 & $1.13 \pm 0.06$ & $1.13 \pm 0.07$ & $1.13 \pm 0.07$ & $-0.39\%$ & $6.34$ & $6.53$ & $6.62$ & $6.64$ \\
      & 100 & $1.47 \pm 0.07$ & $1.47 \pm 0.08$ & $1.49 \pm 0.07$ & $1.31\%$ & $6.31$ & $6.20$ & $6.31$ & $6.55$ \\
      & 250 & $2.15 \pm 0.09$ & $2.14 \pm 0.09$ & $2.20 \pm 0.10$ & $2.43\%$ & $6.43$ & $6.22$ & $6.27$ & $6.59$ \\
      & 500 & $2.84 \pm 0.10$ & $2.85 \pm 0.11$ & $2.94 \pm 0.11$ & $3.43\%$ & $6.46$ & $6.34$ & $6.32$ & $6.73$ \\
      & 1000 & $3.70 \pm 0.11$ & $3.72 \pm 0.10$ & $3.82 \pm 0.11$ & $3.07\%$ & $6.58$ & $6.62$ & $6.62$ & $6.75$ \\
      \midrule
      \tabledataset{CelebA}{Pixel} & 50 & $1.44 \pm 0.12$ & $1.43 \pm 0.12$ & $1.47 \pm 0.13$ & $1.60\%$ & $7.02$ & $7.18$ & $7.02$ & $6.99$ \\
      & 100 & $2.04 \pm 0.17$ & $2.04 \pm 0.16$ & $2.14 \pm 0.18$ & $4.95\%$ & $3.96$ & $3.90$ & $3.84$ & $3.91$ \\
      & 250 & $3.15 \pm 0.21$ & $3.14 \pm 0.20$ & $3.30 \pm 0.20$ & $4.83\%$ & $2.91$ & $2.77$ & $2.74$ & $2.86$ \\
      & 500 & $4.09 \pm 0.20$ & $4.09 \pm 0.19$ & $4.24 \pm 0.22$ & $3.90\%$ & $2.93$ & $2.62$ & $2.65$ & $2.83$ \\
      & 1000 & $5.03 \pm 0.19$ & $5.04 \pm 0.17$ & $5.16 \pm 0.17$ & $2.58\%$ & $2.87$ & $2.65$ & $2.67$ & $2.94$ \\
      \midrule
      \tabledataset{CIFAR10}{Pixel} & 50 & $1.65 \pm 0.21$ & $1.64 \pm 0.21$ & $1.67 \pm 0.22$ & $1.27\%$ & $3.91$ & $3.92$ & $3.91$ & $3.95$ \\
      & 100 & $2.48 \pm 0.26$ & $2.47 \pm 0.26$ & $2.50 \pm 0.26$ & $0.97\%$ & $2.91$ & $2.86$ & $2.85$ & $3.00$ \\
      & 250 & $3.65 \pm 0.24$ & $3.67 \pm 0.24$ & $3.67 \pm 0.24$ & $0.54\%$ & $2.19$ & $2.21$ & $2.24$ & $2.38$ \\
      & 500 & $4.51 \pm 0.21$ & $4.52 \pm 0.21$ & $4.45 \pm 0.18$ & $-1.34\%$ & $2.22$ & $2.21$ & $2.23$ & $2.52$ \\
      & 1000 & $5.23 \pm 0.15$ & $5.24 \pm 0.15$ & $5.13 \pm 0.14$ & $-1.91\%$ & $2.27$ & $2.30$ & $2.31$ & $2.73$ \\
      \midrule
      \tabledataset{ImageNet}{LDM} & 50 & $1.22 \pm 0.06$ & $1.22 \pm 0.06$ & $1.22 \pm 0.07$ & $0.09\%$ & $10.83$ & $10.77$ & $10.82$ & $10.78$ \\
      & 100 & $1.55 \pm 0.08$ & $1.55 \pm 0.08$ & $1.57 \pm 0.08$ & $1.47\%$ & $10.25$ & $10.06$ & $9.84$ & $10.28$ \\
      & 250 & $2.20 \pm 0.10$ & $2.20 \pm 0.10$ & $2.25 \pm 0.11$ & $2.32\%$ & $10.09$ & $9.55$ & $9.63$ & $9.81$ \\
      & 500 & $2.90 \pm 0.12$ & $2.91 \pm 0.13$ & $3.01 \pm 0.14$ & $3.82\%$ & $9.98$ & $9.66$ & $9.51$ & $9.64$ \\
      & 1000 & $3.78 \pm 0.13$ & $3.79 \pm 0.14$ & $3.91 \pm 0.14$ & $3.67\%$ & $9.90$ & $9.65$ & $9.41$ & $9.61$ \\
      \midrule
      \tabledataset{ImageNet}{Pixel} & 50 & $1.30 \pm 0.11$ & $1.29 \pm 0.11$ & $1.32 \pm 0.12$ & $1.68\%$ & $3.58$ & $3.62$ & $3.64$ & $3.66$ \\
      & 100 & $1.84 \pm 0.18$ & $1.81 \pm 0.17$ & $1.89 \pm 0.19$ & $3.05\%$ & $2.97$ & $2.99$ & $2.99$ & $3.04$ \\
      & 250 & $2.81 \pm 0.25$ & $2.80 \pm 0.24$ & $2.94 \pm 0.24$ & $4.66\%$ & $2.59$ & $2.64$ & $2.68$ & $2.73$ \\
      & 500 & $3.68 \pm 0.25$ & $3.69 \pm 0.24$ & $3.83 \pm 0.26$ & $4.12\%$ & $2.56$ & $2.50$ & $2.51$ & $2.65$ \\
      & 1000 & $4.57 \pm 0.23$ & $4.59 \pm 0.23$ & $4.75 \pm 0.23$ & $3.95\%$ & $2.51$ & $2.50$ & $2.50$ & $2.61$ \\
      \midrule
      \tabledataset{LSUN}{Pixel} & 50 & $1.21 \pm 0.11$ & $1.21 \pm 0.11$ & $1.21 \pm 0.11$ & $-0.13\%$ & $7.12$ & $7.18$ & $7.24$ & $7.01$ \\
      & 100 & $1.73 \pm 0.14$ & $1.73 \pm 0.14$ & $1.79 \pm 0.15$ & $3.39\%$ & $5.54$ & $5.52$ & $5.55$ & $5.46$ \\
      & 250 & $2.72 \pm 0.19$ & $2.72 \pm 0.18$ & $2.83 \pm 0.20$ & $4.08\%$ & $4.28$ & $4.26$ & $4.35$ & $4.46$ \\
      & 500 & $3.60 \pm 0.19$ & $3.61 \pm 0.18$ & $3.71 \pm 0.19$ & $2.87\%$ & $3.90$ & $3.79$ & $3.86$ & $4.20$ \\
      & 1000 & $4.50 \pm 0.17$ & $4.51 \pm 0.15$ & $4.59 \pm 0.17$ & $2.08\%$ & $3.83$ & $3.62$ & $3.62$ & $4.22$ \\
      \bottomrule
    \end{tabular}
  }
\end{table}

\paragraph{Temperature}
In the following section, we consider a simple experiment utilizing the temperature lenience schemes described in \cref{app:temperature}.
For a fixed churn of $\varepsilon = 0.25$ and draft size $\gamma = 7$,
\cref{tab:temperature-walltime} presents the wall-clock times of sampling and \cref{tab:temperature-fid} presents the FID score. An increase in $\omega$ corresponds to an increase in acceptance rate. $\omega = 1$ corresponds to the case without temperature. Notice that \wrt wall-clock time, an increase in $\omega$ (usually) decreases the wall-clock time. This is consistent for denoising steps $50, 100, 250$. However, for larger denoising steps $500, 1000$, there is an unexpected decrease in the wall-clock time for $\omega = 0.5$. This is unexpected as $\omega < 1$ should decrease acceptance rate, \ie, the verification value $\alpha_i$ will be smaller. A possible reason for this is that rejecting the drafter more at the start of the denoising process can allow for higher acceptance later on.

In contrast, the FID scores generally worsen as $\omega$ increases, consistent with accepting more draft proposals that would be rejected by the exact sampler. Despite this, the difference can be minimal when the number of denoising steps increases. A reason for this behavior is that as the number of denoising steps increases, the number of steps at the end of the denoising also increases. But even for a large $\omega > 1$, we will almost always be rejecting all draft sequences at the end of speculative diffusion. Thus, the final sampling from the primary model may allow for some ``correction'' of the increased initial draft sequence acceptance.
Nevertheless, in this specific experiment, it is clear that if one is utilizing a large number of denoising steps ($K=1000$), a higher temperature ($\omega = 2.0$) can achieve similar FID scores while also being almost twice as fast as its no temperature counterpart ($\omega = 1.0$).

\begin{table}[ht]
  \caption{Wall-clock (s) of speculative diffusion ($\varepsilon = 0.25$, $\gamma = 7$) for ImageNet LDM over different temperature values $\omega$.
    Quantities are calculated over 500 samples; the $\pm$ ranges are the empirical \std values.}%
  \label{tab:temperature-walltime}
  \resizebox{\textwidth}{!}{
    \centering
    \begin{tabular}{lccccccccc}
      \toprule
      & \multicolumn{3}{c}{$\omega=0.5$} & \multicolumn{3}{c}{$\omega=1.0$} & \multicolumn{3}{c}{$\omega=2.0$} \\
      \cmidrule(lr){2-4}
      \cmidrule(lr){5-7}
      \cmidrule(lr){8-10}
      Steps & \reflection & \decomposition & \block & \reflection & \decomposition & \block & \reflection & \decomposition & \block \\
      \midrule
      50 & $1.05 \pm 0.06$ & $1.05 \pm 0.06$ & $1.01 \pm 0.06$ & $1.01 \pm 0.05$ & $1.01 \pm 0.05$ & $1.01 \pm 0.05$ & $0.85 \pm 0.05$ & $0.85 \pm 0.05$ & $0.84 \pm 0.05$ \\
      100 & $1.64 \pm 0.08$ & $1.65 \pm 0.08$ & $1.55 \pm 0.08$ & $1.56 \pm 0.07$ & $1.56 \pm 0.07$ & $1.56 \pm 0.07$ & $1.20 \pm 0.06$ & $1.21 \pm 0.07$ & $1.18 \pm 0.06$ \\
      250 & $2.94 \pm 0.13$ & $2.96 \pm 0.13$ & $2.66 \pm 0.13$ & $2.85 \pm 0.10$ & $2.84 \pm 0.10$ & $2.82 \pm 0.10$ & $1.91 \pm 0.09$ & $1.90 \pm 0.09$ & $1.85 \pm 0.08$ \\
      500 & $4.59 \pm 0.17$ & $4.59 \pm 0.19$ & $3.97 \pm 0.18$ & $4.60 \pm 0.12$ & $4.62 \pm 0.13$ & $4.55 \pm 0.13$ & $2.81 \pm 0.10$ & $2.80 \pm 0.11$ & $2.72 \pm 0.10$ \\
      1000 & $7.17 \pm 0.24$ & $7.11 \pm 0.26$ & $6.12 \pm 0.22$ & $7.79 \pm 0.15$ & $7.78 \pm 0.16$ & $7.65 \pm 0.16$ & $4.42 \pm 0.11$ & $4.41 \pm 0.11$ & $4.30 \pm 0.09$ \\
      \bottomrule
    \end{tabular}
  }
\end{table}

\begin{table}[ht]
  \caption{FID 50k of speculative diffusion ($\varepsilon = 0.25$, $\gamma = 7$) for ImageNet LDM over different temperature values $\omega$.}%
  \label{tab:temperature-fid}
  \centering
  {%
    \begin{tabular}{lccccccccc}
      \toprule
      & \multicolumn{3}{c}{$\omega=0.5$} & \multicolumn{3}{c}{$\omega=1.0$} & \multicolumn{3}{c}{$\omega=2.0$} \\
      \cmidrule(lr){2-4}
      \cmidrule(lr){5-7}
      \cmidrule(lr){8-10}
      Steps & \reflection & \decomposition & \block & \reflection & \decomposition & \block & \reflection & \decomposition & \block \\
      \midrule
      50 & $10.54$ & $10.27$ & $10.36$ & $11.20$ & $11.07$ & $11.07$ & $13.60$ & $12.97$ & $13.11$ \\
      100 & $9.79$ & $9.22$ & $9.40$ & $10.67$ & $10.32$ & $10.33$ & $12.89$ & $11.54$ & $11.65$ \\
      250 & $10.07$ & $8.94$ & $9.02$ & $10.62$ & $10.00$ & $10.02$ & $11.74$ & $10.30$ & $10.29$ \\
      500 & $10.48$ & $8.97$ & $9.00$ & $10.69$ & $9.85$ & $9.89$ & $11.11$ & $9.84$ & $9.92$ \\
      1000 & $10.39$ & $9.23$ & $9.17$ & $10.53$ & $9.88$ & $9.84$ & $10.64$ & $9.71$ & $9.78$ \\
      \bottomrule
    \end{tabular}
  }
\end{table}

\end{document}